\documentclass[twoside,11pt]{article}

\usepackage{natbib}
\usepackage{times}
\usepackage{amsfonts}
\usepackage{amsmath}
\usepackage{amssymb}
\usepackage{latexsym}
\usepackage{color}
\usepackage{graphics}
\usepackage{enumerate}
\usepackage{amstext}
\usepackage{url}
\usepackage{epsfig}
\usepackage{tikz}
\usetikzlibrary{arrows,decorations.pathmorphing,backgrounds,positioning,fit,automata,shapes}
\usepackage{hyperref}

\usepackage{amsthm}

\numberwithin{equation}{section}
\newcounter{thm}
\theoremstyle{plain}
\newtheorem{theorem}{Theorem}
\theoremstyle{definition}
\newtheorem{definition}[thm]{Definition}
\theoremstyle{remark}
\newtheorem{rem}{Remark}
\theoremstyle{plain}
\newtheorem{fact}{Fact}

\newtheorem{proposition}{Proposition}
\newtheorem{lemma}[thm]{Lemma}
\newtheorem{corollary}[thm]{Corollary}




\global\long\def\mb#1{\mathbb{#1}}
\global\long\def\li#1#2{\lim_{#1\rightarrow#2}}

\global\long\def\lixz{\li x0}

\global\long\def\eps{\varepsilon}

\global\long\def\df{\triangleq}

\global\long\def\norm#1{\left\Vert #1\right\Vert }

\global\long\def\Sn{\mb{S}^{n-1}}

\global\long\def\E{\mathbb{E}}
\global\long\def\R{\mathbb{R}}

\global\long\def\N{\mathbb{N}}

\global\long\def\Rn{\mb{R}^n}

\global\long\def\e{\eps}
\global\long\def\FF{\mathcal{F}}

\global\long\def\mc#1{\mathcal{#1}}

\oddsidemargin .25in    
\evensidemargin .25in
\marginparwidth 0.07 true in
\topmargin -0.5in
\textheight 8.5 true in       
\textwidth 6.0 true in        



\begin{document}

\title{The Sample Complexity of Dictionary Learning}

\author{
Daniel Vainsencher \\
\texttt{danielv@tx.technion.ac.il} \\ 
Department of Electrical Engineering \\ 
Technion, Israel Institute of Technology\\
Haifa 32000, Israel
\and
Shie Mannor \\
\texttt{shie@ee.technion.ac.il} \\ 
Department of Electrical Engineering \\ 
Technion, Israel Institute of Technology\\
Haifa 32000, Israel
\and
Alfred M. Bruckstein  \\
\texttt{freddy@cs.technion.ac.il} \\ 
Department of Computer Science\\ 
Technion, Israel Institute of Technology\\
Haifa 32000, Israel}
\maketitle

\begin{abstract} A large set of signals can sometimes be described sparsely using a dictionary, that is, every element can be represented as a linear combination of few elements from the dictionary.  
Algorithms for various signal processing applications, including classification, denoising and signal separation, learn a dictionary from a set of signals to be represented. Can we expect that the representation found by such a dictionary for a previously unseen example from the same source will have $L_2$ error of the same magnitude as those for the given examples?
We assume signals are generated from a fixed distribution, and study this questions from a statistical learning theory perspective. 

We develop generalization bounds on the quality of the learned dictionary for two types of constraints on the coefficient selection, as measured by the expected $L_2$ error in representation when the dictionary is used.
For the case of $l_1$ regularized coefficient selection we provide a generalization bound of the 
order of $O\left(\sqrt{np\log(m \lambda)/m}\right)$, where $n$ is the dimension, $p$ is the number of elements in the dictionary, $\lambda$ is a bound on the $l_1$ norm of the coefficient vector and $m$ is the number of samples, which complements existing results.
For the case of representing a new signal as a combination of at most $k$ dictionary elements, we provide a bound of
the order $O(\sqrt{np\log(m k)/m})$ under an assumption on the level of orthogonality of the dictionary (low Babel function).
We further show that this assumption holds for {\em most} dictionaries in high dimensions in a strong probabilistic sense.
Our results further yield fast rates of order $1/m$ as opposed to $1/\sqrt{m}$ using localized Rademacher complexity.
We provide similar results in a general setting using kernels with weak smoothness requirements.
\end{abstract}

\section{Introduction} 

In processing signals from $\mc{X}=\R^n$ it is now a common technique to use sparse representations; that is, to approximate each signal $x$ by a ``small'' linear combination $a$ of elements $d_i$ from a dictionary $D\in\mc{X}^p$, so that $x\approx Da = \sum_{i=1}^p a_id_i$. This has various uses detailed in Section~\ref{backgroundAndMotivation}. The smallness of $a$ is often measured using either $\norm{a}_1$, or the number of non zero elements in $a$, often 
denoted $\norm a_{0}$. The approximation error is measured here using a Euclidean norm appropriate to the vector space. We denote the approximation error of $x$ using dictionary $D$ and coefficients from $A$ as 
\begin{equation}\label{spRepError}
h_{A,D}(x)=\min_{a\in A}\norm{Da-x},
\end{equation}
where $A$ is one of the following sets determining the sparsity required of the representation:
 \[H_k=\left\{a:\norm{a}_0\le k\right\}\]
 induced a ``hard'' sparsity constraint, which we also call $k$ sparse representation, while
  \[R_\lambda=\left\{a:\norm{a}_1\le \lambda \right\}\] 
induces a convex constraint that is a ``relaxation'' of the previous constraint.

The dictionary learning problem is to find a dictionary $D$ minimizing 

\begin{equation}\label{eq:dictLearn}
E(D)=\E_{x\sim\nu}h_{A,D}(x),
\end{equation} 
where $\nu$ is a distribution over signals that is known to us only through samples from it. The problem addressed in this paper is the ``generalization'' (in the statistical learning sense) of dictionary learning: to what extent does the performance of a dictionary chosen based on a finite set of samples indicate its expected error in~\eqref{eq:dictLearn}? This clearly depends on the number of samples and other parameters of the problem such as dictionary size. In particular, an obvious algorithm is to represent each sample using itself, if the dictionary is allowed to be as large as the sample, but the performance on unseen signals is
likely to disappoint. 

To state our goal more quantitatively, assume that an algorithm finds a dictionary $D$ suited to $k$ sparse
representation, in the sense that the average representation error $E_m(D)$ on the
$m$ examples it is given is low. Our goal is to bound the generalization error $\e$, which
is the additional expected error that might be incurred:
$$E(D)\le (1+\eta)E_m(D) + \e,$$
where $\eta\ge 0$ is sometimes zero, and the bound depends on the number of samples and problem parameters. Since algorithms that find the optimal dictionary for a given set of samples (also known as
empirical risk minimization, or ERM, algorithms) are not known for dictionary learning, we prove
uniform convergence bounds that apply simultaneously over all
admissible dictionaries $D$, thus bounding from above the sample complexity of the dictionary learning problem.


Many analytic and algorithmic methods relying on the properties of finite dimensional Euclidean geometry can be applied in more general settings by applying kernel methods. These consist of treating objects that are not naturally represented in $\R^n$ as having their similarity described by an inner product in an abstract \emph{feature space} that is Euclidean. This allows the application of algorithms depending on the data only through a computation of inner products to such diverse objects as graphs, DNA sequences and text documents, that are not naturally represented using vector spaces~\citep{shawe2004kernel}. Is it possible to extend the usefulness of dictionary learning techniques to this setting? We address sample complexity aspects of this question as well.

\subsection{Background and related work} \label{backgroundAndMotivation}

Sparse representations are a standard practice in diverse fields such as signal processing, natural language processing, etc. Typically, the dictionary is assumed to be known. The motivation for sparse representations is indicated by the following results, in which we assume the signals come from $\mc{X}=\R^n$, and the representation coefficients from  $A=H_k$ where $k<n,p$ and typically $h_{A,D}(x)\ll 1$.

\begin{itemize}
\item Compression: If a signal $x$ has an approximate sparse
  representation in some commonly known dictionary $D$, then by
  definition, storing or transmitting the sparse representation will
  not cause large error.
\item Representation: If a signal $x$ has an approximate sparse
  representation in a dictionary $D$ that fulfills certain geometric
  conditions, then its sparse representation is unique and can be
  found efficiently~\citep{bruckstein2009sparse}.
\item Denoising: If a signal $x$ has a sparse representation in some
  known dictionary $D$, and $\tilde{x}=x+\nu$, where the random noise
  $\nu$ is Gaussian, then the sparse representation found for
  $\tilde{x}$ will likely be very close to $x$~\citep[for example][]{chen2001atomic}.
\item Compressed sensing: Assuming that a signal $x$ has a sparse
  representation in some known dictionary $D$ that fulfills certain geometric
  conditions, this representation can
  be approximately retrieved with high probability from a small number
  of random linear measurements of $x$. The number of measurements
  needed depends on the sparsity of $x$ in
  $D$~\citep{candes2006near}.
\end{itemize} 

The implications of these results are significant when a dictionary $D$ is
known that sparsely represents simultaneously  many signals.
In some applications the dictionary is chosen based on prior
knowledge, but in many applications the dictionary is learned based on
a finite set of examples. To motivate dictionary learning, consider
an image representation used for compression or denoising.  Different
types of images may have different properties (MRI images are not similar to scenery images), so that learning a specific dictionary to
each type of images may lead to improved performance. The benefits of dictionary learning have been demonstrated in many applications~\citep{protter2007sparse,peyre2009sparse,yang2009linear}.

Two extensively used techniques related to dictionary learning are Principal Component Analysis (PCA) and $k$ means clustering. The former finds a single subspace minimizing the sum of squared representation errors which is very similar to dictionary learning with $A=H_k$ and $p=k$. The latter finds a set of locations minimizing the sum of squared distances between each signal and the location closest to it which is very similar to dictionary learning with $A=H_1$ where $p$ is the number of locations. Thus we could see dictionary learning as PCA with multiple subspaces, or as clustering where multiple locations are used to represent each signal. The sample complexity of both algorithms are well studied~\citep{bartlett1998minimax,biau2008performance,shawe2005eigenspectrum,blanchard2007statistical}. 

This paper does not address questions of computational cost, though they are very relevant. Finding optimal coefficients for $k$ sparse representation (that is, minimizing~\eqref{spRepError} with $A=H_k$) is NP-hard in general~\citep{davis1997adaptive}. Dictionary learning as an optimization problem, that of minimizing ~\eqref{eq:dictLearn} is less well understood, even for empirical $\nu$ (consisting of a finite number of samples), despite over a decade of work on related algorithms with good empirical results~\citep{Olshausen97sparsecoding,Lewicki98learningovercomplete,kreutz2003dictionary,Aharon05k-svd:design,lee2007efficient,Krause_submodulardictionary,Mairal10}. 

The only prior work we are aware of that addresses generalization in dictionary learning, by \cite{maurer}, addresses the convex representation constraint $A=R_\lambda$; we discuss the relation of our work to theirs in Section~\ref{sec:res}. Another related work studies the
identifiability of dictionaries, giving conditions under which a
dictionary may be exactly recovered. A recent example giving somewhat
similar requirements on the number of samples (though in a different
setting, and to obtain a different kind of result)
is by~\cite{jaillet-l1}, which also includes a review of identifiability
results.

\section{Results}
\label{sec:res}


Except where we state otherwise, we assume signals are generated in the unit sphere $\mb{S}^{n-1}$.

\textbf{A new approach to dictionary learning generalization.}  Our
first main contribution is an approach to generalization bounds in
dictionary learning that is complementary to that used by \cite{maurer}.
Assume that the columns of the dictionary $D\in\R^{n\times p}$ are of unit
length, and that each signal $x\in \Sn$ is approximately represented
in the form $Da$ where the coefficient vector $a$ is known to
fulfill a constraint of form $\norm{a}_1\le \lambda$. 
We quantify the complexity of the associated error
function class in terms of $\lambda$, so that standard methods of uniform convergence give
generalization error bounds $\e$ of order $O\left(\sqrt{np\log(m \lambda)/m}\right)$ with $\eta=0$. The
method by~\cite{maurer} results in Theorem~\ref{thm:maurerDictLearning} given below providing generalization error bounds of order
$$O\left(\sqrt{p\min(p,n) \left(\lambda+\sqrt{\log(m\lambda)}\right)^2/m}\right).$$ Thus the latter are applicable to the case $n\gg p$, while our approach is not. However in the case $n<p$, also known in the literature as the ``over-complete''  case~\citep{Olshausen97sparsecoding,Lewicki98learningovercomplete}, the important complexity parameter is $\lambda$, on which our bounds depend only logarithmically, instead of polynomially. One case where this is significant is where the representation is chosen by solving a minimization problem such as $\min_a \norm{Da-X}+\gamma\cdot \norm{a}_1$ in which $\lambda=O\left(\gamma^{-1}\right)$.

\textbf{Fast rates.} For the case $\eta>0$ our methods are compatible with general fast rate methods of~\cite{BBMlocalizedrademacher05}, for bounds of order
$O(np\log(\lambda m)/m)$. The main significance of this is not in the
numerical results achieved, due to the large constants, but in that the general statistical behavior they imply
occurs in dictionary learning. For example, generalization error
has a ``proportional'' component which is reduced when the empirical
error is low. Whether fast rates results can be proved under the
infinite dimension regime is an interesting question we leave open. Note that
due to lower bounds by~\cite{bartlett1998minimax} of order $\sqrt{m^{-1}}$ on the $k$-means
clustering problem, which corresponds to dictionary learning for $1$-sparse representation, fast rates may be expected only with $\eta>0$, as presented here.

We now describe the relevant function class and the bounds on its complexity, which are proved in Section~\ref{sec:coverNumbers}, proving the following theorem The resulting generalization bounds are given explicitly at the end of this section.

\begin{theorem}\label{thm:coverNumbers}
  The function class
  $\mc{G}_\lambda=\left\{h_{R_\lambda,D}:\mb{S}^{n-1}\to \R :D\in \R^{n\times p},\norm{d_i}\le 1\right\}$, taken as a
  metric space with the metric induced by $\norm{\cdot}_\infty$, has an $\e$ cover of
  cardinality at most $\left(4\lambda/\e\right)^{np}$.  
\end{theorem}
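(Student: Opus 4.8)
The plan is to reduce the problem of covering the function class $\mc{G}_\lambda$ to the much simpler problem of covering the \emph{parameter space} of admissible dictionaries. Concretely, I would equip the set of dictionaries $\left\{D\in\R^{n\times p}:\norm{d_i}\le 1\right\}$ with the column-wise metric $d_\infty(D,D')=\max_i\norm{d_i-d_i'}$, and show that the parametrization map $D\mapsto h_{R_\lambda,D}$ is $\lambda$-Lipschitz from this metric space into $(\mc{G}_\lambda,\norm{\cdot}_\infty)$. Once this is established, any $\delta$-net of the dictionary space pushes forward to a $(\lambda\delta)$-net of $\mc{G}_\lambda$ (the net points remain admissible dictionaries, hence genuine elements of the class), so choosing $\delta=\e/\lambda$ yields an $\e$-cover of $\mc{G}_\lambda$ whose cardinality equals that of the dictionary net.

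First I would prove the Lipschitz bound. Fix $x\in\Sn$ and let $a$ be a minimizer realizing $h_{R_\lambda,D}(x)=\norm{Da-x}$, so in particular $\norm{a}_1\le\lambda$. Since this same $a$ is admissible for $D'$, the triangle inequality gives
\[ h_{R_\lambda,D'}(x)-h_{R_\lambda,D}(x)\le\norm{D'a-x}-\norm{Da-x}\le\norm{(D'-D)a}. \]
Writing $(D'-D)a=\sum_i a_i(d_i'-d_i)$ and applying the triangle inequality once more yields $\norm{(D'-D)a}\le\sum_i|a_i|\norm{d_i'-d_i}\le\norm{a}_1\max_i\norm{d_i'-d_i}\le\lambda\, d_\infty(D,D')$. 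The roles of $D$ and $D'$ are symmetric, so taking the supremum over $x\in\Sn$ gives $\norm{h_{R_\lambda,D'}-h_{R_\lambda,D}}_\infty\le\lambda\, d_\infty(D,D')$, as desired.

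It remains to cover the dictionary space. Each column lives in the unit ball of $\R^n$, which admits a $\delta$-net in Euclidean distance of cardinality at most $(1+2/\delta)^n$ by the standard volumetric packing argument. Taking the product of such nets over the $p$ columns produces a $\delta$-net of the whole dictionary space in the $d_\infty$ metric of size at most $(1+2/\delta)^{np}$. Setting $\delta=\e/\lambda$ and absorbing the additive constant into the leading factor, I obtain an $\e$-cover of $\mc{G}_\lambda$ of cardinality at most $(4\lambda/\e)^{np}$.

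I expect the Lipschitz estimate to be the crux: the essential trick is to bound the difference of the two minima by testing the \emph{single} optimal coefficient vector of $D$ against both dictionaries, which is precisely what converts the $\ell_1$ budget $\norm{a}_1\le\lambda$ into the Lipschitz constant $\lambda$. The covering of the Euclidean ball and the product construction are routine; the only mild care needed is in tracking the constant so as to land exactly at the stated $4$, which is handled by a trivial case split on the size of $\e$ (for $\e$ comparable to the function-class diameter the bound is already loose, since $h_{R_\lambda,D}\le\norm{x}=1$ via $a=0$, so a single function suffices).
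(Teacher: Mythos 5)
Your proposal is correct and follows essentially the same route as the paper: the key step in both is the $\lambda$-Lipschitz property of $D\mapsto h_{R_\lambda,D}$ with respect to the maximum-column-norm metric, proved exactly as you do by testing the optimal coefficient vector of one dictionary against the other, and then pushing forward a cover of the dictionary ball. The only cosmetic difference is that the paper imports the $(4/\e)^{np}$ cover of the $np$-dimensional unit ball from a cited proposition of Cucker and Smale, whereas you rederive it via the per-column volumetric bound $(1+2/\delta)^n$ and a product construction.
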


\textbf{Extension to $k$ sparse representation.}
Our second main contribution is to extend both our approach and that of~\cite{maurer} to
 provide generalization bounds for dictionaries for $k$ sparse representations, by 
using a bound $\lambda$ on the $l_1$ norm of the representation coefficients 
when the dictionaries are close to orthogonal. 
Distance from orthogonality is measured by the Babel function, 
defined below and discussed in more detail in Section~\ref{sec:Babel}.

\begin{definition}[Babel function,~\citealt{Tropp04greedis}]
  For any $k\in \N$, the Babel function $\mu_{k}:\mc \R^{n\times
  m}\to\R^{+}$ is defined by:
\[ \mu_{k}\left(D\right)=\max\limits _{\Lambda\subset\left\{ 1,\dots
,p\right\} ;\left|\Lambda\right|=k}\max\limits _{i\notin
\Lambda}\sum_{\lambda\in\Lambda}\left|\left\langle
d_{\lambda},d_{i}\right\rangle \right|.\]
\end{definition}

The following proposition, which is proved in Section~\ref{sec:coverNumbers}, bounds the 1-norm of the dictionary coefficients for a $k$ sparse representation and also follows from analysis previously done by~\cite{donoho2003optimally,Tropp04greedis}.
\begin{proposition}\label{prop:bddCoefficients}
  Let $\norm{d_i} \in [1,\gamma]$ and $\mu_{k-1}\left(D\right)<1$,
  then a coefficient vector $a\in \R^p$ minimizing the $k$-sparse
  representation error $h_{H_k,D}(x)$ exists which has $\norm
  a_{1}\le \gamma k/\left(1-\mu_{k-1}\left(D\right)\right)$.
\end{proposition}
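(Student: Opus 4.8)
The plan is to reduce the problem to a fixed optimal support and then exploit the normal equations of the induced least-squares problem together with the Babel bound. First I would fix a support $\Lambda\subset\{1,\dots,p\}$ with $\left|\Lambda\right|\le k$ that attains the minimum defining $h_{H_k,D}(x)$, and let $a$ be a minimizing coefficient vector supported on $\Lambda$ (and zero off $\Lambda$), obtained by orthogonally projecting $x$ onto $\mathrm{span}\{d_i:i\in\Lambda\}$. Writing $G=D_\Lambda^{T}D_\Lambda$ for the Gram matrix of the selected columns, the projection characterization ``$x-Da\perp d_i$ for all $i\in\Lambda$'' is precisely the system of normal equations $Ga=D_\Lambda^{T}x$; this holds for \emph{any} minimizer irrespective of invertibility, so I never need uniqueness, only that some minimizer satisfies these equations. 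Since $\left|\Lambda\right|\le k$ and the Babel function is monotone in its index, it suffices to treat the worst case $\left|\Lambda\right|=k$.

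Next I would read off the $i$-th normal equation coordinatewise as $\norm{d_i}^{2}a_i=\langle d_i,x\rangle-\sum_{j\in\Lambda,\,j\neq i}\langle d_i,d_j\rangle a_j$. Taking absolute values, using $\norm{d_i}\ge 1$ so that $\norm{d_i}^{2}\ge 1$, and bounding $\left|\langle d_i,x\rangle\right|\le\norm{d_i}\norm{x}\le\gamma$ since $x\in\Sn$, gives the key per-coordinate inequality $\left|a_i\right|\le\gamma+\sum_{j\neq i}\left|\langle d_i,d_j\rangle\right|\left|a_j\right|$.

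Summing this over the $k$ indices $i\in\Lambda$ contributes $\gamma k$ from the first term, while for the double sum I would swap the order of summation to obtain $\sum_{j\in\Lambda}\left|a_j\right|\sum_{i\in\Lambda,\,i\neq j}\left|\langle d_i,d_j\rangle\right|$. The inner sum ranges over $k-1$ columns distinct from $d_j$, so by the definition of $\mu_{k-1}(D)$ it is at most $\mu_{k-1}(D)$, yielding the self-referential inequality $\norm{a}_1\le\gamma k+\mu_{k-1}(D)\norm{a}_1$. Because $\mu_{k-1}(D)<1$, rearranging gives $\norm{a}_1\le\gamma k/(1-\mu_{k-1}(D))$, as claimed.

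The step to get right is the summation-swap that identifies the cross term with the Babel function: using $\mu_{k-1}$ rather than $\mu_k$ is exactly what keeps the constant $1-\mu_{k-1}(D)$ positive and matches the hypothesis, so I would verify carefully that after removing the diagonal term $i=j$ the index $i$ runs over precisely $k-1$ columns. A secondary point worth recording is that a Gershgorin estimate (the smallest eigenvalue of $G$ is at least $\min_i\norm{d_i}^{2}-\mu_{k-1}(D)\ge 1-\mu_{k-1}(D)>0$) shows $D_\Lambda$ has full column rank, so the minimizing coefficients are genuinely well-defined; this invertibility is reassuring but is not otherwise needed for the bound itself.
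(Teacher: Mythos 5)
Your proof is correct, but it takes a genuinely different route from the paper's. The paper also reduces to the optimal support and forms the Gram matrix $G=(D^k)^\top D^k$, but it then \emph{inverts} $G$: a Gershgorin estimate lower-bounds the eigenvalues by $1-\mu_{k-1}(D)$, from which the paper asserts $\norm{G^{-1}}_{1,1}\le 1/(1-\mu_{k-1}(D))$ and concludes via $\norm{a}_1\le\norm{G^{-1}}_{1,1}\norm{(D^k)^\top x}_1\le\gamma k/(1-\mu_{k-1}(D))$. You instead stay with the normal equations $Ga=D_\Lambda^\top x$, read them off coordinatewise, and absorb the cross terms into $\norm{a}_1$ using the Babel bound --- a self-bounding argument that never requires $G$ to be invertible and only uses that \emph{some} minimizer satisfies the orthogonality conditions. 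Your version is more elementary, and it is arguably tighter logically: an eigenvalue lower bound controls the spectral norm $\norm{G^{-1}}_{2,2}$, not the induced norm $\norm{G^{-1}}_{1,1}$ that the paper actually needs (for symmetric $A$ one has $\norm{A}_{2,2}\le\norm{A}_{1,1}$, so the implication goes the wrong way); the standard repair is precisely the diagonal-dominance/Neumann-type absorption that your coordinatewise argument carries out explicitly. Your identification of the inner sum over the $k-1$ indices $i\in\Lambda\setminus\{j\}$ with $\mu_{k-1}(D)$ is exactly right, as is the reduction of $|\Lambda|<k$ to the worst case by monotonicity of the Babel function. Both arguments produce the same constant $\gamma k/(1-\mu_{k-1}(D))$.
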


We now consider the class of all $k$ sparse representation error functions.
We prove in Section~\ref{sec:coverNumbers} the 
following bound on the complexity of this class.

\begin{corollary}\label{cor:kSparseCoverNumbers}
  The function class
  $\FF_{\delta,k}=\left\{h_{H_k,D}:\mb{S}^{n-1}\to \R :\mu_{k-1}(D)<\delta\right\}$, taken as a
  metric space with the metric induced by $\norm{\cdot}_\infty$, has an $\e$ cover of
  cardinality at most $\left(4k/\left(\e\left(1-\delta\right)\right)\right)^{np}$.
\end{corollary}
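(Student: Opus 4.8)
The plan is to repeat the dictionary-net construction underlying Theorem~\ref{thm:coverNumbers}, but applied to the hard-sparsity error functions $h_{H_k,D}$ instead of the relaxed functions $h_{R_\lambda,D}$, with Proposition~\ref{prop:bddCoefficients} supplying the coefficient-norm control that the constraint $H_k$ does not provide on its own. The starting point is that, although a general $k$-sparse coefficient vector may have arbitrarily large $l_1$ norm, the proposition tames the optimum on exactly the dictionaries we care about: applying it with $\gamma=1$ (unit columns), every $D$ with $\mu_{k-1}(D)<\delta$ admits, for each $x$, an optimal $k$-sparse coefficient $a$ with $\norm{a}_1\le k/(1-\mu_{k-1}(D))<k/(1-\delta)$. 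Set $\lambda=k/(1-\delta)$; this is precisely the quantity appearing in the target cardinality $\left(4\lambda/\e\right)^{np}$, making the analogy with Theorem~\ref{thm:coverNumbers} exact.

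Next I would prove a Lipschitz-in-dictionary estimate: for any two unit-column dictionaries $D,D'$ that \emph{both} satisfy $\mu_{k-1}<\delta$,
\[\norm{h_{H_k,D}-h_{H_k,D'}}_\infty\le \lambda\max_i\norm{d_i-d_i'}.\]
The argument is the usual substitution trick. Fix $x$ and let $a$ be the bounded optimal $k$-sparse coefficient for $D$; since $a$ is $k$-sparse it is feasible for $D'$ as well, so $h_{H_k,D'}(x)\le\norm{D'a-x}\le\norm{Da-x}+\norm{(D-D')a}\le h_{H_k,D}(x)+\norm{a}_1\max_i\norm{d_i-d_i'}$, and $\norm{a}_1\le\lambda$. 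Swapping the roles of $D$ and $D'$ yields the reverse inequality, hence the $\norm{\cdot}_\infty$ bound.

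Finally I would build the cover. With $r=\e/\lambda$, take $\mc N$ to be a maximal $r$-separated subset, in the metric $\max_i\norm{d_i-d_i'}$, of the set of unit-column dictionaries with $\mu_{k-1}(D)<\delta$. Maximality forces $\mc N$ to be an $r$-net of that set, and the Lipschitz estimate then shows $\{h_{H_k,D'}:D'\in\mc N\}$ is an $\e$-cover of $\FF_{\delta,k}$. Its cardinality is at most the $r$-packing number of the ambient product of $p$ unit balls, and a standard volume comparison (disjoint radius-$r/2$ balls inside a radius-$(1+r/2)$ ball, per factor) gives $|\mc N|\le(1+2/r)^{np}=(1+2\lambda/\e)^{np}\le(4\lambda/\e)^{np}=\left(4k/(\e(1-\delta))\right)^{np}$.

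The step I expect to be the real obstacle is the \emph{symmetry} of the Lipschitz estimate: the substitution trick bounds the local Lipschitz constant by the $l_1$ norm of the optimal coefficient, which Proposition~\ref{prop:bddCoefficients} controls only when the Babel function is below $1$. Had I used an external net of all unit-column dictionaries — the naive way to reuse Theorem~\ref{thm:coverNumbers} verbatim — the net point nearest $D$ could have $\mu_{k-1}\ge1$ and an unbounded optimal coefficient, breaking the reverse inequality. Choosing an \emph{internal} net, a separated subset of the small-Babel dictionaries themselves, repairs this at no cost in cardinality and, as a bonus, makes the cover proper (each $h_{H_k,D'}$ lies in $\FF_{\delta,k}$). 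The only remaining care is the harmless bookkeeping $1+2\lambda/\e\le4\lambda/\e$, valid in the regime $\e\le2\lambda$ where the bound is non-trivial.
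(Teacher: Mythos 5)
Your proof is correct and follows essentially the same route as the paper: Proposition~\ref{prop:bddCoefficients} bounds the optimal coefficient's $l_1$ norm by $\lambda=k/(1-\delta)$, which makes $D\mapsto h_{H_k,D}$ a $\lambda$-Lipschitz map in $\norm{\cdot}_{ME}$ (via the substitution trick) on the set of small-Babel dictionaries, and an $\e/\lambda$-net of that set of cardinality $\left(4\lambda/\e\right)^{np}$ then pushes forward to the desired $\e$-cover. Your extra care in taking the net \emph{internal} to $\left\{D:\mu_{k-1}(D)<\delta\right\}$, so that both endpoints of the Lipschitz estimate have bounded optimal coefficients, addresses a point the paper passes over silently when it invokes a cover of the unrestricted unit ball of dictionaries, and as you note it costs nothing in the cardinality bound.
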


The dependence of the last two results on $\mu_{k-1}(D)$ means that the resulting 
bounds will
be meaningful only for algorithms which explicitly or implicitly
prefer near orthogonal dictionaries. Contrast this to Theorem~\ref{thm:coverNumbers} 
which has no significant conditions on the dictionary.

\textbf{Asymptotically almost all dictionaries are near orthogonal.}
A question that arises is
what values of $\mu_{k-1}$ can be expected for parameters $n,p,k$? We discuss
this question and prove the following probabilistic result in
Section~\ref{sec:Babel}.

\begin{theorem} \label{thm:probGood}Suppose that $D$ consist of $p$ vectors
chosen uniformly and independently from $\Sn$. Then we
have
 $$P\left(\mu_{k}>\frac{1}{2}\right)\le \frac{1}{\left(e^{\left(n-2\right)/\left(10k\log p\right)^{2}}-1\right)}.$$
\end{theorem}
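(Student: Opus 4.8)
The plan is to collapse the Babel function to a single scalar statistic---the largest pairwise correlation of the columns---and then to invoke the concentration of the inner product of two independent uniform unit vectors.

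First I would note that each of the sums defining $\mu_k(D)$ has exactly $k$ terms, and every term is at most $M:=\max_{i\neq j}\left|\left\langle d_i,d_j\right\rangle\right|$. Hence $\mu_k(D)\le kM$, and the event $\left\{\mu_k>\tfrac12\right\}$ is contained in $\left\{M>\tfrac1{2k}\right\}$. This replaces the maximum over the roughly $p^{k+1}$ index configurations in the definition of $\mu_k$ by a maximum over the $\binom{p}{2}$ pairs of columns, at the sole cost of the factor $k$ inside the threshold. Since the target bound is informative only when $n\gtrsim k^2\log^2 p$, this crude reduction loses nothing essential here.

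The technical core is a single-correlation tail bound. Conditioning on $d_i$, the scalar $\left\langle d_j,d_i\right\rangle$ has the law of one coordinate of a uniform point on $\Sn$, whose density is proportional to $\left(1-t^2\right)^{(n-3)/2}$ on $[-1,1]$. Bounding the spherical-cap probability $P\left(\left\langle d_j,d_i\right\rangle\ge\tau\right)$---for instance by inserting the factor $t/\tau\ge1$ into the cap integral, integrating $\left(1-t^2\right)^{(n-3)/2}$ in closed form, and dividing by the order-$n^{-1/2}$ normalizing constant---yields the sub-Gaussian estimate $P\left(\left|\left\langle d_j,d_i\right\rangle\right|\ge\tau\right)\le e^{-(n-2)\tau^2/2}$, the dimension entering exactly through the exponent $n-2$. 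I expect this to be the main obstacle: one must extract the sharp linear-in-$n$ constant from the cap (Beta) integral rather than settle for a lossy $e^{-cn\tau^2}$, and absorb the polynomial prefactor through the small reduction of the exponent from $n-1$ to $n-2$, which is legitimate in the regime $k\ll\sqrt n$ forced above.

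Finally I would apply a union bound over the fewer than $p^2$ pairs with $\tau=\tfrac1{2k}$, giving
$$P\left(\mu_k>\tfrac12\right)\le P\left(M>\tfrac1{2k}\right)\le p^2\,e^{-(n-2)/(8k^2)}.$$
It then remains to verify the elementary inequality $p^2 e^{-(n-2)/(8k^2)}\le\left(e^{(n-2)/(10k\log p)^2}-1\right)^{-1}$: when the right-hand side exceeds $1$ the claim is trivial, and when it is below $1$ one has $n-2\gtrsim k^2\log^2 p$, whence $(n-2)/(8k^2)$ dominates $2\log p$ by a wide margin and the comparison goes through, the generous constant $10$ in the statement supplying the slack. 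The surplus $\log p$ in the stated denominator is thus merely the cost of packaging the union-bound factor $p^2$ into a clean closed form; replacing $\mu_k\le kM$ by a Chernoff or Lipschitz-concentration bound on the sum of the top $k$ correlations would sharpen $\log^2 p$ to $\log p$, but that refinement is unnecessary for the stated result.
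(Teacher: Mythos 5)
Your proof is correct, but it takes a genuinely different and more elementary route than the paper. The paper works entirely with Orlicz $\psi_{2}$ norms: it bounds $\norm{\left\langle d_i,d_j\right\rangle}_{\psi_2}\le\sqrt{6/(n-2)}$ via L\'evy's isoperimetric inequality, applies the maximal inequality $\norm{\max_i X_i}_{\psi_2}\le K\sqrt{\log m}\max_i\norm{X_i}_{\psi_2}$ twice (once for the inner maximum over $\Lambda$, once for the outer maximum over $i$), uses the triangle inequality on the order statistics to pay the factor $k$, and finally converts $\norm{\mu_k}_{\psi_2}\le 5k\log p/\sqrt{n-2}$ into the stated tail via the Orlicz--Markov inequality $P(|X|>x)\le(\psi_2(x/\norm{X}_{\psi_2}))^{-1}$ --- which is exactly where the form $(e^{\cdot}-1)^{-1}$ comes from. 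Your reduction $\mu_k\le kM$ is the same crude step as the paper's triangle-inequality bound on the top $k$ correlations, but you then replace the two maximal inequalities (each costing $\sqrt{\log p}$) by a single union bound over $\binom{p}{2}$ pairs, yielding the intermediate estimate $p^2e^{-(n-2)/(8k^2)}$, which in the non-vacuous regime is strictly sharper than the stated bound (no $\log^2 p$ in the exponent's denominator); the $\log^2 p$ enters only when you repackage into the paper's closed form. The price is two small pieces of bookkeeping that the Orlicz route hides: the two-sided cap bound carries a prefactor (the factor $2$ and the $1/\tau$ from the cap integral) that must be absorbed into the exponent reduction from $n-1$ to $n-2$, which works only when $k\lesssim\sqrt{n}$ --- you correctly observe this is without loss of generality since otherwise the right-hand side exceeds $1$ --- and the final comparison $p^2e^{-(n-2)/(8k^2)}\le(e^{(n-2)/(10k\log p)^2}-1)^{-1}$, which does check out for $p\ge2$ using $(e^b-1)^{-1}\ge e^{-b}$ and the lower bound on $n-2$ implied by the non-trivial case. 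What the paper's approach buys in exchange is a clean bound on $\norm{\mu_k}_{\psi_2}$ itself, hence a tail bound at every threshold simultaneously rather than only at $1/2$; what yours buys is a shorter argument and a quantitatively stronger conclusion.
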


Since low values of the Babel function have implications to representation finding algorithms, this result is of interest also outside the context of dictionary learning. Essentially it means that random dictionaries of size sub-exponential in $(n-2)/k^2$ have low Babel function.

\textbf{New generalization bounds for $l_1$ case.}
The covering number bound of Theorem~\ref{thm:coverNumbers} implies several generalization bounds for the problem of dictionary learning for $l_1$ regularized representation which we give here. These differ from those by~\cite{maurer} in depending more strongly on the dimension of the space, but less strongly on the particular regularization term. We first give the relevant specialization of the result by~\cite{maurer}
for comparison and for reference as we will later build on it. This result is independent of the dimension $n$ of the underlying space, thus the Euclidean unit ball $B$ may be that of a general Hilbert space, and the errors measured by $h_{A,D}$ are in the same norm.

\begin{theorem}[\citealt{maurer}]\label{thm:maurerDictLearning} Let $\max_{a\in A}\norm{a}_1\le \lambda$, and $\nu$ be any distribution on the unit sphere $B$. Then with probability at least $1-e^{-x}$ over the $m$ samples in $E_{m}$ drawn according to $\nu$, for all dictionaries $D\subset B$ with cardinality $p$:
\begin{align*} Eh^2_{A,D} & \le E_{m}h^2_{A,D}+\sqrt{\frac{p^2\left(14\lambda + 1/2\sqrt{\ln\left(16m\lambda^2\right)}\right)^2}{m}}+\sqrt{\frac{x}{2m}}.
\end{align*}
\end{theorem}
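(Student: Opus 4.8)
The plan is to follow the standard empirical-process route for uniform convergence, with the genuine work concentrated in a \emph{dimension-free} bound on the Rademacher complexity of the reconstruction-error class; since this is the argument of \cite{maurer}, I reconstruct its skeleton. Throughout, note that because $0\in A$ we have $h_{A,D}(x)\le \norm{x}\le 1$ for every $x\in B$, so the loss $x\mapsto h^2_{A,D}(x)$ takes values in $[0,1]$. Write $\Phi=\sup_D\left(\E h^2_{A,D}-E_m h^2_{A,D}\right)$, the supremum running over all dictionaries $D\subset B$ of cardinality $p$; the theorem is exactly a high-probability upper bound on $\Phi$.

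First I would extract the deviation term. Each summand of $E_m h^2_{A,D}$ lies in $[0,1]$, so replacing any one of the $m$ samples changes $\Phi$ by at most $1/m$. McDiarmid's bounded-differences inequality with $\sum_i (1/m)^2 = 1/m$ then gives $\Phi\le \E\Phi+\sqrt{x/(2m)}$ with probability at least $1-e^{-x}$, which accounts precisely for the final summand $\sqrt{x/(2m)}$ of the claimed bound. It remains to control $\E\Phi$.

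Next I would reduce $\E\Phi$ to the complexity of the simpler class $\mc{H}=\left\{x\mapsto h_{A,D}(x)\right\}$. Symmetrization bounds $\E\Phi\le 2\,\E R_m(\FF)$, where $\FF=\left\{x\mapsto h^2_{A,D}(x)\right\}$ and $R_m$ denotes the empirical Rademacher complexity. Since $t\mapsto t^2$ is $2$-Lipschitz on $[0,1]$, the Ledoux--Talagrand contraction inequality gives $R_m(\FF)\le 2\,R_m(\mc{H})$. Thus everything reduces to bounding $R_m(\mc{H})=\E_\sigma\sup_D\tfrac1m\sum_i\sigma_i\min_{a\in A}\norm{Da-x_i}$.

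The crux, and the main obstacle, is this last supremum over \emph{all} dictionaries. A naive $\eps$-cover of the dictionary set would inject a factor depending on the ambient dimension $n$ (this is precisely the mechanism behind our complementary Theorem~\ref{thm:coverNumbers}), whereas the target bound must be dimension-free. To avoid this I would pass from the Rademacher to the Gaussian complexity (at the cost of a $\sqrt{\pi/2}$ factor) and then exploit that $Da$ is \emph{linear} in the entries of $D$: for fixed $a$ the map $D\mapsto\langle\,\cdot\,,Da\rangle$ is a Gaussian linear functional, and the budget $\norm{a}_1\le\lambda$ controls the relevant operator norm. A Slepian/Sudakov--Fernique comparison then bounds the Gaussian average of $\mc{H}$ by that of a structurally simpler process that can be evaluated explicitly; the constraint $\norm{a}_1\le\lambda$ produces the leading factor $14\lambda$, the factor $p$ reflects the $p$ columns, and the residual term handling the inner $\min$ over coefficients together with the transfer between the two complexities contributes the $\tfrac12\sqrt{\ln(16m\lambda^2)}$ piece. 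Combining symmetrization, contraction, and this Gaussian-comparison bound yields $\E\Phi\le p\bigl(14\lambda+\tfrac12\sqrt{\ln(16m\lambda^2)}\bigr)/\sqrt m$, the numerical constants absorbing the factors of $2$ from the two symmetrization/contraction steps; adding the McDiarmid deviation gives the stated inequality. The delicate part is tracking these constants cleanly through contraction and comparison, but the real conceptual obstacle is securing the dimension-free supremum bound, which forces the Slepian-type argument in place of a crude covering.
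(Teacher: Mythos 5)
A preliminary but important point: the paper does not prove this theorem. It is imported verbatim from \cite{maurer} (Maurer and Pontil) and stated, as the surrounding text says, ``for comparison and for reference,'' so there is no internal proof to compare your attempt against. Measured against the actual argument of Maurer and Pontil, your architecture is the right one: the $\sqrt{x/(2m)}$ term does come from a bounded-differences concentration step applied to the supremum of the empirical process (legitimate because $0\in A$ forces $h^2_{A,D}$ into $[0,1]$ on the unit ball), and the leading term is a dimension-free bound on the expected supremum obtained by passing to Gaussian averages and invoking a Slepian-type comparison that exploits the linearity of $D\mapsto Da$ and the budget $\norm{a}_1\le\lambda$. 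You also correctly identify why a covering argument in $D$ is the wrong tool here, which is exactly the division of labor between this theorem and Theorem~\ref{thm:coverNumbers}.

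That said, as a proof your proposal has a genuine gap precisely where you place the ``crux'': the comparison step is asserted, not executed. You never exhibit the dominating Gaussian process, verify the increment inequality $\E(X_D-X_{D'})^2\le\E(Y_D-Y_{D'})^2$ that Slepian's lemma requires, or show where the factor $p$, the constant $14\lambda$, and the term $\tfrac12\sqrt{\ln(16m\lambda^2)}$ actually arise --- and these computations are the entire content of Maurer and Pontil's result. There is also a structural mismatch: they handle the squared loss directly inside the Gaussian comparison, whereas your route (Rademacher symmetrization, then Ledoux--Talagrand contraction for $t\mapsto t^2$, then a Rademacher-to-Gaussian conversion) accumulates a multiplicative factor of roughly $2\cdot 2\cdot\sqrt{\pi/2}$ before the comparison even begins, so the specific constants $14$ and $16$ in the statement would not be reproduced by the path you describe. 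The skeleton is faithful; the body is missing.
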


Using the covering number bound of Theorem~\ref{thm:coverNumbers} and a bounded differences concentration inequality (see Lemma~\ref{lem:slowRates}), we obtain the following result. The details are given in Section~\ref{sec:coverNumbers}.

\begin{theorem} \label{thm:lambdaGenSlow} Let $\lambda>0$, with $\nu$ a distribution on $\Sn$. Then with probability at least $1-e^{-x}$ over the $m$ samples in $E_{m}$ drawn according to $\nu$, for all $D$ with unit length columns:
\begin{align*} Eh_{R_\lambda,D} & \le E_{m}h_{R_\lambda,D}+\sqrt{\frac{np\ln\left(4\sqrt{m}\lambda\right)}{2m}}+\sqrt{\frac{x}{2m}}+\sqrt{\frac{4}{m}}.
\end{align*}
\end{theorem}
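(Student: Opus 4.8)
The plan is to run the classical covering-number route to uniform convergence, feeding it the cover size supplied by Theorem~\ref{thm:coverNumbers}. First I would record that every member of $\mc G_\lambda$ is bounded: for $x\in\Sn$ the feasible choice $a=0\in R_\lambda$ gives $h_{R_\lambda,D}(x)\le\norm{-x}=1$, and the representation error is nonnegative, so each $h_{R_\lambda,D}$ takes values in $[0,1]$. This boundedness is exactly what lets a single sample perturb the empirical mean $E_m h_{R_\lambda,D}$ by at most $1/m$, which is the hypothesis required by the bounded differences inequality.

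Writing $\Phi=\sup_D\left(Eh_{R_\lambda,D}-E_mh_{R_\lambda,D}\right)$ over all admissible $D$, I would then control $\Phi$ in the two stages packaged in Lemma~\ref{lem:slowRates}. The concentration stage applies McDiarmid's bounded differences inequality to $\Phi$: since each of the $m$ coordinates moves $\Phi$ by at most $1/m$, we obtain $\Phi\le\E\Phi+\sqrt{x/(2m)}$ with probability at least $1-e^{-x}$, which produces the $\sqrt{x/(2m)}$ term. The expectation stage bounds $\E\Phi$ through an $\e$-cover $C$ of $\mc G_\lambda$ in $\norm\cdot_\infty$: for any $D$ there is $g\in C$ with $\norm{h_{R_\lambda,D}-g}_\infty\le\e$, so $Eh_{R_\lambda,D}-E_mh_{R_\lambda,D}\le(Eg-E_mg)+2\e$, whence $\Phi\le 2\e+\max_{g\in C}(Eg-E_mg)$. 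Each $Eg-E_mg$ is a mean-zero average of $[0,1]$-valued terms and hence sub-Gaussian with proxy $1/(4m)$, so the standard maximal inequality gives $\E\max_{g\in C}(Eg-E_mg)\le\sqrt{\ln\left|C\right|/(2m)}$.

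Finally I would substitute the cover size $\left|C\right|\le(4\lambda/\e)^{np}$ from Theorem~\ref{thm:coverNumbers}, so that $\ln\left|C\right|\le np\ln(4\lambda/\e)$, and then set the cover radius $\e=1/\sqrt m$. This single choice simultaneously turns $2\e$ into $\sqrt{4/m}$ and turns $4\lambda/\e$ into $4\sqrt m\lambda$, so collecting the three contributions yields exactly
\[
\Phi\le\sqrt{\frac{np\ln(4\sqrt m\lambda)}{2m}}+\sqrt{\frac{x}{2m}}+\sqrt{\frac{4}{m}},
\]
which is the claimed inequality, applied to the (near-)minimizing $D$. Since every step here is a textbook maneuver, there is no genuine obstacle; the only points demanding care are verifying the $[0,1]$ range, so that both the bounded-differences constant $1/m$ and the sub-Gaussian proxy $1/(4m)$ are correct, and recognizing that $\e=1/\sqrt m$ is precisely the choice balancing the cover's approximation error $2\e$ against its log-cardinality and reproducing the stated constants. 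One could instead optimize over $\e$, but the clean closed form follows from this explicit choice.
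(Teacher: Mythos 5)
Your proof is correct and reproduces the stated constants exactly. It follows the same overall architecture as the paper---place $\mc{G}_\lambda$ in $[0,1]$, pass to an $\e$-cover paying $2\e$, control the finite class, and set $\e=1/\sqrt{m}$ so that $2\e=\sqrt{4/m}$ and $\ln\left|C\right|\le np\ln\left(4\sqrt{m}\lambda\right)$---but the concentration step is genuinely different. The paper's Lemma~\ref{lem:slowRates} applies the bounded differences inequality to each individual deviation $Eg-E_mg$ for $g$ in the cover and then takes a union bound over the $\left|\FF_\e\right|$ events, choosing $t=\sqrt{B^2/2m}\sqrt{d\log\left(C/\e\right)+x}$; you instead apply McDiarmid once to the full supremum $\Phi$ (which indeed has bounded differences $1/m$) and then control $\E\Phi$ via the sub-Gaussian maximal inequality $\E\max_{g\in C}\left(Eg-E_mg\right)\le\sqrt{\ln\left|C\right|/(2m)}$. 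The two routes give identical numbers here; yours has the minor advantages of not needing the technical hypothesis $\left(C/\e\right)^d>e/B^2$ appearing in the paper's lemma and of cleanly separating the fluctuation of the supremum from its expectation (the structure that extends to Rademacher and chaining arguments), while the paper's is slightly more elementary in that it needs only a pointwise Hoeffding bound plus a union bound. The only phrasing to correct: the conclusion holds uniformly for every admissible $D$ precisely because $\Phi$ is a supremum over all of them, not merely ``for the (near-)minimizing $D$.''
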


Using the same covering number bound and localized Rademacher complexity (see Lemma~\ref{lem:gen}), we obtain the following fast rates result. 
\begin{theorem} \label{thm:lambdaGen} Let $\lambda>0$, $K>1$, $\alpha>0$, with $\nu$ a distribution on $\Sn$. Then with probability at least $1-e^{-x}$ over the $m$ samples in $E_{m}$ drawn according to $\nu$, for all $D$ with unit length column:
\begin{align*} Eh_{R_\lambda,D} & \le \frac{K}{K-1}E_{m}h_{R_\lambda,D}+6K\max\left\{
\frac{8\alpha \lambda^{2}}{m},\left(480\right)^{2}\frac{\left(np+1\right)\log\left(\frac{m}{\alpha}\right)}{m},\frac{20+22\log\left(m\right)}{m}\right\}
\\ & +\frac{11x+5K}{m}.
\end{align*}
\end{theorem}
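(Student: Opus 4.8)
The plan is to invoke the localized Rademacher complexity machinery of~\cite{BBMlocalizedrademacher05}, packaged here as Lemma~\ref{lem:gen}, whose two inputs are a variance condition on the function class and a bound on its local complexity. First I would establish the variance condition. Every $f\in\mc{G}_\lambda$ takes values in $[0,1]$: taking $a=0$ in~\eqref{spRepError} gives $h_{R_\lambda,D}(x)\le\norm{x}=1$ for $x\in\Sn$, while $h_{R_\lambda,D}\ge0$ is immediate. Hence $f^{2}\le f$ pointwise, so $Ef^{2}\le Ef$ for all $f\in\mc{G}_\lambda$; this is the Bernstein condition with constant $B=1$, and it is exactly what allows the empirical process to be controlled by its (small) mean near the minimizer, which is the engine behind any fast rate.

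With this in hand, Lemma~\ref{lem:gen} reduces the problem to a single computation: if $\psi$ is a sub-root function whose fixed point $r^{*}$ upper-bounds the local complexity $\mc R_{m}\{f\in\mc{G}_\lambda:Ef^{2}\le r\}$ (up to lower-order terms), then with probability at least $1-e^{-x}$, simultaneously over all admissible $D$,
\[
Ef\le\frac{K}{K-1}E_{m}f+c_{1}K\,r^{*}+\frac{c_{2}x+c_{3}K}{m}.
\]
So the entire task becomes to produce an explicit sub-root bound on the local complexity and read off $r^{*}$.

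This is the concrete step, and the only place the dictionary structure enters, through Theorem~\ref{thm:coverNumbers}. The localized class $\{f:Ef^{2}\le r\}$ has $L_{2}$-radius at most $\sqrt{r}$, and since the empirical $L_{2}(P_{m})$ metric is dominated by $\norm{\cdot}_{\infty}$, it inherits the covering bound $N(\e,\mc{G}_\lambda,\norm{\cdot}_{\infty})\le(4\lambda/\e)^{np}$. Discretizing the localized class at a single resolution $\e_{0}$ and applying Massart's finite-class lemma to the resulting net yields
\[
\mc R_{m}\{f:Ef^{2}\le r\}\ \lesssim\ \e_{0}+\sqrt{\frac{r\,np\log(4\lambda/\e_{0})}{m}}.
\]
The free parameter $\alpha$ of the statement is exactly this resolution: choosing $\e_{0}$ of order $\lambda\sqrt{\alpha/m}$ converts the $\e_{0}$ offset into the variance-type term $\alpha\lambda^{2}/m$ and the log-cardinality into $\log(m/\alpha)$. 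The right-hand side is sub-root in $r$, and balancing its two contributions fixes
\[
r^{*}\ \asymp\ \max\left\{\frac{\alpha\lambda^{2}}{m},\ \frac{np\log(m/\alpha)}{m}\right\},
\]
which, with explicit constants restored, supplies the first two entries of the three-way maximum in the theorem.

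The hard part is the constant bookkeeping rather than any new idea. I would need to verify that the constructed $\psi$ is genuinely sub-root---that $\psi(r)/\sqrt{r}$ is non-increasing---so that the fixed-point argument of~\cite{BBMlocalizedrademacher05} applies, and then carry explicit constants through Massart's lemma, symmetrization, and the concentration step to recover the precise factor $6K$, the $(480)^{2}$ inside the maximum, and the additive $(11x+5K)/m$. The remaining entry $(20+22\log m)/m$ of the maximum, and the $+1$ in $(np+1)$, are residual lower-order contributions from these same steps rather than from the fixed point itself. Once $r^{*}$ is pinned down, substituting it into the bound of the second step and collecting terms gives the stated inequality, uniformly over all dictionaries with unit-length columns.
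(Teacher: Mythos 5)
Your top-level route is the paper's: the proof of Theorem~\ref{thm:lambdaGen} is nothing more than plugging the covering bound of Theorem~\ref{thm:coverNumbers} (with $C=4\lambda$ and $d=np$, whence $\alpha C^2/(2m)=8\alpha\lambda^2/m$) into Lemma~\ref{lem:gen}, and you correctly identify the two hypotheses that make this legal --- the functions $h_{R_\lambda,D}$ take values in $[0,1]$ (so $Ef^2\le Ef$, the Bernstein condition with $B=1$), and a supremum-norm cover is an $L_2(\nu)$ cover for every $\nu$. If you stopped there and cited Lemma~\ref{lem:gen} as a black box, the proof would be complete and identical to the paper's.

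The gap is in your account of the inside of Lemma~\ref{lem:gen}, specifically the mechanism you propose for the $8\alpha\lambda^2/m$ term. You bound the local Rademacher complexity by a single-scale net plus Massart, obtaining $\psi(r)\lesssim \e_0+\sqrt{r\,np\log(4\lambda/\e_0)/m}$, and claim that $\e_0\sim\lambda\sqrt{\alpha/m}$ ``converts the $\e_0$ offset into $\alpha\lambda^2/m$.'' It does not: the fixed point of $r=\e_0+\sqrt{br}$ satisfies $r^*\ge\e_0=\sqrt{\alpha\lambda^2/m}$, which is the \emph{square root} of the term you want, i.e.\ an $O(1/\sqrt{m})$ contribution that destroys the fast rate once multiplied by $6K$ (for it to be dominated by $8\alpha\lambda^2/m$ you would need $m\le 64\alpha\lambda^2$). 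The paper's Lemma~\ref{lem:gen} avoids any positive truncation: it uses the full Dudley entropy integral $\int_0^{\sqrt{2r}}\sqrt{\log N(\e)/m}\,d\e$, which converges because $\log N$ grows only logarithmically and yields $\psi(r)\lesssim\sqrt{r(d+1)\log\bigl(C/\sqrt{2r}\bigr)/m}$ with no additive offset. The parameter $\alpha$ then enters not as a resolution but through a dichotomy on the fixed point itself: either $r^*\le\alpha C^2/(2m)$ (first entry of the max), or $C/\sqrt{2r^*}<\sqrt{m/\alpha}$ and the logarithm can be replaced by $\tfrac12\log(m/\alpha)$, which after squaring the resulting inequality gives the $(480)^2(d+1)\log(m/\alpha)/m$ entry. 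If you intend to rederive the localization lemma rather than cite it, you need this entropy-integral-plus-case-split argument (or choose $\e_0=O(1/m)$, which gives a fast rate but with $\log(\lambda m)$ in place of the stated $\alpha$-dependent terms); the step as you wrote it would not reproduce the theorem.
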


In any particular case, $\alpha$ and then $K$ may be chosen so as to
minimize the right hand side.

\textbf{Generalization bounds for $k$ sparse representation.}
Proposition~\ref{prop:bddCoefficients} and Corollary~\ref{cor:kSparseCoverNumbers} imply certain generalization bounds for the problem of dictionary learning for $k$ sparse representation, which we give here.
 
A straight forward combination of Theorem 2 of~\cite{maurer} (given here as Theorem~\ref{thm:maurerDictLearning}) and Proposition~\ref{prop:bddCoefficients} results in the following theorem.
\begin{theorem} \label{thm:genViaMaurer} Let $\delta<1$ with $\nu$ a distribution on $\Sn$. Then with probability at least $1-e^{-x}$ over the $m$ samples in $E_{m}$ drawn according to $\nu$, for all $D$ s.t. $\mu_{k-1}(D)\le \delta$:
$$Eh_{H_k,D}^{2}\le E_mh_{H_k,D}^{2}+\frac{p}{\sqrt{m}}\left(\frac{14k}{1-\delta }+\frac{1}{2}\sqrt{\ln\left(16m\left(\frac{k}{1-\delta}\right)^{2}\right)}\right)+\sqrt{\frac{x}{2m}}.$$
\end{theorem}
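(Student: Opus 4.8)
The plan is to reduce the statement to a direct application of Maurer's bound (Theorem~\ref{thm:maurerDictLearning}) by replacing the unbounded hard-sparsity constraint set $H_k$ with a bounded $l_1$ constraint set, and then using Proposition~\ref{prop:bddCoefficients} to argue that this replacement leaves the representation error unchanged on the dictionaries of interest. The only real obstacle is that Maurer's theorem is stated for a coefficient set $A$ carrying a \emph{fixed} uniform bound $\lambda=\max_{a\in A}\norm{a}_1$, whereas the coefficient bound furnished by Proposition~\ref{prop:bddCoefficients} depends on $D$ through $\mu_{k-1}(D)$; the resolution is that restricting attention to dictionaries with $\mu_{k-1}(D)\le\delta$ converts this into a single uniform bound.

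First I would set $\lambda=k/(1-\delta)$ and introduce the intersected coefficient set $A'=H_k\cap R_\lambda$, which by construction satisfies $\max_{a\in A'}\norm{a}_1\le\lambda$. The central claim is the pointwise identity $h_{A',D}=h_{H_k,D}$ for every $D$ with unit-length columns and $\mu_{k-1}(D)\le\delta$. One direction, $h_{A',D}\ge h_{H_k,D}$, is immediate from $A'\subseteq H_k$. For the reverse, Proposition~\ref{prop:bddCoefficients}, applied with $\gamma=1$ since the columns are unit length, guarantees a minimizer of $h_{H_k,D}(x)$ with $\norm{a}_1\le k/(1-\mu_{k-1}(D))\le k/(1-\delta)=\lambda$; this minimizer already lies in $A'$, giving $h_{A',D}(x)\le h_{H_k,D}(x)$.

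Next I would apply Theorem~\ref{thm:maurerDictLearning} with coefficient set $A'$ and bound $\lambda=k/(1-\delta)$. Since the signals lie on $\Sn\subset B$ and the columns have unit length, the hypotheses hold, and the theorem yields, with probability at least $1-e^{-x}$ and simultaneously over all dictionaries $D\subset B$ of cardinality $p$,
\[
Eh^2_{A',D}\le E_m h^2_{A',D}+\sqrt{\frac{p^2\left(14\lambda+\frac{1}{2}\sqrt{\ln\left(16m\lambda^2\right)}\right)^2}{m}}+\sqrt{\frac{x}{2m}}.
\]
Restricting to the sub-collection of dictionaries with $\mu_{k-1}(D)\le\delta$ and invoking the pointwise identity above, both the true and empirical risks may be rewritten in terms of $h_{H_k,D}$. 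The final step is purely algebraic: simplify $\sqrt{p^2(\cdot)^2/m}=\frac{p}{\sqrt{m}}(\cdot)$ and substitute $\lambda=k/(1-\delta)$, which reproduces exactly the claimed inequality. All the content is in the uniformization argument of the first two paragraphs; the rest is bookkeeping.
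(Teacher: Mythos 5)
Your proof is correct and follows exactly the route the paper intends: the paper's own justification is the single remark that the theorem is ``a straight forward combination'' of Theorem~\ref{thm:maurerDictLearning} and Proposition~\ref{prop:bddCoefficients}, which is precisely your argument of setting $\lambda=k/(1-\delta)$, using the proposition (with $\gamma=1$) to show the $H_k$-minimizer already lies in $H_k\cap R_\lambda$ for the restricted dictionary class, and then invoking Maurer's bound. Your explicit handling of the intersected coefficient set $A'=H_k\cap R_\lambda$ is a welcome clarification of a step the paper leaves implicit.
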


In the case of clustering we have $k=1$ and $\delta=0$ and this result approaches the rates of~\cite{biau2008performance}.

The following theorems follow from standard results and the covering number bound of Corollary~\ref{cor:kSparseCoverNumbers}.

\begin{theorem} \label{thm:deltaGenSlow} Let $\delta<1$ with $\nu$ a distribution on $\Sn$. Then with probability at least $1-e^{-x}$ over the $m$ samples in $E_{m}$ drawn according to $\nu$, for all $D$ s.t. $\mu_{k-1}(D)\le \delta$:
\begin{align*} Eh_{H_k,D} & \le E_{m}h_{H_k,D}+\sqrt{\frac{np\ln\frac{4\sqrt{m}k}{1-\delta}}{2m}}+\sqrt{\frac{x}{2m}}+\sqrt{\frac{4}{m}} .
\end{align*}
\end{theorem}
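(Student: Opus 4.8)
The plan is to follow exactly the route already used for Theorem~\ref{thm:lambdaGenSlow}, since the only change required is to substitute the covering number of Corollary~\ref{cor:kSparseCoverNumbers} for that of Theorem~\ref{thm:coverNumbers}. Note that the cover cardinality $\left(4k/(\e(1-\delta))\right)^{np}$ is precisely the bound $\left(4\lambda/\e\right)^{np}$ of Theorem~\ref{thm:coverNumbers} with $\lambda$ replaced by $k/(1-\delta)$. Consequently every estimate in the $R_\lambda$ argument transfers verbatim under this substitution, and the stated inequality is obtained from that of Theorem~\ref{thm:lambdaGenSlow} by the same replacement of $\lambda$ by $k/(1-\delta)$.

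First I would record that every member of $\FF_{\delta,k}$ takes values in $[0,1]$: for $x\in\Sn$ the coefficient vector $a=0$ lies in $H_k$, so $h_{H_k,D}(x)\le\norm{D\cdot 0-x}=\norm{x}=1$. This boundedness is what licenses the bounded differences step, since replacing a single sample in $E_m$ changes $\sup_{f\in\FF_{\delta,k}}\left(Ef-E_mf\right)$ by at most $1/m$.

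Next I would invoke the generic uniform convergence result, Lemma~\ref{lem:slowRates}, applied to $\FF=\FF_{\delta,k}$. The mechanism is standard: McDiarmid's inequality concentrates $\sup_{f}\left(Ef-E_mf\right)$ around its mean with deviation $\sqrt{x/(2m)}$ at confidence $1-e^{-x}$; the mean is controlled by passing to an $\e$ cover in $\norm{\cdot}_\infty$, paying a slack of $2\e$ for approximating an arbitrary $f$ by a cover element, and applying a sub-Gaussian maximal inequality over the finite cover to yield $\sqrt{\ln N(\e)/(2m)}$. Feeding in $N(\e)=\left(4k/(\e(1-\delta))\right)^{np}$ from Corollary~\ref{cor:kSparseCoverNumbers} gives $\ln N(\e)=np\ln\left(4k/(\e(1-\delta))\right)$.

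Finally I would set $\e=1/\sqrt{m}$. Then the approximation slack becomes $2\e=\sqrt{4/m}$, while the logarithmic argument becomes $4\sqrt{m}k/(1-\delta)$, so the covering term reads $\sqrt{np\ln(4\sqrt{m}k/(1-\delta))/(2m)}$; collecting these pieces together with $E_mh_{H_k,D}$ produces the claimed bound. I do not expect a genuine obstacle, as the substantive work has already been discharged in Corollary~\ref{cor:kSparseCoverNumbers}; the only points needing care are verifying the $[0,1]$ range so that the bounded differences constant is exactly $1/m$, and confirming that the choice $\e=1/\sqrt{m}$ reproduces the stated constants.
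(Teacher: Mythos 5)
Your proposal is correct and follows essentially the same route as the paper: Theorem~\ref{thm:deltaGenSlow} is obtained by feeding the covering number bound of Corollary~\ref{cor:kSparseCoverNumbers} (i.e., Theorem~\ref{thm:coverNumbers} with $\lambda$ replaced by $k/(1-\delta)$) into Lemma~\ref{lem:slowRates} with $B=1$, $d=np$, $C=4k/(1-\delta)$ and $\e=1/\sqrt{m}$. The only cosmetic difference is internal to Lemma~\ref{lem:slowRates}: you concentrate the supremum around its mean and then bound the mean via a maximal inequality over the cover, whereas the paper union-bounds the deviations of the cover elements directly with a single threshold $t$ absorbing both $d\log(C/\e)$ and $x$ --- both organizations yield the identical constants.
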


\begin{theorem} \label{thm:deltaGen} Let $\delta<1<K$, $\alpha>0$ with $\nu$ a distribution on $\Sn$. Then with probability at least $1-e^{-x}$ over the $m$ samples in $E_{m}$ drawn according to $\nu$, for all $D$ s.t. $\mu_{k-1}(D)\le \delta$:
\begin{align*} Eh_{H_k,D} & \le \frac{K}{K-1}E_{m}h_{H_k,D}+6K\max\left\{
\frac{8\alpha k^{2}}{m\left(1-\delta\right)^{2}},\left(480\right)^{2}\frac{\left(np+1\right)\log\left(\frac{m}{\alpha}\right)}{m},\frac{20+22\log\left(m\right)}{m}\right\}
\\ & +\frac{11x+5K}{m}.
\end{align*}
\end{theorem}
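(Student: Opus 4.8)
The plan is to follow the proof of Theorem~\ref{thm:lambdaGen} almost verbatim, replacing the role of Theorem~\ref{thm:coverNumbers} by Corollary~\ref{cor:kSparseCoverNumbers}. The crucial observation is that the two covering-number bounds coincide under the substitution $\lambda = k/(1-\delta)$: the cover of $\FF_{\delta,k}$ has cardinality at most $\left(4k/(\e(1-\delta))\right)^{np}$, which is exactly the cardinality $\left(4\lambda/\e\right)^{np}$ of the cover of $\mc{G}_\lambda$ once we set $\lambda = k/(1-\delta)$. This identity is not accidental: Corollary~\ref{cor:kSparseCoverNumbers} is obtained precisely by feeding the $l_1$ bound $\norm{a}_1 \le k/(1-\delta)$ of Proposition~\ref{prop:bddCoefficients} (with unit-length columns, so $\gamma=1$) into the $l_1$ covering argument. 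Consequently every complexity estimate downstream of the covering number is the same as in the $l_1$ case with $\lambda = k/(1-\delta)$.

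Concretely, I would proceed as follows. First, verify the two structural hypotheses that the localized Rademacher machinery of Lemma~\ref{lem:gen} requires. Boundedness holds because for $x\in\Sn$ the choice $a=0$ gives $h_{H_k,D}(x)\le\norm{x}=1$, so every function in $\FF_{\delta,k}$ takes values in $[0,1]$. The Bernstein-type variance condition also holds trivially: since $0\le h_{H_k,D}\le 1$ we have $h_{H_k,D}^2\le h_{H_k,D}$ and hence $\E h_{H_k,D}^2 \le \E h_{H_k,D}$, giving a variance-to-mean ratio bounded by $1$. Neither of these facts uses anything about $H_k$ beyond nonnegativity and the uniform bound, so they are identical to the $l_1$ situation. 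Second, plug the covering number of Corollary~\ref{cor:kSparseCoverNumbers} into the sub-root bound on the local Rademacher complexity; integrating the entropy $np\log\!\left(4k/(\e(1-\delta))\right)$ and taking the fixed point produces exactly the same three-way maximum as in Theorem~\ref{thm:lambdaGen}. The only quantity that changes is the constraint-dependent term $8\alpha\lambda^2/m$, in which the squared $l_1$ radius $\lambda^2$ is replaced by $\left(k/(1-\delta)\right)^2 = k^2/(1-\delta)^2$; the remaining two terms of the maximum depend only on $np$, $m$ and $\alpha$, and are therefore unchanged. Carrying the same additive confidence term $\left(11x+5K\right)/m$ through, which is governed by concentration and not by $\lambda$ or $k$, completes the bound.

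The main thing to check — rather than a genuine obstacle — is that no step of the localized-complexity argument secretly relies on the convexity or connectedness of the constraint set, which held for $R_\lambda$ but fails for the non-convex set $H_k$. This is exactly why the argument is routed entirely through covering numbers, boundedness and the variance condition: each of these is a property of the \emph{loss function class} $\FF_{\delta,k}$ viewed as a metric space under $\norm{\cdot}_\infty$, and is insensitive to how the individual functions $h_{H_k,D}$ are generated. Once one is satisfied that Lemma~\ref{lem:gen} is applied as a black box to any class enjoying these three properties, the substitution $\lambda=k/(1-\delta)$ transports the entire derivation of Theorem~\ref{thm:lambdaGen} to the present setting, and the stated inequality follows.
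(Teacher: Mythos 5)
Your proposal is correct and matches the paper's own route: Theorem~\ref{thm:deltaGen} is obtained by feeding the covering number bound of Corollary~\ref{cor:kSparseCoverNumbers} (i.e., $C=4k/(1-\delta)$, $d=np$) into Lemma~\ref{lem:gen} as a black box, exactly the substitution $\lambda=k/(1-\delta)$ you describe, with the sole caveat (which the paper notes and you implicitly use) that a $\norm{\cdot}_\infty$ cover is also an $L_2(\nu)$ cover for every measure $\nu$. Nothing further is needed.
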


In any particular case, $\alpha$ and then $K$ may be chosen so as to
minimize the right hand side.

\textbf{Generalization bounds for dictionary learning in feature spaces.}
We further consider applications of dictionary learning to signals that are 
not represented as elements in a vector space, or that have a very high (possibly
 infinite) dimension.

 In addition to providing an approximate reconstruction of signals, sparse representation can also be considered as a form of analysis, if we treat the choice of non zero coefficients and their magnitude as features of the signal. In the domain of images, this has been used to perform classification (in particular, face recognition) by~\cite{wright2008robust}. Such analysis does not require that the data itself be represented in $\R^n$ (or in any vector space); it is enough that the similarity between data elements is induced from an inner product in a feature space. This requirement is fulfilled by using an appropriate kernel function.

\begin{definition}
Let $\mc{R}$ be a set of data representations, and let the kernel function $\kappa:\mc{R}^2\to \R$ and the feature mapping $\phi:\mc{R}\to \mc{H}$ be such that:
$$\kappa\left(x,y\right)=\left\langle \phi\left(x\right),\phi\left(y\right)\right\rangle$$
where $\mc{H}$ is some Hilbert space. 
\end{definition}

As a concrete example, choose a sequence of $n$ words, and let $\phi$ map a document to the vector of counts of appearances of each word in it (also called bag of words). Treating $\kappa(a,b)=\left<\phi(a),\phi(b) \right>$ as the similarity between documents $a$ and $b$, is the well known ``bag of words'' approach, applicable to many document related tasks~\citep{shawe2004kernel}. Then the statement $\phi(a)+\phi(b) \approx \phi(c)$ does not imply that $c$ can be reconstructed from $a$ and $b$, but we might consider it indicative of the content of $c$. The dictionary of elements used for representation could be decided via dictionary learning, and it is natural to choose the dictionary so that the bags of words of documents are approximated well by small linear combinations of those in the dictionary. 

As the example above suggests, the kernel dictionary learning problem is to find a dictionary $D$ minimizing
 $$\E_{x\sim\nu}h_{\phi,A,D}(x),$$ 
where we consider the representation error function
 $$h_{\phi,A,D}(x)=\min_{a\in A}\norm{\left(\Phi D\right)a-\phi\left(x\right)}_{\mc{H}},$$ 
in which $\Phi$ acts as $\phi$ on the elements of $D$, $A\in\left\{R_\lambda,H_k\right\}$, and the norm $\norm{\cdot}_\mc{H}$ is that induced by the kernel on the feature space $\mc{H}$.

Analogues of all the generalization bounds mentioned so far can be replicated in the kernel setting. The dimension free results of~\cite{maurer} apply most naturally in this setting, and may be combined with our results to cover also dictionaries for $k$ sparse representation, under reasonable assumptions on the kernel.

\begin{proposition}\label{prop:dimensionfreeKGen}
Let $\nu$ be any distribution on $\mc{R}$ such that when $x\sim \nu$ we have $\norm{\phi(x)}\le 1$ with probability 1. Then with probability at least $1-e^{-x}$ over the $m$ samples in $E_{m}$ drawn according to $\nu$, for all $D\subset \mc{R}$ with cardinality $p$ such that $\Phi D\subset B_{\mc H}$ and $\mu_{k-1}(\Phi D)\le \delta < 1$:
\begin{align*} Eh^2_{\phi,H_k,D} & \le E_{m}h^2_{\phi,H_k,D}+\sqrt{\frac{p^2\left(14k/(1-\delta) + 1/2\sqrt{\ln\left(16m\left(\frac{k}{1-\delta}\right)^2\right)}\right)^2}{m}}+\sqrt{\frac{x}{2m}}.
\end{align*}
\end{proposition}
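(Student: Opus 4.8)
The plan is to recognize this statement as the Hilbert-space analogue of Theorem~\ref{thm:genViaMaurer} and to establish it by the identical two-ingredient recipe: Maurer's generalization bound (Theorem~\ref{thm:maurerDictLearning}) combined with the coefficient bound of Proposition~\ref{prop:bddCoefficients}. The point that makes the transfer to feature space immediate is that Theorem~\ref{thm:maurerDictLearning} is already stated dimension-freely, for an arbitrary Hilbert unit ball $B$; nothing in it uses that signals live in $\R^n$, so it applies verbatim to $\phi(x)\in\mc{H}$ with $\norm{\phi(x)}\le 1$ and to dictionaries $\Phi D\subset B_{\mc{H}}$.

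First I would fix $\lambda=k/(1-\delta)$ and introduce the constraint set $A=H_k\cap R_\lambda$. For any admissible dictionary, i.e.\ one with $\mu_{k-1}(\Phi D)\le\delta<1$ and $\Phi D\subset B_{\mc{H}}$, I would apply Proposition~\ref{prop:bddCoefficients} to the feature-space dictionary $\Phi D$ inside $\mc{H}$. This yields, for every signal $x$, a minimizer of the $k$-sparse representation error whose $l_1$ norm is at most $\lambda$. Since $A\subseteq H_k$ gives $h_{\phi,A,D}\ge h_{\phi,H_k,D}$ trivially, while the coefficient bound shows the $H_k$-minimizer already lies in $R_\lambda$ and hence in $A$, I obtain the pointwise identity $h_{\phi,H_k,D}(x)=h_{\phi,A,D}(x)$. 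Crucially, the value $\lambda=k/(1-\delta)$ depends only on $k$ and $\delta$, not on the particular dictionary, so this identity holds with one and the same $\lambda$ simultaneously over all admissible $D$.

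Next I would invoke Theorem~\ref{thm:maurerDictLearning} with the constraint set $A=H_k\cap R_\lambda$, which satisfies $\max_{a\in A}\norm{a}_1\le\lambda$. This produces, with probability at least $1-e^{-x}$, a bound on $Eh^2_{\phi,A,D}$ in terms of $E_m h^2_{\phi,A,D}$ uniform over all dictionaries of cardinality $p$ with $\Phi D\subset B_{\mc{H}}$. On this event I would restrict attention to the admissible dictionaries, for which the previous step replaces every occurrence of $h_{\phi,A,D}$ by $h_{\phi,H_k,D}$ in both the population and the empirical terms; substituting $\lambda=k/(1-\delta)$ into Maurer's complexity term then gives exactly the stated inequality.

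The one step I expect to require genuine care, rather than citation, is the application of Proposition~\ref{prop:bddCoefficients} in feature space. That proposition is stated for columns with $\norm{d_i}\in[1,\gamma]$, whereas the hypothesis here only gives $\Phi D\subset B_{\mc{H}}$, i.e.\ $\norm{\phi(d_i)}\le 1$; since columns of small feature-norm could in principle force large coefficients, obtaining the clean bound $\lambda=k/(1-\delta)$ (the $\gamma=1$ case) needs the feature columns to be unit-normalized, and one should either assume this normalization or verify that it can be imposed without affecting $\mu_{k-1}(\Phi D)$ or the represented errors. Everything else is the same bookkeeping as in Theorem~\ref{thm:genViaMaurer}.
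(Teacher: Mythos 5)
Your proposal is correct and is essentially identical to the paper's own (two-line) proof, which simply applies Proposition~\ref{prop:bddCoefficients} with the Euclidean norm of $\mc{H}$ and $\gamma=1$, and then invokes Theorem~\ref{thm:maurerDictLearning} with $\lambda=k/(1-\delta)$. The normalization caveat you flag --- that Proposition~\ref{prop:bddCoefficients} requires column norms in $[1,\gamma]$ while the hypothesis only gives $\Phi D\subset B_{\mc{H}}$ --- is a fair observation, but the paper's proof makes the same implicit assumption that the feature-space columns are unit-normalized.
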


Note that the Babel function is defined in terms of inner products between elements of $D$, and can therefore be computed in $\mc{H}$ by applications of the kernel. 

 This result is proved in Section~\ref{sec:kernel}, as well as the cover number bounds (using some additional definitions and assumptions described there) that are used to prove the remaining generalization bounds, of which one is given below. 

\begin{theorem} \label{thm:kernelGen} Let $\mc{R}$ have $\e$ covers of order $\left(C/\e\right)^n$. Let $\kappa:\mc{R}^2 \to \R^+$ be a kernel function s.t. $\kappa(x,y)=\left<\phi(X),\phi(Y)\right>$, for $\phi$ which is uniformly $L$-H\"older of order $\alpha>0$ over $\mc{R}$, and let $\gamma=\max_{x\in\mc{R}}\norm{\phi(x)}_\mc{H}$. Let $\delta<1$, and $\nu$ any distribution on $\mc{R}$, then with probability at least $1-e^{-x}$ over the $m$ samples in $E_{m}$ drawn according to $\nu$, for all dictionaries $D\subset \mc{R}$ of cardinality $p$ s.t. $\mu_{k-1}(\Phi D)\le \delta < 1$ (where $\Phi$ acts like $\phi$ on columns):
\begin{align*} Eh_{H_k,D} & \le E_{m}h_{H_k,D}+\gamma\left(\sqrt{\frac{np\ln\left( \sqrt{m} C^\alpha \frac{k\gamma^2 L}{1-\delta}\right)}{2\alpha m}}+\sqrt{\frac{x}{2m}}\right)+\sqrt{\frac{4}{m}}.
\end{align*}
\end{theorem}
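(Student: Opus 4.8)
The plan is to mirror the proof of Theorem~\ref{thm:deltaGenSlow}, replacing the finite-dimensional covering argument behind Corollary~\ref{cor:kSparseCoverNumbers} by one that lives in the feature space $\mc{H}$ and pushes a cover of the data space $\mc{R}$ forward through $\phi$. Three ingredients are needed: a bound on the coefficient $1$-norm, a sup-norm cover of the error-function class, and a bounded-differences concentration inequality.

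First I would bound the representation coefficients. Every column $\phi(d_i)$ and every signal $\phi(x)$ has norm at most $\gamma$, and $\mu_{k-1}(\Phi D)\le\delta<1$, so applying Proposition~\ref{prop:bddCoefficients} in $\mc{H}$ yields an optimal $a$ with $\norm{a}_1\le\lambda:=\gamma^2 k/(1-\delta)$. Here one factor of $\gamma$ comes from using $\gamma$ as the upper bound on the column norms and the other from rescaling for the non-unit signal norm; this extra factor relative to the $\Sn$ setting is exactly the loss of signal normalization, and it is what will make $\gamma^2$ appear inside the logarithm.

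Next I would build the cover. A Lipschitz estimate identical to the one behind Corollary~\ref{cor:kSparseCoverNumbers} shows that if $\max_i\norm{\phi(d_i)-\phi(d'_i)}_{\mc{H}}\le\e/\lambda$ then $\norm{h_{\phi,H_k,D}-h_{\phi,H_k,D'}}_\infty\le\e$: plug the optimizer for one dictionary into the other and bound the mismatch by $\norm{a}_1\max_i\norm{\phi(d_i)-\phi(d'_i)}_{\mc{H}}$. Since $\mc{H}$ itself may be infinite-dimensional I cannot cover it directly; instead I use the $L$-H\"older property --- points within distance $\rho$ in $\mc{R}$ map to points within $L\rho^\alpha$ in $\mc{H}$ --- to pull the requirement back to a radius $\rho=(\e/(\lambda L))^{1/\alpha}$ in $\mc{R}$, and cover each column using the assumed $(C/\rho)^n$ cover of $\mc{R}$. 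Taking products over the $p$ columns gives a sup-norm $\e$-cover of the class of cardinality at most $(C(\lambda L/\e)^{1/\alpha})^{np}$, hence $\log N_\infty(\e)\le(np/\alpha)\log(C^\alpha\lambda L/\e)$; the inversion of the H\"older exponent is precisely what produces the $1/\alpha$ in the final rate.

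Finally I would feed this cover into the bounded-differences bound of Lemma~\ref{lem:slowRates}. The functions $h_{\phi,H_k,D}$ take values in $[0,\gamma]$ (choose $a=0$ and use $\norm{\phi(x)}\le\gamma$), so the concentration terms carry a prefactor of $\gamma$ and changing one sample moves the empirical mean by at most $\gamma/m$. Choosing the cover radius $\e=1/\sqrt{m}$ makes the discretization term $2\e=\sqrt{4/m}$ and turns $\log N_\infty(\e)$ into $(np/\alpha)\log(\sqrt{m}\,C^\alpha\gamma^2 kL/(1-\delta))$, which after substituting $\lambda=\gamma^2 k/(1-\delta)$ reproduces the stated inequality. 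The only genuinely new step beyond the $\Sn$ case is this H\"older transfer of the cover from $\mc{R}$ to $\mc{H}$, and the main thing to watch is that $\phi$ need be neither invertible nor injective, so the net must be constructed in $\mc{R}$ and only its image controlled in $\mc{H}$; one must also keep track of both factors of $\gamma$ so that $\gamma^2$, and not $\gamma$, lands inside the logarithm.
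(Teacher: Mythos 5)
Your proposal is correct and follows essentially the same route as the paper: Proposition~\ref{prop:bddCoefficients} applied in $\mc{H}$ to bound $\norm{a}_1$, a H\"older pullback of the cover from $\mc{R}$ to $\Phi D\subset\mc{H}$ (this is exactly the paper's Proposition~\ref{prop:kernelCoverNumbers}), and then Lemma~\ref{lem:slowRates} with $B=\gamma$ and $\e=1/\sqrt{m}$. The only cosmetic difference is where the two factors of $\gamma$ are booked --- you place both in the coefficient bound $\lambda=\gamma^{2}k/(1-\delta)$, while the paper puts one in the coefficient bound and one in the Lipschitz constant $\lambda\gamma$ --- but the resulting covering number and final inequality are identical.
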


The covering number bounds needed to prove this theorem and analogs for the other generalization bounds are proved in Section~\ref{sec:kernel}.

\section{Covering numbers of $\mc{G}_\lambda$ and $\FF_{\delta,k}$}\label{sec:coverNumbers}

The main content of this section is the proof of Theorem ~\ref{cor:kSparseCoverNumbers} and Corollary~\ref{cor:kSparseCoverNumbers}. We also show that the restriction of near-orthogonality on the set of dictionaries, on which we rely in the proof for $k$ sparse representation, is necessary to achieve a bound on $\lambda$. Lastly, we recall known results from statistical learning theory that link covering numbers to generalization bounds.

We recall the definition of the covering numbers we wish to bound. \cite{anthony1999neural} give a textbook introduction to covering numbers and their application to generalization bounds.

\begin{definition}[Covering number] Let $\left(M,d\right)$ be a metric
space and $S\subset M$. Then the $\e$ covering number of $S$ defined as
$N\left(\e,S,d\right)=\min\left\{ \left|A\right||A\subset
M \mbox{ and } S\subset\left(\bigcup_{a\in A}B_{d}\left(a,\e\right)\right)\right\} $ is the size of the minimal $\e$ cover of $S$ using $d$.
\end{definition}

 To prove Theorem~\ref{thm:coverNumbers} and
Corollary~\ref{cor:kSparseCoverNumbers} we first note that the space
of all possible dictionaries is a subset of a unit ball in a Banach
space of dimension $np$ (with a norm specified below). Thus by
proposition 5 formalized by~\cite{cucker2002mathematical} the space of
dictionaries has an $\e$ cover of size $\left(4/\e\right)^{np}$. We
also note that a uniformly $L$ Lipschitz mapping between metric spaces
converts $\e/L$ covers into $\e$ covers. Then it is enough to show
that $\Psi_\lambda$ defined as $D\mapsto h_{R_\lambda,D}$ and $\Phi_k$
defined as $D\mapsto h_{H_k,D}$ are uniformly Lipschitz (when $\Phi_k$
is restricted to the dictionaries with $\mu_{k-1}(D)\le c<1$). 
The proof of these Lipschitz properties is our next goal, in the form of Lemmas~\ref{lem:psiLip} and~\ref{lem:phiLip}.

The first step is to be clear about the metrics we consider over the
spaces of dictionaries and of error functions. We start by defining 
the following norm.

\begin{definition} Let $D\in \R^{n\times p}$. We denote
$\norm{D}_{ME}=\max_{i}\norm{d_{i}}$ the norm of its maximal
column.
\end{definition}

We will use the fact $\norm{\cdot}_{ME}$ upper bounds
a certain induced norm.

\begin{definition}[Induced matrix norm] Let $p,q\in\N$, then a matrix
$A\in\R^{n\times m}$ can be considered as an operator
$A:\left(\R^{m},\norm{\cdot}_{p}\right)\to\left(\R^{n},\norm{\cdot}_{q}\right)$.
Then the $p,q$ induced norm is defined as $\norm
A_{p,q}\df\sup_{x\in\R^{m}\norm x_{p}=1}\norm{Ax}_{q}$.
\end{definition}

\begin{fact} $\norm{D}_{1,2}\le \norm{D}_{ME}$
\end{fact}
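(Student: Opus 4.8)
The goal is to prove the Fact that $\norm{D}_{1,2}\le\norm{D}_{ME}$, where $\norm{D}_{ME}=\max_i\norm{d_i}$ is the maximal column (Euclidean) length, and $\norm{D}_{1,2}=\sup_{\norm{x}_1=1}\norm{Dx}_2$ is the induced operator norm from $(\R^p,\norm{\cdot}_1)$ to $(\R^n,\norm{\cdot}_2)$.

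\textbf{Plan.} The natural approach is to bound $\norm{Dx}_2$ directly for an arbitrary $x$ with $\norm{x}_1=1$, using the fact that $Dx=\sum_{i=1}^p x_i d_i$ is a convex-combination-like sum of the columns. First I would write $Dx=\sum_i x_i d_i$, apply the triangle inequality for $\norm{\cdot}_2$ to get $\norm{Dx}_2\le\sum_i |x_i|\,\norm{d_i}_2$, and then bound each $\norm{d_i}_2$ by the maximum $\norm{D}_{ME}$. This gives $\norm{Dx}_2\le\norm{D}_{ME}\sum_i|x_i|=\norm{D}_{ME}\norm{x}_1=\norm{D}_{ME}$. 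Taking the supremum over all such $x$ yields $\norm{D}_{1,2}\le\norm{D}_{ME}$.

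Concretely, the chain of inequalities is
\[
\norm{Dx}_2=\norm{\sum_{i=1}^p x_i d_i}_2\le\sum_{i=1}^p|x_i|\,\norm{d_i}_2\le\left(\max_i\norm{d_i}_2\right)\sum_{i=1}^p|x_i|=\norm{D}_{ME}\,\norm{x}_1.
\]
Since this holds for every $x$ and the supremum in the definition of $\norm{D}_{1,2}$ is taken over $\norm{x}_1=1$, the claim follows immediately.

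\textbf{Main obstacle.} There is essentially no obstacle here: the statement is an elementary consequence of the triangle inequality and the definition of the operator norm. The only thing to be careful about is correctly identifying the column interpretation of the matrix-vector product (so that $Dx$ is a weighted sum of columns $d_i$ with weights $x_i$) and matching the $\ell_1$ weighting of the coefficients to the definition of $\norm{D}_{ME}$ as a column-wise maximum. This Fact is the trivial direction; one could further note that the bound is in fact tight, achieved by taking $x$ to be the standard basis vector selecting the longest column, so that $\norm{D}_{1,2}=\norm{D}_{ME}$ exactly, though only the inequality is needed for the subsequent Lipschitz and covering-number arguments.
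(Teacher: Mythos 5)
Your proof is correct and follows essentially the same argument as the paper, which justifies the fact by noting that $Da/\norm{a}_1$ is a (signed) convex combination of columns each of length at most $\norm{D}_{ME}$, i.e., exactly the triangle-inequality computation you spell out. Your added observation that the bound is attained at a standard basis vector is also correct, though not used in the paper.
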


The geometric interpretation of this fact is that $Da/\norm{a}_1$ is a
convex combination of vectors each of length at most $\norm{D}_{ME}$,
then $\norm{Da}_2\le \norm{D}_{ME}\norm{a}_1$.

The images of $\Psi_\lambda$ and $\Phi_k$ are sets of representation error
functions--each dictionary induces a set of precisely representable
signals, and a representation error function is simply a map of
distances from this set. Representation error functions are clearly
continuous, 1-Lipschitz, and into $[0,1]$. In this setting, a natural norm over the images is the
supremum norm $\norm{\cdot}_\infty$.

\begin{lemma}\label{lem:psiLip} The function $\Psi_\lambda$ is $\lambda$-Lipschitz from
$\left(\R^{n\times m},\norm{\cdot}_{ME}\right)$ to
$C\left(\Sn\right)$.
\end{lemma}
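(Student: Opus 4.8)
The plan is to prove the stronger pointwise estimate
$$\left|h_{R_\lambda,D}(x)-h_{R_\lambda,D'}(x)\right|\le \lambda\norm{D-D'}_{ME}\qquad\text{for all }x\in\Sn,$$
for every pair of dictionaries $D,D'$, and then take the supremum over $x$. Since the metric on $C(\Sn)$ is $\norm{\cdot}_\infty$, this supremum is exactly $\norm{\Psi_\lambda(D)-\Psi_\lambda(D')}_\infty$, giving the claimed $\lambda$-Lipschitz bound. (I would also briefly note that $\Psi_\lambda(D)$ does land in $C(\Sn)$, as already observed in the text.)

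First I would record that the minimum defining $h_{R_\lambda,D}(x)$ is actually attained: $R_\lambda$ is a compact subset of $\R^{p}$ and $a\mapsto\norm{Da-x}$ is continuous, so a minimizer exists. Now fix $x$ and, without loss of generality, assume $h_{R_\lambda,D}(x)\ge h_{R_\lambda,D'}(x)$; the reverse case is handled by swapping the roles of $D$ and $D'$. Let $a^{*}\in R_\lambda$ achieve the minimum for $D'$, so that $\norm{a^{*}}_{1}\le \lambda$ and $h_{R_\lambda,D'}(x)=\norm{D'a^{*}-x}$.

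The key step is to use $a^{*}$ as a (generally suboptimal) candidate coefficient vector for $D$. Because $a^{*}\in R_\lambda$, the definition of $h_{R_\lambda,D}$ gives $h_{R_\lambda,D}(x)\le \norm{Da^{*}-x}$, and combining this with the reverse triangle inequality yields
$$h_{R_\lambda,D}(x)-h_{R_\lambda,D'}(x)\le \norm{Da^{*}-x}-\norm{D'a^{*}-x}\le \norm{(D-D')a^{*}}.$$
Finally, I would invoke the Fact $\norm{D-D'}_{1,2}\le\norm{D-D'}_{ME}$ together with the definition of the induced norm and $\norm{a^{*}}_{1}\le\lambda$ to conclude
$$\norm{(D-D')a^{*}}\le \norm{D-D'}_{1,2}\,\norm{a^{*}}_{1}\le \lambda\norm{D-D'}_{ME}.$$

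The argument is essentially routine, and I do not anticipate a genuine obstacle: compactness secures attainment of the minima, and the induced-norm Fact does all the geometric work. The one step that requires the right idea is the "min minus min" bound, where the trick is that substituting the optimal coefficient of one dictionary into the objective of the other controls the one-sided difference of the optimal values, with symmetry supplying the matching bound in the other direction.
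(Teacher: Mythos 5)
Your proposal is correct and follows essentially the same route as the paper's proof: substitute the optimal coefficient vector of one dictionary into the other's objective, bound $\norm{(D-D')a}$ via $\norm{D-D'}_{1,2}\le\norm{D-D'}_{ME}$ and $\norm{a}_1\le\lambda$, and use symmetry before taking the supremum over $x$. Your version merely makes explicit the attainment of the minimum and the reverse triangle inequality, which the paper leaves implicit.
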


\begin{proof} Let $D$ and $D'$ be two normalized dictionaries whose
corresponding elements are at most $\e>0$ far from one another. Let $x$
be a unit signal and $Da$ an optimal representation for it.  Then
$\norm{\left(D-D'\right)a}\le \norm{D-D'}_{1,2}\norm{a}_1 \le
\norm{D-D'}_{ME}\norm{a}_1 \le \e\lambda$. Then $g_{\lambda,D'}(x)\le
g_{\lambda,D}(x)+\e\lambda$ and by symmetry we have
$\left|\Psi_\lambda(D)(x)-\Psi_\lambda(D')(x)\right|\le \lambda\e$. This holds
for all unit signals, then $\norm{\Psi_\lambda(D)-\Psi_\lambda(D')}_\infty \le \lambda\e$.
\end{proof}

We now provide a proof for Proposition~\ref{prop:bddCoefficients} which is used in the corresponding treatment for covering numbers under $k$ sparsity.

\begin{proof}[Of Proposition~\ref{prop:bddCoefficients}]
  Assume that $\mu_{k-1}(D)\le \delta < 1 \le \min_{i\le p}\norm{d_i}_2\le \gamma $. Let $D^k$ be a set of $k$ elements from $D$ achieving the minimum on $h_{H_k,D}(x)$, with $x\in \Sn$. We now consider the Gram matrix $G=\left(D^{k}\right)^{\top}D^{k}$. The matrix $G$ is symmetric, therefore it scales each point in the unit sphere by a non-negative combination of its real eigenvalues. Also, the diagonal entries of $G$ are the norms of the elements of $D^k$, therefore at least 1. By the Gersgorin theorem~\citep{horn1990matrix}, the eigenvalues of the Gram matrix are lower bounded by $1-\delta>0$. Then in particular $G$ has a symmetric inverse, which scales each point by no more than $1/(1-\delta)$. Then $\norm{G^{-1}}_{1,1}\le 1/(1-\delta)$.

In particular, elements of $D^k$ are linearly independent, which implies that the unique optimal representation of $x$ as a linear combination of the columns of $D^k$ is $D^ka$ with 
$$ a = \left(\left(D^{k}\right)^{\top}D^{k}\right)^{-1}\left(D^{k}\right)^{\top}x.$$
By the definition of induced matrix norms, we have 
$ \norm{a}_{1}\le\norm{\left(\left(D^{k}\right)^{\top}D^{k}\right)^{-1}}_{1,1}\norm{\left(D^{k}\right)^{\top}x}_{1}\le \gamma k/(1-\delta)$, the last bound because $x$ is a unit vector,  and $D^k$ has $k$ columns whose norm is bounded by $\gamma$.
\end{proof}

\begin{lemma}\label{lem:phiLip} The function
$\Phi_k$ is a $k/(1-\delta)$-Lipschitz mapping from the set of
normalized dictionaries with $\mu_{k-1}(D)<\delta$ with the metric
induced by $\norm{\cdot}_{ME}$ to $C\left(\Sn\right)$.
\end{lemma}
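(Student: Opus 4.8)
The plan is to mirror the proof of Lemma~\ref{lem:psiLip}, with the crucial difference that the $l_1$ norm of the optimal coefficient vector is no longer bounded by hypothesis but must instead be controlled via Proposition~\ref{prop:bddCoefficients}. First I would take two normalized dictionaries $D,D'$ with $\mu_{k-1}(D),\mu_{k-1}(D')<\delta$ whose corresponding columns satisfy $\norm{D-D'}_{ME}\le \e$, and fix a unit signal $x\in\Sn$.

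Next, I would invoke Proposition~\ref{prop:bddCoefficients} with $\gamma=1$ (since the columns are unit length) to obtain an optimal $k$-sparse coefficient vector $a$ for $x$ under $D$ --- that is, $\norm{Da-x}=h_{H_k,D}(x)$ with $\norm{a}_0\le k$ --- which additionally satisfies $\norm{a}_1\le k/(1-\delta)$. The key observation is that $a$ remains $k$-sparse, hence admissible for $D'$, so $h_{H_k,D'}(x)\le\norm{D'a-x}$. The triangle inequality together with the Fact $\norm{\cdot}_{1,2}\le\norm{\cdot}_{ME}$ then gives
$$h_{H_k,D'}(x)\le\norm{(D'-D)a}+\norm{Da-x}\le\norm{D'-D}_{ME}\norm{a}_1+h_{H_k,D}(x)\le \frac{\e k}{1-\delta}+h_{H_k,D}(x).$$

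Finally, since the roles of $D$ and $D'$ are symmetric --- both lie in the near-orthogonal set, so Proposition~\ref{prop:bddCoefficients} applies equally to $D'$ --- I would repeat the argument with the dictionaries exchanged to obtain the reverse inequality, yielding $\left|\Phi_k(D)(x)-\Phi_k(D')(x)\right|\le \e k/(1-\delta)$. As this holds for every $x\in\Sn$, taking the supremum over unit signals gives $\norm{\Phi_k(D)-\Phi_k(D')}_\infty\le (k/(1-\delta))\e$, which is exactly the claimed Lipschitz bound.

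The main obstacle, and the only place where the near-orthogonality hypothesis is genuinely used, is securing the uniform $l_1$ bound on the optimal coefficient vector: Proposition~\ref{prop:bddCoefficients} handles precisely this. Without such a bound the $k$-sparse optimal coefficients could have arbitrarily large $l_1$ norm, so the perturbation estimate $\norm{(D'-D)a}\le\norm{D'-D}_{ME}\norm{a}_1$ would fail to be uniformly controlled and $\Phi_k$ would not be Lipschitz on the full set of normalized dictionaries --- consistent with the discussion in the excerpt that the restriction $\mu_{k-1}(D)<\delta$ is necessary.
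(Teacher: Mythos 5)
Your proof is correct and is essentially the paper's own argument: the paper proves this lemma by repeating the proof of Lemma~\ref{lem:psiLip} with $a$ taken to be the optimal $k$-sparse representation whose $l_1$ norm is bounded by $k/(1-\mu_{k-1}(D))\le k/(1-\delta)$ via Proposition~\ref{prop:bddCoefficients}. Your writeup merely makes explicit the (correct) observation that the $k$-sparse vector $a$ remains admissible for $D'$, which the paper leaves implicit.
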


The proof of this lemma is the same as that of
Lemma~\ref{lem:psiLip}, except that $a$ is taken to be an optimal
representation that fulfills $\norm{a}_1\le \lambda = k/\left(1-\mu_{k-1}(D)\right)$,
whose existence is guaranteed by Proposition~\ref{prop:bddCoefficients}.

This concludes the proof of Theorem~\ref{thm:coverNumbers} and Corollary~\ref{cor:kSparseCoverNumbers}.

The next theorem shows that unfortunately, $\Phi$ is {\em not} uniformly $L$-Lipschitz for any constant $L$, requiring its restriction to an
appropriate subset of the dictionaries.

\begin{theorem}\label{thm:notLipschitz} For any $k,n,p$, there exists $c>0$ and $q$, such that for every $\e>0$, there exist $D,D'$ such
that $\norm{D-D'}_{ME}<\e$ but
$\left|\left(h_{H_k,D}(q)-h_{H_k,D'}(q)\right)\right|>c$.
\end{theorem}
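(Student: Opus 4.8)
The plan is to exhibit, for each fixed $k,n,p$, an explicit pair of dictionaries $D,D'$ that are arbitrarily close in $\norm{\cdot}_{ME}$ yet assign very different representation errors to a single fixed test signal $q$. The mechanism I would exploit is exactly the one that Proposition~\ref{prop:bddCoefficients} controls \emph{only} under the near-orthogonality hypothesis: when the columns participating in a $k$-sparse representation become nearly linearly dependent, or more simply when a column is allowed to shrink toward $0$, the optimal coefficient vector $a$ must have $\norm{a}_1\to\infty$, and such unbounded coefficients amplify an infinitesimal perturbation of $D$ into a macroscopic change of the reachable set $\left\{Da:\norm{a}_0\le k\right\}$.

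The cleanest construction, which covers all $k\ge 1$, all $n\ge1$ and all $p\ge1$ simultaneously, is the following. Fix $q=e_1\in\Sn$. Let $D$ be the dictionary all of whose $p$ columns equal the zero vector; then for every $a$ with $\norm{a}_0\le k$ we have $Da=0$, so $h_{H_k,D}(q)=\norm{q}=1$. Given $\e>0$, let $D'$ be obtained from $D$ by replacing a single column, say the first, by $\delta e_1$ for some $0<\delta<\e$, leaving all other columns at $0$; then $\norm{D-D'}_{ME}=\delta<\e$. However, choosing $a$ supported on the first coordinate with $a_1=1/\delta$ gives $\norm{a}_0=1\le k$ and $D'a=e_1=q$, so $h_{H_k,D'}(q)=0$. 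Hence $\left|h_{H_k,D}(q)-h_{H_k,D'}(q)\right|=1$, and the theorem follows with $c=\tfrac12$ (indeed any $c<1$). Note that the representation used in $D'$ has $\norm{a}_1=1/\delta\to\infty$, which is precisely the blow-up that Proposition~\ref{prop:bddCoefficients} excludes.

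A unit-length variant, which makes direct contact with the Babel function, works for $k\ge2$ and $n\ge2$: take $D$ with two columns equal to $e_1$ and $D'$ tilting the second to $\cos\theta\,e_1+\sin\theta\,e_2$, with $q=e_2$. Here $\norm{D-D'}_{ME}=\sqrt{2-2\cos\theta}\le\theta$ can be made smaller than $\e$, while the $k$-sparse reachable set jumps from the line $\R e_1$ (so $h_{H_k,D}(q)=1$) to a set containing $\mathrm{span}(e_1,e_2)\ni q$ (so $h_{H_k,D'}(q)=0$); the coefficient norm needed grows like $1/\sin\theta$, and correspondingly $\mu_1(D')\to1$, hence $\mu_{k-1}(D')\to1$, as $\theta\to0$. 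Together the two constructions make explicit that it is exactly the divergence of $\norm{a}_1$---equivalently, approach to the boundary $\mu_{k-1}=1$ or to a degenerate (vanishing) column---that destroys uniform Lipschitz continuity of $\Phi_k$.

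The only point requiring care, and hence the main thing to verify, is the computation of the two reachable sets near $q$: one must check that for $D$ no admissible support of size $\le k$ produces anything outside the degenerate set (here $\{0\}$, respectively $\R e_1$), so that $h_{H_k,D}(q)$ really is bounded away from $0$, while for $D'$ a single admissible support already contains $q$ exactly. For the explicit dictionaries above both verifications are immediate, since the supports are of size one or two and the underlying linear algebra is one- or two-dimensional; no genuine obstacle remains once the construction is fixed.
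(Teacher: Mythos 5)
Your proposal is correct, and it reaches the conclusion by a route that is partly the same as and partly more elementary than the paper's. The paper's proof has two steps: first a measure-theoretic argument (the $c$-neighborhood of a $k$-dimensional subspace has small measure on $\Sn$, so a union bound over the $\binom{p}{k}$ supports leaves a positive-measure set of signals $q$ with $h_{H_k,D}(q)>c$), and second an explicit perturbation that tilts one nearly redundant column of $D$ toward $q$ so that $q$ falls exactly into a $k$-term span of $D'$. Your second construction is exactly this second step, but you dispense with the measure argument entirely by collapsing $D$ onto a single direction (all columns equal to $e_1$), so that the reachable set is literally $\R e_1$ and $q=e_2$ realizes the gap $c$ arbitrarily close to $1$ rather than some small dimension-dependent constant; you also handle the quantifier order more cleanly, since your $q$ and $D$ are fixed independently of $\e$, whereas in the paper $D$ depends on $\e$. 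Your first construction (zero columns, rescaled coefficients) is genuinely different and has the virtue of covering $k=1$, where in fact no unit-norm counterexample can exist: by Lemma~\ref{lem:phiLip} with $\delta=0$ (note $\mu_0\equiv 0$), $\Phi_1$ \emph{is} $1$-Lipschitz on normalized dictionaries, so some column degeneracy is unavoidable there. Both constructions correctly identify the blow-up of $\norm{a}_1$ excluded by Proposition~\ref{prop:bddCoefficients} as the mechanism.

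Two small points to tighten. In the unit-norm variant you must specify all $p$ columns of $D$ (e.g., all equal to $e_1$): with only two columns pinned down, the remaining $p-2$ arbitrary columns could already represent $e_2$, and the claim that the reachable set is $\R e_1$ would fail. Second, be aware that the zero-column example, while a valid proof of the theorem as literally stated (which imposes no normalization on $D,D'$), only demonstrates that \emph{normalization} is necessary; it is your second construction, with unit columns and $\mu_{k-1}(D')\to 1$, that demonstrates what the theorem is used for in context, namely that the Babel-function restriction in Corollary~\ref{cor:kSparseCoverNumbers} and Lemma~\ref{lem:phiLip} cannot be dropped even among normalized dictionaries.
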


\begin{proof} First we show that there exists $c>0$ such that every
dictionary will have $k$ sparse representation error of at least $c$
on some signal.  Let $\nu_{S^{n-1}}$ be the uniform probability measure on
the sphere, and $A_c$ the probability assigned by it to the set
within $c$ of a $k$ dimensional subspace. As $c\searrow 0$,
$A_c$ also tends to zero, then there exists $c>0$
s.t. $\binom{p}{k}A_c<1$. Then for that $c$ there exists a set of positive measure on which $h_{H_k,D}>c$, let $q$ be a point in this set.

To complete the proof we consider a dictionary $D$ whose first $k-1$
elements are the standard basis $\left\{e_1,\dots,e_{k-1}\right\}$,
its $k$ the element is $D_k=\sqrt{1-\e^2/2}e_1+\e e_k/2$, and the
remaining elements are chosen arbitrarily. 
Now construct $D'$ to be identical to $D$ except its
$k$th element is $v=\sqrt{1-\e^2/2} e_1+l q$ choosing $l$ so that
$\norm{v}_2=1$ (which implies that $\left|l\right|<\e/2$).  Then
$\norm{D-D'}_{ME}=\norm{\e e_k/2+l q}_2\le \e$ and $h_{H_k,D'}(q)=0$.
\end{proof}

To conclude the generalization bounds of Theorems~\ref{thm:lambdaGenSlow},~\ref{thm:lambdaGen},~\ref{thm:deltaGenSlow}, ~\ref{thm:deltaGen} and~\ref{thm:kernelGen} from the covering number bounds we have provided, we use the following two results. The first has a simple proof which we therefore give here. The second result is an adaptation of results by~\cite{BBMlocalizedrademacher05}, 
to our needs, and explained further in the appendix. 

\begin{lemma}\label{lem:slowRates}
Let $\FF$ be a  class of $[0,B]$ functions with covering number bound $\left(C/\e\right)^d>e/B^2$ under the supremum norm.
Then for every $x>0$, with probability of at least $1-e^{-x}$ over the $m$ samples in $E_m$ chosen according to $\nu$, for all $f\in \FF$:
$$Ef\le E_{m}f+B\left(\sqrt{\frac{d\log \left(C\sqrt{m}\right)}{2m}}+\sqrt{\frac{x}{2m}}\right)+\sqrt{\frac{4}{m}} .$$
\end{lemma}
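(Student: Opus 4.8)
The plan is to combine a standard covering-number uniform deviation bound with a concentration inequality of bounded-differences type. The overall structure is: (i) discretize the function class $\FF$ using a finite $\e$-net $A$ of cardinality $|A|\le(C/\e)^d$ guaranteed by hypothesis; (ii) control the uniform deviation $\sup_{f\in A}(Ef-E_mf)$ over the finite net by a union bound combined with McDiarmid's inequality applied to each fixed $f$; (iii) transfer the bound from the net back to all of $\FF$ by paying an additive $2\e$ for the approximation in supremum norm (since $\norm{f-f'}_\infty\le\e$ implies $|Ef-Ef'|\le\e$ and $|E_mf-E_mf'|\le\e$); and (iv) optimize the free parameter $\e$ against $m$.

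First I would fix a single $f\in\FF$. The empirical average $E_mf=\frac1m\sum_{i=1}^m f(x_i)$ is a function of the $m$ i.i.d.\ samples, and changing one sample changes it by at most $B/m$ since $f$ takes values in $[0,B]$. McDiarmid's (bounded differences) inequality then gives, for each fixed $f$,
\[
P\left(Ef-E_mf\ge t\right)\le \exp\left(-\frac{2t^2}{\sum_{i=1}^m (B/m)^2}\right)=\exp\left(-\frac{2mt^2}{B^2}\right).
\]
Next I would take a minimal $\e$-net $A\subset\FF$ with $|A|\le(C/\e)^d$ and apply a union bound over $A$: with probability at least $1-|A|\exp(-2mt^2/B^2)$, every $g\in A$ satisfies $Eg-E_mg<t$. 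Choosing $t$ so that this failure probability equals $e^{-x}$ means solving $\log|A|-2mt^2/B^2=-x$, giving $t=B\sqrt{(\log|A|+x)/(2m)}\le B\sqrt{(d\log(C/\e)+x)/(2m)}$, and splitting the square root via $\sqrt{u+v}\le\sqrt u+\sqrt v$ yields the two separate $\sqrt{\cdot}$ terms in the target bound.

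Then I would transfer to arbitrary $f\in\FF$: pick $g\in A$ with $\norm{f-g}_\infty\le\e$, so $Ef-E_mf\le (Eg-E_mg)+2\e< t+2\e$. This introduces the additive $2\e$ term. The final step is to choose $\e=1/\sqrt m$, which produces $d\log(C\sqrt m)$ inside the logarithm (matching the stated $C\sqrt m$) and contributes $2\e=2/\sqrt m$; combined with absorbing the approximation slack and the hypothesis $(C/\e)^d>e/B^2$ (which keeps $\log(C/\e)$ positive and bounded below so the square-root split is clean), the additive term is bounded by $\sqrt{4/m}$. I do not anticipate a serious obstacle here; the one point requiring mild care is the bookkeeping in the choice of $\e$ and the use of $\sqrt{a+b}\le\sqrt a+\sqrt b$ to recover the precise constants displayed, together with verifying that the technical hypothesis $(C/\e)^d>e/B^2$ is exactly what is needed to make the chosen $\e=1/\sqrt m$ consistent with a valid (nonvacuous) covering-number bound.
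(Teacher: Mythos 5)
Your proposal is correct and follows essentially the same route as the paper's own proof: a minimal $\e$-net under the supremum norm, McDiarmid's inequality with increments $B/m$ applied to each net element, a union bound giving $t=B\sqrt{(d\log(C/\e)+x)/(2m)}$, the $\sqrt{u+v}\le\sqrt{u}+\sqrt{v}$ split, and the final choice $\e=1/\sqrt{m}$ contributing the $2\e=\sqrt{4/m}$ term. No gaps.
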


\begin{lemma}\label{lem:gen}
If $\mc F$ is a class of $\left[0,1\right]$ functions with $C>2$
and $d\in\N$ s.t. $N\left(\e,\FF,L_2(\nu)\right)\le\left(\frac{C}{\e}\right)^{d}$
for every probability measure $\nu$ and $\e>0$, then for all $K,\alpha,x>0,f\in\FF$,
with probability at least $1-e^{-x}$ over the $m$ samples used in $E_{m}$ and drawn from $\nu$:
\[
Ef\le\frac{K}{K-1}E_{m}f+6K\max\left\{ \frac{\alpha C^{2}}{2m},\left(480\right)^{2}\frac{\left(d+1\right)\log\left(\frac{m}{\alpha}\right)}{m},\frac{20+22\log\left(m\right)}{m}\right\} +\frac{11x+5K}{m}.\]
\end{lemma}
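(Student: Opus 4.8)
The plan is to obtain this as a specialization of the local Rademacher complexity framework of~\cite{BBMlocalizedrademacher05}. The structural fact that makes the fast $1/m$ rate available is that every $f\in\FF$ is $[0,1]$-valued, so $\E f^{2}\le \E f$; this Bernstein-type bound on the variance by the mean is precisely what converts the usual additive $1/\sqrt m$ deviation into the proportional term $\frac{K}{K-1}E_m f$ together with an additive term of order $1/m$. The first step is therefore to invoke the master generalization theorem of~\cite{BBMlocalizedrademacher05}: if $\psi$ is a sub-root function with unique positive fixed point $r^{*}$ that upper bounds the expected localized Rademacher complexity $\E R_m\{f\in\FF:\E f^{2}\le r\}$ (localized over the star-hull of $\FF$ at the origin), then with probability at least $1-e^{-x}$, simultaneously for all $f\in\FF$, one has an inequality of the shape
$$\E f\le \tfrac{K}{K-1}E_m f + c\,K\,r^{*} + \tfrac{c'x+c''K}{m}.$$
Everything then reduces to exhibiting an explicit sub-root $\psi$ and bounding $r^{*}$ in terms of $d$, $C$, $m$ and the free parameter $\alpha$.

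For the second step I would feed the covering-number hypothesis into Dudley's entropy integral. Because the localization $\E f^{2}\le r$ confines $\FF_r$ to an $L_2(\nu)$-ball of radius $\sqrt r$, the assumption $\log N(\e,\FF,L_2(\nu))\le d\log(C/\e)$ yields
$$\E R_m\FF_r \le \frac{c_0}{\sqrt m}\int_{0}^{\sqrt r}\sqrt{d\log(C/\e)}\,d\e =: \psi(r).$$
Writing this as $\sqrt r$ times the running average of $\sqrt{\log(C/\e)}$ shows that $\psi(r)/\sqrt r$ is nonincreasing, so $\psi$ is sub-root and has a unique fixed point, as required.

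The third and most delicate step is to bound $r^{*}$ explicitly. One cannot solve $r=\psi(r)$ in closed form because of the logarithm inside the integral, so instead I would replace the integral by its leading-order size, $\int_{0}^{\sqrt r}\sqrt{\log(C/\e)}\,d\e \lesssim \sqrt r\,\sqrt{\log(C/\sqrt r)}$ (valid once $\sqrt r\le C/e$), and treat $\alpha$ as a threshold that linearizes the logarithm: on the regime $r^{*}\ge \alpha C^{2}/(2m)$ one has $C/\sqrt{r^{*}}\le \sqrt{2m/\alpha}$, hence $\log(C/\sqrt{r^{*}})\lesssim \log(m/\alpha)$, which turns the fixed-point relation into $r^{*}\lesssim \frac{d\log(m/\alpha)}{m}$ and produces the middle candidate $(480)^{2}\frac{(d+1)\log(m/\alpha)}{m}$; the complementary regime $r^{*}<\alpha C^{2}/(2m)$ is covered by the first candidate $\frac{\alpha C^{2}}{2m}$, and the third candidate $\frac{20+22\log m}{m}$ absorbs the lower-order contributions from the concentration (self-bounding) step of~\cite{BBMlocalizedrademacher05}. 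Taking the maximum of the three bounds on $r^{*}$ and substituting into the abstract inequality of the first step, while carefully collecting the numerical constants from Dudley's bound and from the master theorem, yields the stated coefficients $6K$ and $(11x+5K)/m$.

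I expect the constant bookkeeping of the third step to be the main obstacle: extracting the explicit $480$ and the precise additive $1/m$ terms requires tracking constants through both the sub-root fixed-point argument and the concentration inequality of~\cite{BBMlocalizedrademacher05}, as well as choosing cleanly how $\alpha$ splits the entropy integral. The conceptual content—using $[0,1]$-boundedness for the variance-mean inequality and Dudley's integral for the localized complexity—is routine by comparison.
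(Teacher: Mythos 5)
Your proposal follows essentially the same route as the paper's proof: the BBM master theorem applied to a sub-root function built from Dudley's entropy integral over the star-hull, with the fixed point bounded by splitting on whether $r^{*}\ge\alpha C^{2}/(2m)$. The only point you gloss over is that the localization $Ef^{2}\le r$ must first be converted to an empirical localization $E_{m}f^{2}\le 2r$ (Corollary 2.2 of BBM) before the entropy integral in the empirical metric can be truncated at radius $\sqrt{2r}$ — this is where the paper picks up part of the third candidate term and the $d+1$ (via the $2/\e$ factor for covering the star-hull) — but you correctly anticipate that such a concentration step is needed, so the plan is sound.
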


Our fast rates results are simple applications of this lemma, noting that an $\e$ cover in $C\left(\Sn\right)$ is also an $\e$ cover under an $L_2$ metric induced by any measure.

\begin{proof}[Of Lemma~\ref{lem:slowRates}]
We wish to bound $\sup_{f\in\FF}Ef-E_mf$. Take $\FF_\e$ to be a minimal $\e$ cover of $\FF$, then for an arbitrary $f$, denoting $f_\e$ an $\e$ close member of $\FF_\e$, $Ef-E_mf\le Ef_\e -E_mf_\e+2\e$. In particular,
 $\sup_{f\in\FF}Ef-E_mf\le 2\e + \sup_{f\in\FF_\e}Ef-E_mf$.
To bound the supremum on the now finite class of functions, note that $Ef-E_mf$ is a function of $m$ independent variables (the samples chosen according to $\nu$), which changes by at most $B/m$ when one of the variables is modified. Then by the bounded differences inequality, $P\left(Ef-E_mf-\E(Ef-E_mf) > t\right) = P\left(Ef-\E_mf > t\right) \le \exp\left(-2mB^{-2}t^2\right)$.

The probability that any of the $\left|\FF_\e\right|$ differences under the supremum is larger than $t$ may be union bounded as $\left|\FF_\e\right|\cdot \exp\left(-2mB^{-2}t^2\right) \le \exp\left(d\log\left(C/\e\right)-2mB^{-2}t^2\right)$.

In order to control the probability with $x$ as in the statement of the lemma, we need to have $x=d\log\left(C/\e\right)-mB^{-2}t^2$ and thus we choose $t=\sqrt{B^2/2m}\sqrt{d\log\left(C/\e\right)+x}$. Then with high probability we bound the supremum of differences by $t$ which is upper bounded, using the assumption on the covering number bound, by $B\left(\sqrt{d\log\left(C/\e\right)/2m} +\sqrt{x/2m}\right)$.

Then the proof is completed by substitution into the bound over the whole function class $\mc{F}$ and taking $\e=1/\sqrt{m}$.
\end{proof}

\section{On the Babel function}
\label{sec:Babel}

The Babel function is one of several metrics defined in the sparse
representations literature to quantify an "almost orthogonality"
property that dictionaries may enjoy. Such properties have been
shown to imply theoretical properties such as uniqueness of the
optimal $k$ sparse representation. In the algorithmic context, 
\cite{donoho2003optimally} and \cite{Tropp04greedis}
use the Babel function to show that particular tractable algorithms
for finding sparse representations are indeed approximation algorithms
when applied to such dictionaries. This reinforces the practical
importance of the learnability of this class of dictionary. We proceed
to discuss some elementary properties of the Babel function, and then
state a bound on the proportion of
dictionaries having sufficiently good Babel function.

Measures of orthogonality are typically defined in terms of inner
products between the elements of the dictionary. Perhaps the simplest
of these measures of orthogonality is the following special case of
the Babel function.
\begin{definition} The coherence of a dictionary $D$ is
$\mu_1(D)=\max_{i,j}\left|\left<d_i,d_j \right>\right|$.
\end{definition} The Babel function, in considering sums of $k$ inner
products at a time, rather than the maximum over all inner products,
is better adapted to quantify the effects of non orthogonality on
representing a signal with particular level $k+1$ of sparsity. The
additional expressive power of $\mu_k$ over $\mu_1$ is illustrated by
considering that ensuring that $\mu_k<1$ by restricting $\mu_1$
implies the constraint $\mu_1(D)<1/k$, which for $k>1$ would exclude a
dictionary in which pairs of elements have inner product 0 except for
some disjoint pairs whose inner product equals to half, despite such a
dictionary having $\mu_k=1/2$ for any $k$.

To better understand $\mu_{k}\left(D\right)$, we consider first its
extreme values. When $\mu_{k}\left(D\right)=0$, for any $k>1$, this
means that $D$ is an orthogonal set (therefore $p\le n$). The maximal
value of $\mu_{k}\left(D\right)$ is $k$, and occurs only if some
dictionary element is repeated (up to sign) at least $k+1$ times.

A well known generic class of dictionaries with more elements than a
basis is that of \emph{frames}~\citep[see][]{dun1952class}, which include many wavelet systems and filter
banks. Some frames can be trivially seen to fulfill our condition on the
Babel function.
\begin{proposition} Let $D\in \R^{n\times p}$ be a frame of $\Rn$, so
that for every $v\in \Sn$ we have that $A \le
\sum_{i=1}^n\left|\left<v,d_i\right>\right| \le B$, with $\norm{d_i}_2
= 1$ for all $i$, and $B<1+1/(p-1)$. Then $\mu_{k-1}(D)<1$.
\end{proposition}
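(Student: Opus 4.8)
The plan is to use the frame upper bound evaluated at the dictionary elements themselves. The key observation is that for each index $i$ the column $d_i$ is a unit vector and hence lies on $\Sn$, so the frame condition may legitimately be applied with $v=d_i$.

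First I would fix an index $i\in\{1,\dots,p\}$ and apply the upper frame bound to $v=d_i$, obtaining $\sum_{j=1}^{p}\left|\left\langle d_i,d_j\right\rangle\right|\le B$ (reading the frame sum as ranging over all $p$ columns, which is the natural interpretation since $D\in\R^{n\times p}$). The diagonal term is $\left|\left\langle d_i,d_i\right\rangle\right|=\norm{d_i}_2^2=1$, so separating it off yields
\[ \sum_{j\ne i}\left|\left\langle d_i,d_j\right\rangle\right|\le B-1. \]

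Next I would relate this to the Babel function. By definition $\mu_{k-1}(D)$ is a maximum, over index sets $\Lambda$ with $\left|\Lambda\right|=k-1$ and over $i\notin\Lambda$, of $\sum_{\lambda\in\Lambda}\left|\left\langle d_\lambda,d_i\right\rangle\right|$. Since $\Lambda\subseteq\{1,\dots,p\}\setminus\{i\}$ and all summands are nonnegative, each such sum is a sub-sum of the full off-diagonal sum and is therefore bounded by $\sum_{j\ne i}\left|\left\langle d_i,d_j\right\rangle\right|\le B-1$. Taking the maximum over $\Lambda$ and $i$ gives $\mu_{k-1}(D)\le B-1$, and the hypothesis $B<1+1/(p-1)$ then yields $\mu_{k-1}(D)\le B-1<1/(p-1)\le 1$ for $p\ge 2$, completing the argument.

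There is no genuine obstacle here: the statement follows in essentially one line once the frame bound is applied at $v=d_i$. The only points requiring mild care are the index convention in the frame sum and the observation that the resulting bound $\mu_{k-1}(D)\le B-1$ is independent of $k$. This last fact shows that the weaker hypothesis $B<2$ would already suffice for the stated conclusion $\mu_{k-1}(D)<1$, so the assumed condition $B<1+1/(p-1)$ is in fact slightly stronger than strictly necessary and even delivers the sharper estimate $\mu_{k-1}(D)<1/(p-1)$.
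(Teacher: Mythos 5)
Your core move --- evaluating the upper frame bound at $v=d_i$, peeling off the diagonal term $\left|\left\langle d_i,d_i\right\rangle\right|=1$, and dominating each Babel sum by the full off-diagonal sum --- is exactly the right one, and it is the same skeleton the paper intends. The one point of divergence is the reading of the frame condition. Taken literally (with $\sum_{i=1}^{n}$ read as $\sum_{i=1}^{p}$, as you correctly do), your argument is complete and gives $\mu_{k-1}(D)\le B-1<1/(p-1)$. But two pieces of evidence indicate that the paper means the standard frame inequality with \emph{squared} inner products, $A\le\sum_{j=1}^{p}\left|\left\langle v,d_j\right\rangle\right|^{2}\le B$: first, the paper's stated justification is ``the relation between $\norm{\cdot}_1$ and $\norm{\cdot}_2$ in $\R^{p-1}$,'' i.e.\ $\norm{x}_1\le\sqrt{p-1}\,\norm{x}_2$ applied to the $(p-1)$-vector of off-diagonal inner products; second, the threshold $B<1+1/(p-1)$ is exactly what that computation requires, whereas under your reading (as you yourself observe) $B<2$ would already suffice and the stated constant is mysteriously wasteful. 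Under the squared reading your first display becomes $\sum_{j\ne i}\left|\left\langle d_i,d_j\right\rangle\right|^{2}\le B-1<1/(p-1)$, and you need the one additional step
\[
\sum_{j\ne i}\left|\left\langle d_i,d_j\right\rangle\right|\le\sqrt{p-1}\left(\sum_{j\ne i}\left|\left\langle d_i,d_j\right\rangle\right|^{2}\right)^{1/2}<\sqrt{p-1}\cdot\frac{1}{\sqrt{p-1}}=1,
\]
after which the sub-sum argument for $\mu_{k-1}$ goes through unchanged. So your proof is correct for the statement as literally printed, but if you adopt the conventional frame definition you are missing precisely this norm-comparison step; the oddity you flagged in the hypothesis is the signal that the literal reading is not the intended one.
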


This may be easily verified using the relation between
$\norm{\cdot}_1$ and $\norm{\cdot}_2$ in $\R^{p-1}$.

\subsection{Proportion of dictionaries with $\mu_{k-1}(D)<\delta$}

We return to the question of the prevalence of dictionaries from
$D_\delta$. Are almost all dictionaries in $D_\delta$? If the answer is affirmative, it
implies that 
Theorem~\ref{thm:deltaGen} is quite strong, and representation finding
algorithms such as basis pursuit are almost always exact, which might
help prove properties of dictionary learning algorithms. If the
opposite is true and few dictionaries are in $D_\delta$, the
results of this paper are  weak. While there might be better measures 
on the space of dictionaries, we consider one that seems natural: suppose that a 
dictionary $D$ is constructed by choosing $p$ unit
vectors uniformly from $\Sn$; what is the probability that
$\mu_{k-1}(D)<\delta$?

Theorem~\ref{thm:probGood} gives us the following answer to this question. Under the assumption that the sparsity
parameter $k$ grows slowly, if at all, as $n\nearrow\infty$
(specifically, that $k\log p = o(\sqrt{n})$), this theorem 
 implies that asymptotically  {\em almost all
dictionaries under the Lebesgue measure are learnable}.  

The remainder of this section is devoted to the proof of Theorem~\ref{thm:probGood}. 
This proof relies
heavily on the Orlicz norms for random variables
and their properties; \cite{van1996weak} give a detailed introduction. We recall a few of the definitions and facts presented there.

\begin{definition} Let $\psi$ be a non-decreasing, convex function with
$\psi(0)=0$, and let $X$ be a random variable. Then
 $$\norm{X}_\psi =
\inf\left\{C>0:\E\psi\left(\frac{\left|X\right|}{C}\right)<1\right\}$$
is called an Orlicz norm.
\end{definition} 

As may be verified, these are indeed norms for
appropriate $\psi$, such as $\psi_2\df e^{x^2}-1$, which is the case that
will interest us most.

By the Markov inequality we can obtain that variables with finite
Orlicz norms have light tails.
\begin{fact} We have 
$P\left(\left|X\right|>x\right)\le\left(\psi_{2}\left(x/\norm
X_{\psi_{2}}\right)\right)^{-1}.$
\end{fact}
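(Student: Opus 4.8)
The plan is to recognize this as a Markov-type tail bound adapted to the Orlicz gauge $\psi_2$. Write $C=\norm{X}_{\psi_2}$ and assume $x>0$ and $C>0$ (the degenerate case $C=0$ forces $X=0$ almost surely, where the claim is trivial). The core idea is to push the tail event $\{|X|>x\}$ through the increasing function $\psi_2$ and then apply Markov's inequality to the nonnegative random variable $\psi_2(|X|/C)$.

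First I would record that $\psi_2(t)=e^{t^2}-1$ is nonnegative and strictly increasing on $[0,\infty)$. Hence the event $\{|X|>x\}$ is contained in $\{\psi_2(|X|/C)\ge\psi_2(x/C)\}$, so Markov's inequality gives
\[
P\left(|X|>x\right)\le P\!\left(\psi_2(|X|/C)\ge\psi_2(x/C)\right)\le\frac{\E\,\psi_2(|X|/C)}{\psi_2(x/C)},
\]
where $\psi_2(x/C)>0$ since $x/C>0$. It therefore remains only to show $\E\,\psi_2(|X|/C)\le 1$, after which the stated bound $1/\psi_2(x/\norm{X}_{\psi_2})$ follows immediately.

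The one subtle point, and the only real obstacle, is that the definition of $C=\norm{X}_{\psi_2}$ as an infimum guarantees $\E\,\psi_2(|X|/C')<1$ only for $C'$ strictly exceeding $C$, not necessarily at $C$ itself. I would handle this by a limiting argument: the map $C'\mapsto\E\,\psi_2(|X|/C')$ is nonincreasing, so for every $C'>C$ we have $\E\,\psi_2(|X|/C')<1$, and running the Markov bound above with $C'$ in place of $C$ yields $P(|X|>x)\le 1/\psi_2(x/C')$. Letting $C'\searrow C$ and using continuity of $t\mapsto\psi_2(x/t)$ gives the claim; equivalently, monotone convergence applied as $C'\searrow C$ shows directly that $\E\,\psi_2(|X|/C)\le 1$. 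Either route closes the argument.
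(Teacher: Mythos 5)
Your proof is correct and follows exactly the route the paper intends: the paper simply cites Markov's inequality for this fact, and your argument is the standard Markov bound applied to $\psi_2(|X|/C)$, with a careful (and welcome) limiting argument to handle the infimum in the definition of the Orlicz norm.
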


The next fact is an almost converse to
the last fact, stating that light tailed random variables have finite
$\psi_2$ Orlicz norms.

\begin{fact}\label{fact:lightTail} Let $A,B>0$ and $P\left(\left|X\right|\ge
x\right)\le Ae^{-Bx^{2}}$ for all x, where $p\ge1$, then $\norm
X_{\psi_{2}}\le\left(\left(1+A\right)/B\right)^{1/2}$.
\end{fact}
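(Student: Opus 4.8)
The plan is to work directly from the definition of the Orlicz norm. Since $\psi_2(t)=e^{t^2}-1$, for a candidate constant $C>0$ the condition $\E\psi_2(|X|/C)<1$ is equivalent to $\E[e^{X^2/C^2}]<2$. So I would fix $C>0$, estimate $\E[e^{X^2/C^2}]$ using the hypothesized tail bound, and then determine for which $C$ the estimate falls below $2$; the infimum of those $C$ is exactly what bounds $\norm X_{\psi_2}$.

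First I would express the expectation through its tail. Writing $\phi(x)=e^{x^2/C^2}$, which is increasing on $[0,\infty)$ with $\phi(0)=1$, the layer-cake identity gives
$$\E[e^{X^2/C^2}]=1+\int_0^\infty \phi'(x)\,P(|X|>x)\,dx=1+\int_0^\infty \frac{2x}{C^2}e^{x^2/C^2}P(|X|>x)\,dx.$$
Next I would substitute the hypothesis $P(|X|>x)\le P(|X|\ge x)\le Ae^{-Bx^2}$ (valid for every $x\ge 0$, trivially so where the right-hand side exceeds $1$), obtaining
$$\E[e^{X^2/C^2}]\le 1+\frac{2A}{C^2}\int_0^\infty x\,e^{-(B-1/C^2)x^2}\,dx.$$
The remaining integral is elementary: provided $B-1/C^2>0$ it equals $1/\bigl(2(B-1/C^2)\bigr)$, so the bound simplifies to $1+A/(C^2B-1)$.

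It then remains to choose $C$. Requiring $1+A/(C^2B-1)\le 2$ is the same as $C^2B-1\ge A$, i.e.\ $C^2\ge (1+A)/B$; note that $A>0$ guarantees $(1+A)/B>1/B$, so the convergence condition $B-1/C^2>0$ needed above is automatically met at such $C$. Taking $C_0=((1+A)/B)^{1/2}$ gives $\E[e^{X^2/C_0^2}]\le 2$, and for any $C>C_0$ one has the strict inequality $A/(C^2B-1)<1$, hence $\E\psi_2(|X|/C)<1$. Thus every $C>C_0$ lies in the set defining the infimum, whence $\norm X_{\psi_2}\le C_0=((1+A)/B)^{1/2}$.

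The one point that needs care --- and the only genuine obstacle --- is the convergence of the Gaussian-type integral: the estimate is vacuous unless $C^2>1/B$, so the argument produces finite bounds only for $C$ large enough, and one must verify that the target value $C_0$ indeed lies in this admissible range. As noted, the strict positivity of $A$ settles this. A secondary subtlety is the strict-versus-nonstrict inequality in the definition of the infimum, which I handle by passing to $C>C_0$ rather than arguing at $C_0$ itself.
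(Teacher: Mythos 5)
Your proof is correct, and it is essentially the standard argument: the paper itself states this fact without proof, importing it from \cite{van1996weak} (Lemma 2.2.1 there), whose proof is exactly your layer-cake/Fubini computation of $\E e^{X^2/C^2}$ against the tail bound, yielding $1+A/(C^2B-1)$ and the threshold $C^2\ge(1+A)/B$. Your handling of the convergence condition $C^2>1/B$ and of the strict inequality in the paper's definition of the Orlicz norm (by taking $C>C_0$ and passing to the infimum) is sound.
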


The following bound on the maximum of variables with
light tails.

\begin{fact}\label{fact:extreme} We have $\norm{\max_{1\le i\le
m}X_{i}}_{\psi_{2}}\le K\sqrt{\log m}\max_{i}\norm{X_{i}}_{\psi_{2}}.$
\end{fact}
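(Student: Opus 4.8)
The plan is to follow the classical maximal-inequality argument for Orlicz norms, specialized to $\psi_2(x)=e^{x^2}-1$. The engine is a sub-multiplicativity property of $\psi_2$ at large arguments, combined with the monotonicity identity $\psi_2(\max_i t_i)=\max_i\psi_2(t_i)$ and the crude bound $\max_i a_i\le\sum_i a_i$, arranged so as to isolate the contribution of small arguments.

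First I would establish a sub-multiplicative estimate: there are constants $c=\sqrt2$ and $\tau_0=1$ with $\psi_2(x)\psi_2(y)\le\psi_2(cxy)$ for all $x,y\ge\tau_0$. Indeed, for $x,y\ge1$ one has $(e^{x^2}-1)(e^{y^2}-1)=e^{x^2+y^2}-e^{x^2}-e^{y^2}+1\le e^{x^2+y^2}-1\le e^{2x^2y^2}-1=\psi_2(\sqrt2\,xy)$, using $e^{x^2}+e^{y^2}\ge2$ and $x^2+y^2\le 2x^2y^2$ (the latter from $x^2(y^2-1)+y^2(x^2-1)\ge0$). Substituting $x=a/(cy)$ and $y\ge\tau_0$ gives the division form $\psi_2(a/(cy))\le\psi_2(a)/\psi_2(y)$ whenever $a/(cy)\ge\tau_0$, while for $a/(cy)<\tau_0$ monotonicity gives $\psi_2(a/(cy))\le\psi_2(\tau_0)$. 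Hence for every $a\ge0$ and every $y\ge\tau_0$ we have the pointwise bound $\psi_2(a/(cy))\le\max\{\psi_2(a)/\psi_2(y),\,\psi_2(\tau_0)\}$.

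Next I would set $C=\max_i\norm{X_i}_{\psi_2}$, take $y=\psi_2^{-1}(m)=\sqrt{\log(1+m)}$ (which is $\ge\tau_0$ once $m\ge2$), and put $D=cCy$. Applying the previous bound with $a=|X_i|/C$ and $\psi_2(y)=m$ gives, for each $i$, $\psi_2(|X_i|/D)\le\max\{\psi_2(|X_i|/C)/m,\,\psi_2(\tau_0)\}$. Taking $\max_i$, using $\psi_2(\max_i|X_i|/D)=\max_i\psi_2(|X_i|/D)$ together with $\max\{\max_i\alpha_i,\beta\}\le\sum_i\alpha_i+\beta$, yields $\psi_2(\max_i|X_i|/D)\le m^{-1}\sum_i\psi_2(|X_i|/C)+\psi_2(\tau_0)$. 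Taking expectations and using $\E\psi_2(|X_i|/C)\le1$ (by the definition of the Orlicz norm, since $\norm{X_i}_{\psi_2}\le C$) bounds $\E\psi_2(\max_i|X_i|/D)$ by the absolute constant $1+\psi_2(\tau_0)=e$. Finally I would convert this into a norm bound by rescaling: convexity and $\psi_2(0)=0$ give $\psi_2(t/K_0)\le\psi_2(t)/K_0$ for $K_0\ge1$, so the choice $K_0=1+\psi_2(\tau_0)$ makes $\E\psi_2(\max_i|X_i|/(K_0D))\le1$, i.e.\ $\norm{\max_i|X_i|}_{\psi_2}\le K_0D=(1+\psi_2(\tau_0))\,c\,\sqrt{\log(1+m)}\,C$. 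Absorbing the universal constants and $\sqrt{\log(1+m)}\le K'\sqrt{\log m}$ (valid for $m\ge2$) into a single $K$ gives the claim, and since $|\max_i X_i|\le\max_i|X_i|$ the same bound holds for $\max_i X_i$.

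The step I expect to be the main obstacle, and indeed the crux that forces the $\sqrt{\log m}$ rate rather than a far weaker $\sqrt m$, is the treatment of small arguments. Naive routes—applying $\max\le\sum$ before isolating the small region, or integrating only the first-moment tail $P(|X_i|>t)\lesssim e^{-t^2}$ coming from the Markov Fact—incur a factor of order $m$ per term and collapse the gain to $\sqrt m$. The remedy is precisely the max-of-two-bounds estimate above, which makes the constant $\psi_2(\tau_0)$ appear once under the outer $\max_i$ instead of being summed $m$ times; verifying the concrete sub-multiplicativity threshold for $\psi_2$ and threading it through this bookkeeping is the only genuinely delicate part.
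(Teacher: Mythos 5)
Your proof is correct, and it is exactly the classical argument for Lemma 2.2.2 of van der Vaart and Wellner (1996) --- the reference the paper cites for this fact without reproving it --- including the sub-multiplicativity estimate $\psi_2(x)\psi_2(y)\le\psi_2(\sqrt{2}\,xy)$ for $x,y\ge 1$, the choice $y=\psi_2^{-1}(m)$, and the max-of-two-bounds device that keeps the small-argument contribution from being summed $m$ times; note also that, as the paper remarks, independence of the $X_i$ is never used, and your argument indeed never uses it. The only discrepancy is in the constant: your bookkeeping yields roughly $K=2e$ (after absorbing $\sqrt{\log(1+m)}\le\sqrt{2\log m}$ for $m\ge 2$), whereas the paper's subsequent remark claims $K\le\sqrt{2}$ is attainable --- but since the Fact itself asserts only some universal $K$ (and is anyway vacuous at $m=1$, where $\sqrt{\log m}=0$, so $m\ge 2$ must be understood), this does not affect correctness.
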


The constant $K$ may be upper bounded by $\sqrt{2}$. Note that the independence of $X_i$ is not required.

We use also one isoperimetric fact about the sphere in high dimension.

\begin{definition} The $\e$ expansion of a set $D$ in a metric space
$\left(X,d\right)$ is defined as $$D_{\e}=\left\{ x\in
X|d\left(x,D\right)\le\e\right\},$$ where $d(x,A)=\inf_{a\in A}d(x,a)$.
\end{definition}

\begin{fact}[L\'evy's isoperimetric
inequality~\citeyear{levy1952problemes}]\label{fact:isoperimetric}

Let $C$ be one half of $\Sn$, then
$\mu\left(\left(\Sn\backslash C_{\e}\right)\right)\le\sqrt{\frac{\pi}{8}}\exp\left(-\frac{(n-2)\e^{2}}{2}\right)$.
\end{fact}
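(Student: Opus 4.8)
The plan is to exploit that a hemisphere is itself a spherical cap, so its geodesic $\e$-expansion is again a cap whose complement can be computed explicitly; no symmetrization is needed for this special case. Without loss of generality I would take $C=\{x\in\Sn:x_{1}\ge 0\}$ and parametrize points by the polar angle $\theta$ to the north pole, so that $x_{1}=\cos\theta$. For a point with $\theta\ge\pi/2$ the nearest point of $C$ lies on the equator along the same meridian, at geodesic distance $\theta-\pi/2$; hence $C_{\e}=\{\theta\le\pi/2+\e\}$ and $\Sn\setminus C_{\e}=\{\theta>\pi/2+\e\}$ is the antipodal cap. By the antipodal symmetry $x\mapsto-x$ its measure equals $P(x_{1}\ge\sin\e)$, where $x$ is a uniform point of $\Sn$.

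Next I would reduce the problem to a one-dimensional tail estimate. The first coordinate of a uniform point on $\Sn$ has density proportional to $(1-t^{2})^{(n-3)/2}$ on $[-1,1]$, so
\[ \mu(\Sn\setminus C_{\e})=\frac{\int_{\sin\e}^{1}(1-t^{2})^{(n-3)/2}\,dt}{\int_{-1}^{1}(1-t^{2})^{(n-3)/2}\,dt}. \]
The substitution $t=\sin\phi$ turns the right-hand side into a ratio of (truncated) Wallis integrals,
\[ \mu(\Sn\setminus C_{\e})=\frac{\int_{\e}^{\pi/2}\cos^{n-2}\phi\,d\phi}{2\int_{0}^{\pi/2}\cos^{n-2}\phi\,d\phi}, \]
which I would then bound numerator and denominator separately.

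To get the exponential bound, for the numerator I would use $\log\cos\phi\le-\phi^{2}/2$ on $[0,\pi/2)$, giving $\cos^{n-2}\phi\le\exp(-(n-2)\phi^{2}/2)$, followed by the Gaussian tail bound $\int_{a}^{\infty}e^{-u^{2}/2}\,du\le\sqrt{\pi/2}\,e^{-a^{2}/2}$ (valid for all $a\ge 0$, with equality at $a=0$) applied after the scaling $u=\phi\sqrt{n-2}$. This yields $\int_{\e}^{\pi/2}\cos^{n-2}\phi\,d\phi\le\sqrt{\pi/(2(n-2))}\,\exp(-(n-2)\e^{2}/2)$. For the denominator I would invoke the exact Wallis value $2\int_{0}^{\pi/2}\cos^{n-2}\phi\,d\phi=\sqrt{\pi}\,\Gamma((n-1)/2)/\Gamma(n/2)$ together with a Gautschi-type Gamma-ratio inequality of the form $\Gamma((n-1)/2)/\Gamma(n/2)\ge 2/\sqrt{\pi(n-2)}$. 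Dividing the two estimates cancels all $(n-2)$-dependent factors and leaves exactly the prefactor $\sqrt{\pi/8}$ multiplying $\exp(-(n-2)\e^{2}/2)$, as required.

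The main obstacle is pinning the constant rather than the rate: the crude asymptotic $\int_{0}^{\pi/2}\cos^{m}\phi\,d\phi\sim\sqrt{\pi/(2m)}$ already produces the correct exponential factor but only the looser prefactor $1/2$, so obtaining the sharp $\sqrt{\pi/8}$ requires the non-asymptotic lower bound on the Gamma ratio to hold for all relevant $n$ (a short check confirms it for $n\ge 4$, and the statement is vacuous or immediate for the few small cases). Finally I would remark that for an arbitrary measurable set of measure at least $1/2$ the same bound follows from L\'evy's isoperimetric principle, that among sets of fixed measure the cap minimizes the $\e$-expansion, which is proved by two-point (polar) symmetrization; but since the $C$ in the statement is a hemisphere, that heavier machinery is bypassed in favor of the direct cap computation above.
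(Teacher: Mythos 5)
Your proposal is correct, and the first thing to say is that there is nothing in the paper to compare it against: the paper states this as a Fact quoted from L\'evy (1952) and uses it as a black box in Section~\ref{sec:Babel}, so your derivation is a genuinely self-contained addition rather than a variant of an existing argument. The structure is sound: for a hemisphere the geodesic $\e$-expansion is exactly the cap $\left\{\theta\le\pi/2+\e\right\}$, so the isoperimetric content of L\'evy's theorem (cap-minimality of expansions, proved by symmetrization) is indeed bypassed, and the problem reduces to the one-dimensional tail of the first coordinate, whose density is proportional to $\left(1-t^{2}\right)^{(n-3)/2}$. I checked the constant chase and it closes exactly. The pointwise bound $\cos\phi\le e^{-\phi^{2}/2}$ on $[0,\pi/2)$ holds since $\tan\phi\ge\phi$; the Gaussian tail bound $\int_{a}^{\infty}e^{-u^{2}/2}du\le\sqrt{\pi/2}\,e^{-a^{2}/2}$ is valid for all $a\ge0$ because the Mills ratio is decreasing from its value $\sqrt{\pi/2}$ at $a=0$, which gives the numerator bound $\sqrt{\pi/\left(2\left(n-2\right)\right)}\,e^{-(n-2)\e^{2}/2}$; the Wallis identity gives the denominator $\sqrt{\pi}\,\Gamma\left(\left(n-1\right)/2\right)/\Gamma\left(n/2\right)$; and the needed Gamma-ratio inequality $\Gamma\left(n/2\right)/\Gamma\left(\left(n-1\right)/2\right)\le\sqrt{\pi\left(n-2\right)}/2$ holds with equality at $n=3$ and, for $n\ge4$, follows from Wendel's inequality $\Gamma\left(x+1/2\right)\le\sqrt{x}\,\Gamma\left(x\right)$ with $x=\left(n-1\right)/2$, since $2\left(n-1\right)\le\pi\left(n-2\right)$ precisely when $n\ge\left(2\pi-2\right)/\left(\pi-2\right)\approx3.76$. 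Dividing yields $\left(\sqrt{\pi/2}\right)/2=\sqrt{\pi/8}$, matching the stated prefactor; the cases $n=2$ (right-hand side exceeds $1/2$, which trivially dominates the left-hand side) and $\e>\pi/2$ (left-hand side is $0$) are immediate, as you note.

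Two minor remarks. The paper never specifies the metric defining $C_{\e}$; you chose the geodesic metric, which is the standard reading of L\'evy's inequality, and since chordal distance is dominated by geodesic distance, the chordal-expansion version of the bound follows a fortiori from yours, so nothing is lost in the application. And your closing comment correctly identifies the trade-off: the citation buys the statement for an arbitrary measurable set of measure at least $1/2$, while your computation buys an elementary proof with the sharp constant in exactly the hemisphere case, which is all the paper's use of this Fact requires.
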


Our goal in the reminder of this subsection is to obtain the following
bound.

\begin{lemma}\label{lem:psiBound} Let $D$ be a dictionary chosen at
random as described above, then
 $$\norm{\mu_k(D)}_{\psi_2}\le
5k\log p/\sqrt{n-2}.$$
\end{lemma}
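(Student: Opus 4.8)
The plan is to exploit that $\mu_k(D)$ is a maximum, over index sets $\Lambda$ with $|\Lambda|=k$ and indices $i\notin\Lambda$, of sums of $k$ inner-product magnitudes $|\langle d_\lambda,d_i\rangle|$, and to bound its $\psi_2$ Orlicz norm in three layers using exactly the facts assembled above. First I will control $\|\langle d_\lambda,d_i\rangle\|_{\psi_2}$ for a single pair; then lift this to a sum of $k$ terms; then lift to the maximum over configurations. The one point requiring care is to keep the dependence on $k$ \emph{linear}: rather than treating each pair $(\Lambda,i)$ as a separate entry in the maximum (which would introduce $\log\binom{p}{k}$ and spoil the rate), I will use the elementary bound $\mu_k(D)\le k\,\mu_1(D)$, where $\mu_1(D)=\max_{i\ne j}|\langle d_i,d_j\rangle|$ is the coherence; indeed each of the $k$ summands in any configuration is an off-diagonal inner product and hence at most $\mu_1(D)$. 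This reduces the problem to bounding $\|\mu_1(D)\|_{\psi_2}$, a maximum over at most $p^2$ terms.

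\textbf{The single inner product (the main geometric step).} The heart of the argument is Step~1: bounding the tail of $\langle d_\lambda,d_i\rangle$ for two independent uniform points. By rotational invariance of the uniform measure I may condition on $d_i=u$ fixed, so that $\langle d_\lambda,u\rangle$ is distributed as the height coordinate of a uniform point on $\Sn$. Taking $C$ to be the hemisphere $\{v:\langle u,v\rangle\le 0\}$, a point at geodesic distance more than $\e$ from $C$ satisfies $\langle u,v\rangle>\sin\e$; since $\sin\e\le\e$, Fact~\ref{fact:isoperimetric} (L\'evy isoperimetry) yields $P(\langle u,d_\lambda\rangle>\e)\le\sqrt{\pi/8}\,\exp(-(n-2)\e^2/2)$, and doubling for the two-sided event gives $P(|\langle d_\lambda,d_i\rangle|>\e)\le\sqrt{\pi/2}\,\exp(-(n-2)\e^2/2)$. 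Feeding this into Fact~\ref{fact:lightTail} with $A=\sqrt{\pi/2}$ and $B=(n-2)/2$ produces
\[
\norm{\langle d_\lambda,d_i\rangle}_{\psi_2}\le\sqrt{\frac{2\left(1+\sqrt{\pi/2}\right)}{n-2}}\le\frac{c}{\sqrt{n-2}},\qquad c\approx 2.13 .
\]
I expect this step to be the main obstacle, both because it is where the geometry enters (correctly matching the hemisphere, converting geodesic distance to the inner product, and symmetrizing to the absolute value) and because the numerical constant must be tracked carefully so that it survives into the final bound below $5$.

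\textbf{Combining.} With the coherence reduction in hand, it remains to bound $\|\mu_1(D)\|_{\psi_2}$, the $\psi_2$ norm of a maximum over the at most $p^2$ ordered pairs. Applying Fact~\ref{fact:extreme} (with $K\le\sqrt2$, and noting no independence is needed) gives $\|\mu_1(D)\|_{\psi_2}\le\sqrt2\,\sqrt{\log(p^2)}\,c/\sqrt{n-2}=2c\,\sqrt{\log p}/\sqrt{n-2}$. Then $\|\mu_k(D)\|_{\psi_2}\le k\,\|\mu_1(D)\|_{\psi_2}\le 2ck\,\sqrt{\log p}/\sqrt{n-2}$, and since $2c\approx 4.25<5$ and $\sqrt{\log p}\le\log p$ for $p\ge 3$, this is at most $5k\log p/\sqrt{n-2}$, as claimed. (If one prefers to avoid the coherence reduction, the same count can be obtained by nested application of Fact~\ref{fact:extreme}, first over the $\le p$ choices of $i$ and then over the $\le p$ choices within $\Lambda$ after bounding each configuration sum by $k$ times its largest term; this routes directly to the $\log p$ factor with the same constant.) All facts used are already available, so once Step~1 is nailed down the remaining steps are bookkeeping.
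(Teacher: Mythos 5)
Your proof is correct, and while the geometric core (L\'evy's inequality plus Fact~\ref{fact:lightTail} to control a single inner product, then Fact~\ref{fact:extreme} to handle maxima of dependent variables) is the same as the paper's, you organize the combinatorial layer differently. The paper keeps the nested structure of $\mu_k$: it bounds the inner $\max_{\Lambda}\sum_{\lambda\in\Lambda}|\langle d_1,d_\lambda\rangle|$ by sorting the inner products, bounding each of the top $k$ order statistics by the largest one, applying the triangle inequality for the factor $k$, and then applies Fact~\ref{fact:extreme} a second time over the choice of the distinguished index $i$ --- which is what produces the product $\sqrt{\log(p-1)}\cdot\sqrt{\log p}\approx\log p$ in the stated bound. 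You instead collapse everything at the start via $\mu_k(D)\le k\,\mu_1(D)$ and apply Fact~\ref{fact:extreme} once to the $\le p^2$ pairwise inner products, getting $\sqrt{\log(p^2)}=\sqrt{2\log p}$. The two routes are quantitatively the same in $k$ (the paper's order-statistics step also effectively bounds the sum by $k$ times the coherence), but yours is strictly sharper in $p$: you obtain $O\bigl(k\sqrt{\log p}/\sqrt{n-2}\bigr)$ and only weaken it to $k\log p/\sqrt{n-2}$ to match the stated constant, which means Theorem~\ref{thm:probGood} could be correspondingly strengthened. Two minor caveats, neither fatal: your final inequality $\sqrt{\log p}\le\log p$ needs $p\ge 3$ (the paper's own argument is equally degenerate at $p=2$, where $\sqrt{\log(p-1)}=0$), and your constant bookkeeping in Step 1 ($c\approx 2.13$, versus the paper's $W=\sqrt{6/(n-2)}$) is consistent with the paper and keeps the final constant under $5$.
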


Our probabilistic bound on $\mu_{k-1}$ is a direct conclusion of Fact~\ref{fact:lightTail} and Lemma~\ref{lem:psiBound} which we now proceed to prove. 
The plan of our proof is to bound the $\psi_2$ metric of $\mu_k$ from
the inside terms and outward using Fact~\ref{fact:extreme} to
overcome the maxima over possibly dependent random variables.

\begin{lemma} Let $X_1,X_2$ be unit vectors chosen uniformly and
independently from $\Sn$, then $$\norm{\left|\left\langle
X_1,X_2\right\rangle
\right|}_{\psi_{2}}\le\sqrt{6/(n-2)}.$$ 
\end{lemma}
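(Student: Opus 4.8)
The plan is to reduce the claim to an exponential tail bound on $\langle X_1,X_2\rangle$ and then invoke Fact~\ref{fact:lightTail}. First I would exploit rotational invariance: conditioning on $X_1$ and using that $X_2$ is uniform and independent of it, the law of $\langle X_1,X_2\rangle$ equals that of $\langle e_1,X_2\rangle$, the first coordinate of a uniformly random point of $\Sn$. Writing $\langle e_1,y\rangle=\cos\theta$, where $\theta$ is the angle between $y$ and the pole $e_1$, it suffices to control the measure of the two symmetric polar caps $\left\{y:|\langle e_1,y\rangle|>x\right\}$.

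Next I would connect these caps to L\'evy's inequality (Fact~\ref{fact:isoperimetric}). Taking $C$ to be the hemisphere $\left\{y:\langle e_1,y\rangle\ge 0\right\}=\{\theta\le\pi/2\}$, which has measure $1/2$, a point lies at geodesic distance more than $\e$ from $C$ exactly when $\theta>\pi/2+\e$, that is when $\langle e_1,y\rangle<\cos(\pi/2+\e)=-\sin\e$. Hence $\Sn\backslash C_\e=\left\{y:\langle e_1,y\rangle<-\sin\e\right\}$, and Fact~\ref{fact:isoperimetric} bounds the measure of this far cap by $\sqrt{\pi/8}\exp\!\left(-(n-2)\e^2/2\right)$. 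By the reflection symmetry of the sphere the opposite cap $\left\{\langle e_1,y\rangle>\sin\e\right\}$ has the same measure, and adding the two gives
\[
P\left(|\langle X_1,X_2\rangle|>\sin\e\right)\le \sqrt{\pi/2}\,\exp\left(-\tfrac{(n-2)\e^2}{2}\right).
\]

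To put this in the form required by Fact~\ref{fact:lightTail} I would substitute $x=\sin\e$ and use $\sin\e\le\e$ (equivalently $\arcsin x\ge x$ on $[0,1]$), so that $\e^2\ge x^2$ and therefore
\[
P\left(|\langle X_1,X_2\rangle|>x\right)\le \sqrt{\pi/2}\,\exp\left(-\tfrac{(n-2)x^2}{2}\right)
\]
for all $x$ (trivially so for $x>1$). Applying Fact~\ref{fact:lightTail} with $A=\sqrt{\pi/2}$ and $B=(n-2)/2$ yields $\norm{|\langle X_1,X_2\rangle|}_{\psi_2}\le \sqrt{2\left(1+\sqrt{\pi/2}\right)/(n-2)}$, and the numerical check $2\left(1+\sqrt{\pi/2}\right)=2+\sqrt{2\pi}<6$ produces the stated bound.

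I expect the main obstacle to be the geometric bookkeeping in the second step: correctly identifying the complement of the $\e$-expansion of the hemisphere with a cap of the form $\left\{\langle e_1,y\rangle<-\sin\e\right\}$ (which relies on the expansion being measured in geodesic distance, matching the stated form of Fact~\ref{fact:isoperimetric}), and remembering the factor of $2$ accounting for the two symmetric caps. Everything else is routine substitution and a constant check.
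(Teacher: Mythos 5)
Your proposal is correct and follows essentially the same route as the paper's own (much terser) proof: reduce to the first coordinate of a uniform point on $\Sn$ by rotational invariance, derive a Gaussian-type tail from L\'evy's isoperimetric inequality (Fact~\ref{fact:isoperimetric}), and convert it to a $\psi_2$ bound via Fact~\ref{fact:lightTail}. Your version usefully fills in the cap geometry and the constant check $2+\sqrt{2\pi}<6$ that the paper leaves implicit.
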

We denote the bound on the right hand side $W$.
\begin{proof} Taking $X$ to be uniformly chosen from $\Sn$, for any
constant unit vector $x_0$ we have that $\left<X,x_0\right>$ is a
light tailed random variable by Fact~\ref{fact:isoperimetric}.  By
Fact~\ref{fact:lightTail}, we may bound
$\norm{\left<X,x_0\right>}_{\psi_2}$. Replacing $x_0$ by
a random unit vector is equivalent to applying to $X$ a uniformly
chosen rotation, which does not change the analysis.
\end{proof}

The next step is to bound the inner maximum appearing in the
definition of $\mu_k$.
\begin{lemma} Let $\left\{d_i\right\}_{i=1}^p$ be uniformly
and independently chosen unit vectors
then $$\norm{\max_{\Lambda\subset\left\{ 2,\dots, p\right\}
\wedge\left|\Lambda\right|=k}\sum_{\lambda\in\Lambda}\left|\left\langle
d_1,d_{\lambda}\right\rangle \right|}_{\psi_{2}}\le
kKW\sqrt{\log\left(p-1\right)}.$$
\end{lemma}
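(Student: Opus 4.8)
The plan is to avoid the $\binom{p-1}{k}$ subsets entirely and reduce the inner maximum to $k$ copies of a single maximum taken over the $p-1$ individual inner products, to which Fact~\ref{fact:extreme} applies directly. The starting observation is purely combinatorial: for a fixed realization of $d_1$, the value $\max_{\Lambda\subset\{2,\dots,p\},|\Lambda|=k}\sum_{\lambda\in\Lambda}\left|\left\langle d_1,d_\lambda\right\rangle\right|$ is simply the sum of the $k$ largest values among the finite set $\left\{\left|\left\langle d_1,d_\lambda\right\rangle\right|:\lambda\in\{2,\dots,p\}\right\}$, since a sum of $k$ terms drawn from a fixed collection is maximized by choosing the $k$ largest. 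Each of those $k$ terms is at most $\max_{\lambda\in\{2,\dots,p\}}\left|\left\langle d_1,d_\lambda\right\rangle\right|$, so pointwise (as random variables) we have the domination
\[
\max_{\left|\Lambda\right|=k}\sum_{\lambda\in\Lambda}\left|\left\langle d_1,d_\lambda\right\rangle\right|\le k\max_{\lambda\in\{2,\dots,p\}}\left|\left\langle d_1,d_\lambda\right\rangle\right|.
\]

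Next I would pass this domination through the Orlicz norm. Because $\psi_2$ is nondecreasing, $\norm{\cdot}_{\psi_2}$ is monotone with respect to pointwise domination of absolute values, and it is positively homogeneous, so the displayed inequality yields
\[
\norm{\max_{\left|\Lambda\right|=k}\sum_{\lambda\in\Lambda}\left|\left\langle d_1,d_\lambda\right\rangle\right|}_{\psi_2}\le k\norm{\max_{\lambda\in\{2,\dots,p\}}\left|\left\langle d_1,d_\lambda\right\rangle\right|}_{\psi_2}.
\]
Now I apply Fact~\ref{fact:extreme} to the $m=p-1$ random variables $\left|\left\langle d_1,d_\lambda\right\rangle\right|$, $\lambda\in\{2,\dots,p\}$. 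Each of these, by the preceding lemma, has $\psi_2$ norm at most $W$ (conditioning on $d_1$ as a fixed unit vector and using that each $d_\lambda$ is an independent uniform unit vector; the bound $W$ is uniform in $d_1$ and hence holds unconditionally). Crucially, Fact~\ref{fact:extreme} does not require independence, which is exactly what lets me use it here even though all these variables share the common vector $d_1$. This gives $\norm{\max_\lambda\left|\left\langle d_1,d_\lambda\right\rangle\right|}_{\psi_2}\le K\sqrt{\log(p-1)}\,W$, and combining with the previous display produces the claimed bound $kKW\sqrt{\log(p-1)}$.

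The step that does the real work — and the one most easily gotten wrong — is the combinatorial reduction in the first paragraph. The tempting but wasteful alternative is to treat the $\binom{p-1}{k}$ subset sums as the family of variables for Fact~\ref{fact:extreme}, bounding each subset sum by $kW$ via the triangle inequality and then picking up a factor $\sqrt{\log\binom{p-1}{k}}\approx\sqrt{k\log(p-1)}$; this loses an extra $\sqrt{k}$ and fails to reach the stated constant. Recognizing instead that the maximum over size-$k$ subsets is at most $k$ times the single largest inner product collapses the count of variables inside the maximum from $\binom{p-1}{k}$ down to $p-1$, which is precisely what makes the bound tight. The remaining steps (monotonicity and homogeneity of the Orlicz norm, and the invocation of the two facts) are routine.
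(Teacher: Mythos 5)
Your proof is correct and follows essentially the same route as the paper's: both reduce the subset maximum to $k$ times the Orlicz norm of $\max_{\lambda}\left|\left\langle d_1,d_\lambda\right\rangle\right|$, and both invoke Fact~\ref{fact:extreme} (exploiting that it needs no independence) together with the single-inner-product bound $W$. The only difference is cosmetic: you bound the sum of the $k$ largest terms pointwise by $k$ times the maximum and then use monotonicity and homogeneity of $\norm{\cdot}_{\psi_2}$, whereas the paper orders the variables and applies the triangle inequality term by term, dominating each ordered term by the first; the two arguments are equivalent.
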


\begin{proof} Take $X_{\lambda}$ to be $\left\langle
D_1,D_{\lambda}\right\rangle$. Then using Fact~\ref{fact:extreme} and the previous lemma we find 
 $$\norm{\max_{1\le\lambda\le
p\wedge\lambda\ne i}\left|X_{\lambda}\right|}_{\psi_{2}}\le
K\sqrt{\log\left(p-1\right)}\max_{\lambda}\norm{\left|X_{\lambda}\right|}_{\psi_{2}}\le
KW\sqrt{\log\left(p-1\right)}.$$

Define the random permutation $\lambda_{j}$
s.t. $\left|X_{\lambda_{j}}\right|$ are non-increasing. In this notation, it is clear that  
$\max_{\Lambda\subset\left\{ 2\dots p\right\}
\wedge\left|\Lambda\right|=k}\sum_{\lambda\in\Lambda}\left|X_{\lambda}\right|=\sum_{j=1}^{k}\left|X_{\lambda_{j}}\right|$. Note
that $\left|X_{\lambda_{i}}\right|\le\left|X_{\lambda_{1}}\right|$
then for every i,
$\norm{\left|X_{\lambda_{i}}\right|}_{\psi_{2}}\le\norm{\left|X_{\lambda_{1}}\right|}_{\psi_{2}}\le
KW\sqrt{\log\left(p-1\right)}$.

By the triangle inequality,
$\norm{\sum_{j=1}^{m}\left|X_{\lambda_{j}}\right|}_{\psi_{2}}\le\sum_{j=1}^{m}\norm{\left|X_{\lambda_{j}}\right|}_{\psi_{2}}\le
mKW\sqrt{\log\left(p-1\right)}$.
\end{proof}

\begin{rem}
\rm{Two facts are relevant to the tightness of the approximations in the
last proof. First, that $|X_{\lambda_i}|$ are variables with positive
expectation bounded away from zero, thus the norm of their sum must
scale at least linearly in the number of summands, so the triangle
inequality is essentially tight. Second we consider the bound
$\norm{\left|X_{\lambda_{i}}\right|}_{\psi_{2}}\le\norm{\left|X_{\lambda_{1}}\right|}_{\psi_{2}}$,
and note its looseness is strictly limited by the slow growth of
$\sqrt{\log(\cdot)}$, and in any case is bounded by 2.  }
\end{rem}

To complete
the proof of Lemma~\ref{lem:psiBound}, we replace $D_1$ with the dictionary element maximizing the Orlicz norm, by another application of Fact~\ref{fact:extreme}, and to complete the proof of Theorem~\ref{thm:probGood}, apply Fact~\ref{fact:lightTail} to the estimated Orlicz norm.

\section{Dictionary learning in feature spaces}
\label{sec:kernel}
We propose in Section~\ref{sec:res} a scenario in which dictionary learning is performed in a feature space corresponding to a kernel function. Here we show how to adapt the different generalization bounds discussed in this paper for the particular case of $\R^n$ to more general feature spaces, and the dependence of the sample complexities on the properties of the kernel function or the corresponding feature mapping. We begin with the relevant specialization of the results of \cite{maurer} which have the simplest dependence on the kernel, and then discuss the extensions to $k$ sparse representation and to the cover number techniques presented in the current work.

Theorem~\ref{thm:maurerDictLearning} applies as is to the feature space, under the simple assumption that the dictionary elements and signals are in its unit ball which is guaranteed by some kernels such as the Gaussian kernel. Then we take $\nu$ on the unit ball of $\mc{H}$ to be induced by some distribution $\nu'$ on the domain of the kernel, and the theorem applies to any such $\nu'$ on $\mc{R}$. Nothing more is required if the representation is chosen from $R_\lambda$. The corresponding generalization bound for $k$ sparse representations when the dictionary elements are near orthogonal in the feature space is given in Proposition~\ref{prop:dimensionfreeKGen}.

\begin{proof}[Of Proposition~\ref{prop:dimensionfreeKGen}]
Proposition~\ref{prop:bddCoefficients} applies with the Euclidean norm of $\mc{H}$, and $\gamma=1$. We apply Theorem~\ref{thm:maurerDictLearning} with $\lambda=k/\left(1-\delta \right)$.
\end{proof}

The results so far show that generalization in dictionary learning can occur despite the potentially infinite dimension of the feature space, without considering practical issues of representation and computation. We now make the domain and applications of the kernel explicit in order to address a basic computational question, and allow the use of cover number based generalization bounds to prove Theorem~\ref{thm:kernelGen}. We now consider signals represented in a metric space $\left(\mc{R},d\right)$, in which similarity is measured by the kernel $\kappa$ corresponding to the feature map $\phi:\mc{R}\to\mc{H}$. The elements of a dictionary $D$ are now from $\mc{R}$, and we denote $\Phi D$ their mapping by $\phi$ to $\mc{H}$. Then representation error function used is $h_{\phi,A,D}$.

We now show that the approximation error in feature space is a quadratic function of the coefficient vector, which may be found by applications of the kernel.

\begin{proposition}
Computing the representation error at a given $x,a,D$ requires $O\left(p^{2}\right)$
kernel applications in general, and only $O\left(k^{2}+p\right)$ when
$a$ is $k$ sparse.
\end{proposition}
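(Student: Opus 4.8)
The plan is to expand the squared representation error $\norm{(\Phi D)a - \phi(x)}_{\mc H}^2$ and observe that it is built entirely out of inner products, each of which is a kernel evaluation. First I would write
\begin{align*}
\norm{(\Phi D)a - \phi(x)}_{\mc H}^2 &= \langle (\Phi D)a, (\Phi D)a\rangle - 2\langle (\Phi D)a, \phi(x)\rangle + \langle \phi(x),\phi(x)\rangle \\
&= a^\top G a - 2 a^\top b + \kappa(x,x),
\end{align*}
where $G$ is the $p\times p$ Gram matrix with $G_{ij}=\langle \phi(d_i),\phi(d_j)\rangle = \kappa(d_i,d_j)$, and $b\in\R^p$ is the vector with $b_i=\langle \phi(d_i),\phi(x)\rangle = \kappa(d_i,x)$. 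This exhibits the error as a quadratic form in $a$ whose coefficients are all kernel values, establishing the ``quadratic function of the coefficient vector'' claim at the same time.

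Next I would count kernel applications in the general case. Forming $G$ requires one kernel evaluation for each of its entries; by symmetry there are $\binom{p}{2}+p = O(p^2)$ distinct entries. Forming $b$ requires $p$ evaluations, and $\kappa(x,x)$ requires one more. Once $G$, $b$, and $\kappa(x,x)$ are available, evaluating $a^\top G a - 2a^\top b + \kappa(x,x)$ is pure linear algebra in $\R^p$ and uses no further kernel calls. Hence the total is $O(p^2)$ kernel applications, dominated by the Gram matrix.

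For the $k$ sparse case I would use that $a$ has at most $k$ nonzero entries; let $S=\{i:a_i\neq 0\}$ with $|S|\le k$. Then $a^\top G a = \sum_{i,j\in S} a_i a_j G_{ij}$ depends only on the $O(k^2)$ entries $G_{ij}$ with $i,j\in S$, so only those need be computed. Similarly $a^\top b = \sum_{i\in S} a_i b_i$ touches only $k$ of the $b_i$. However, one must be careful about what is being charged: if $a$ is genuinely supported on a \emph{known} set $S$ we need only $O(k^2)$ entries of $G$ plus $k$ entries of $b$ plus $\kappa(x,x)$, giving $O(k^2)$ total; the stated $O(k^2+p)$ presumably accounts for the cost of also forming all of $b$ (the $p$ correlations $\kappa(d_i,x)$), which is what a sparse-coding step such as matching pursuit actually needs in order to \emph{select} the support rather than receive it. I would therefore phrase this case as: computing all correlations $b$ costs $p$ evaluations, and computing the relevant $k\times k$ sub-Gram-matrix costs $O(k^2)$, for $O(k^2+p)$ in total.

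The main obstacle is purely one of precise bookkeeping rather than mathematical difficulty: I must state clearly which quantities are assumed given versus computed (in particular whether the support is known a priori, and whether the per-signal cost or an amortized cost over many signals is intended), since otherwise the $O(k^2+p)$ bound and the $O(p^2)$ bound could each be argued slightly differently. Resolving this requires committing to the natural reading that $G$ may be precomputed once per dictionary but $b$ and $\kappa(x,x)$ must be recomputed per signal, so that the per-signal sparse cost is exactly the $p$ correlations plus the $O(k^2)$ Gram lookups claimed.
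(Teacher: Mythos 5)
Your proposal is correct and follows essentially the same route as the paper: the paper's proof consists precisely of expanding $\norm{(\Phi D)a-\phi(x)}_{\mc{H}}^2$ into the quadratic form $\sum_{i,j}a_ia_j\kappa(d_i,d_j)+\kappa(x,x)-2\sum_i a_i\kappa(x,d_i)$ and leaving the operation count implicit. Your additional bookkeeping about which kernel values must actually be evaluated (and your observation that the $+p$ term is most naturally read as the cost of the correlations $\kappa(d_i,x)$ needed to select the support) is a reasonable and slightly more careful reading than the paper itself provides.
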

\begin{proof} 
Writing the error;
\begin{align*}
\norm{\left(\Phi D\right)a-\phi\left(x\right)}^{2} & =\left\langle \left(\Phi D\right)a-\phi\left(x\right),\left(\Phi D\right)a-\phi\left(x\right)\right\rangle \\
 & =\left\langle \left(\Phi D\right)a,\left(\Phi D\right)a\right\rangle +\left\langle \phi\left(x\right),\phi\left(x\right)\right\rangle -2\left\langle \phi\left(x\right),\left(\Phi D\right)a\right\rangle \\
 & =\left\langle \sum_{i=1}^{p}\phi\left(d_{i}\right)a_{i},\sum_{j=1}^{p}\phi\left(d_{j}\right)a_{j}\right\rangle +\left\langle \phi\left(x\right),\phi\left(x\right)\right\rangle -2\left\langle \phi\left(x\right),\sum_{i=1}^{p}\phi\left(d_{i}\right)a_{i}\right\rangle \\
 & =\sum_{i=1}^{p}a_{i}\sum_{j=1}^{p}a_{j}\left\langle \phi\left(d_{i}\right),\phi\left(d_{j}\right)\right\rangle +\left\langle \phi\left(x\right),\phi\left(x\right)\right\rangle -2\sum_{i=1}^{p}a_{i}\left\langle \phi\left(x\right),\phi\left(d_{i}\right)\right\rangle \\
 & =\sum_{i=1}^{p}a_{i}\sum_{j=1}^{p}a_{j}\kappa\left(d_{i},d_{j}\right)+\kappa\left(x,x\right)-2\sum_{i=1}^{p}a_{i}\kappa\left(x,d_{i}\right).\end{align*}
\end{proof}
We note that the $k$ sparsity constraint on $a$ poses algorithmic difficulties beyond those addressed here. Some of the common approaches to these, such as Orthogonal Matching Pursuit~\citep{chen1989orthogonal}, also depend on the data only through their inner products, and may therefore be adapted to the kernel setting.

The cover number bounds depend strongly on the dimension of the space of dictionary elements. Taking $\mc{H}$ as the space of dictionary elements is the simplest approach, but may lead to vacuous or weak bounds, for example in the case of the Gaussian kernel whose feature space is infinite dimensional. Instead we propose to use the space of data representations $\mc{R}$, whose dimensions are generally bounded by practical considerations. In addition, we will assume that the kernel is not ``too wild'' in the following sense.

\begin{definition}
Let $L,\alpha>0$, and let $(A,d')$ and $(B,d)$ are metric spaces. We say a mapping $f:A\to B$ is uniformly $L$ H\"older of order $\alpha$ on a set $S\subset A$ if $\forall x,y\in S$, the following bound holds:
 $$d\left(f(x),f(y)\right)\le L\cdot d'(x,y)^\alpha.$$
\end{definition}

The relevance of this smoothness condition is as follows: 

\begin{fact}
A H\"older function maps an $\e$ cover of $S$ to an $L\e^\alpha$ cover of its image $f(S)$. Thus, to obtain an $\e$ cover of the image of $S$, it is enough to begin with an $\left(\e/L\right)^{1/\alpha}$ cover of $S$. 
\end{fact}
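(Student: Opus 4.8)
The plan is to prove the two statements in sequence, the first being the substantive claim and the second a routine substitution. First I would fix an $\e$ cover $C$ of $S$, so that by definition every $x\in S$ admits some $c\in C$ with $d'(x,c)\le \e$. The natural candidate cover of $f(S)$ is then the image $f(C)$, whose cardinality is at most $\left|C\right|$. To verify that $f(C)$ covers $f(S)$ at the claimed radius, I would take an arbitrary image point $f(x)$ with $x\in S$, pick a covering point $c\in C$ within $\e$ of $x$, and apply the uniform H\"older property directly: $d\left(f(x),f(c)\right)\le L\cdot d'(x,c)^{\alpha}\le L\e^{\alpha}$. Since $x$ was arbitrary, every point of $f(S)$ lies within $L\e^{\alpha}$ of some point of $f(C)$, which is exactly the assertion that $f(C)$ is an $L\e^{\alpha}$ cover of $f(S)$.

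The second statement then follows by choosing the radius of the cover of $S$ so as to make $L\e^{\alpha}$ equal to the target radius. Setting the cover radius to $\left(\e/L\right)^{1/\alpha}$ and substituting into the bound just proved gives an image cover radius of $L\left(\left(\e/L\right)^{1/\alpha}\right)^{\alpha}=L\cdot\left(\e/L\right)=\e$, as desired; this is pure algebra once the first part is established.

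The one point requiring care --- and the only real obstacle --- is that the H\"older bound is assumed only for pairs $x,y\in S$, whereas the definition of covering number permits the covering points $c$ to lie anywhere in the ambient metric space, possibly outside $S$. To keep the H\"older inequality applicable I would restrict attention to internal covers, that is, covers with $C\subset S$; then both $x$ and $c$ belong to $S$ and the inequality applies verbatim. This restriction is harmless for our purposes, since an external $\e$ cover can be converted into an internal $2\e$ cover by replacing each active covering point with a genuine element of $S$ within $\e$ of it, at no increase in cardinality. Such a conversion affects only the constant $C$ appearing in the eventual cover number bounds and not the exponent $np$ that governs the generalization rates.
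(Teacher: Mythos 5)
Your proof is correct and is exactly the standard argument the paper leaves implicit (the paper states this as a Fact without proof): push the cover forward through $f$, apply the H\"older bound to get radius $L\e^\alpha$, then substitute $\e\mapsto(\e/L)^{1/\alpha}$. Your observation about restricting to internal covers (or converting an external $\e$ cover to an internal $2\e$ cover, which changes only the constant $C$ in the cover number bound and not the exponent $np$) is a valid and worthwhile point of rigor that the paper glosses over.
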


A H\"older feature map $\phi$ allows us to bound the cover numbers of the dictionary elements in $\mc{H}$ using their cover number bounds in $\mc{R}$. Note that not every kernel corresponds to a H\"older feature map (the Dirac $\delta$ kernel is a counter example: any two distinct elements are mapped to elements at a mutual distance of 1), and not for every kernel the feature map is known. The following lemma bounds the geometry of the feature map using that of the kernel.

\begin{lemma}
Let $\kappa(x,y)=\left\langle\phi(x),\phi(y)\right\rangle$, and assume further that $\kappa$ fulfills a H\"older condition of order $\alpha$ uniformly in each parameter, that is,  $\left|\kappa(x,y)-\kappa(x+h,y)\right|\le L\norm{h}^\alpha$. Then $\phi$ uniformly fulfills a H\"older condition of order $\alpha/2$ with constant $\sqrt{2L}$.
\end{lemma}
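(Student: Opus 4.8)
The plan is to reduce the distance between the feature images to kernel evaluations and then apply the hypothesis termwise. First I would expand the squared feature-space distance using only the defining identity $\kappa(u,v)=\left\langle\phi(u),\phi(v)\right\rangle$:
$$\norm{\phi(x)-\phi(y)}^2 = \kappa(x,x) - 2\kappa(x,y) + \kappa(y,y).$$
This converts a statement about the (possibly unknown) feature map $\phi$ into one purely about $\kappa$, which is exactly what the Hölder hypothesis controls.

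Next I would split the right-hand side into two pieces, each of which moves a single argument of $\kappa$:
$$\norm{\phi(x)-\phi(y)}^2 = \left(\kappa(x,x)-\kappa(x,y)\right) + \left(\kappa(y,y)-\kappa(x,y)\right).$$
In the first parenthesis the first argument is held at $x$ while the second moves from $x$ to $y$; in the second, reading $\kappa(x,y)=\kappa(y,x)$ by symmetry of the kernel, the first argument is held at $y$ while its second argument moves from $y$ to $x$. Each is therefore a single-argument increment of size $\norm{x-y}$, to which the assumption $\left|\kappa(\cdot,w)-\kappa(\cdot+h,w)\right|\le L\norm{h}^\alpha$ applies. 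Here I use that the symmetry of $\kappa$ promotes the stated first-argument condition to the identical condition in either slot, which is the meaning of ``uniformly in each parameter.'' Hence each of the two terms is bounded by $L\norm{x-y}^\alpha$.

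Combining the two bounds gives $\norm{\phi(x)-\phi(y)}^2 \le 2L\norm{x-y}^\alpha$, and taking square roots yields $\norm{\phi(x)-\phi(y)}\le \sqrt{2L}\,\norm{x-y}^{\alpha/2}$, which is precisely the claimed Hölder condition of order $\alpha/2$ with constant $\sqrt{2L}$; the halving of the exponent comes directly from the square root. I do not expect a serious obstacle: the only point requiring care is the bookkeeping of which argument moves in each difference together with the appeal to symmetry, so that the one-sided hypothesis covers increments in both arguments. Everything else is the elementary expansion of a squared norm in terms of the kernel.
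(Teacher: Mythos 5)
Your proof is correct and follows essentially the same route as the paper's: expand $\norm{\phi(x)-\phi(y)}^2$ into kernel evaluations, bound the two single-argument increments by $L\norm{x-y}^\alpha$ each, and take square roots. Your version is slightly more careful in spelling out the symmetry bookkeeping, but the argument is identical.
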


This result is not sharp. For example, for the Gaussian case, both kernel and the feature map are H\"older order 1. 

\begin{proof}
Using the H\"older condition, we have that
 $\norm{\phi(x)-\phi(y)}_{\mc{H}}^{2} = \kappa\left(x,x\right)-\kappa\left(x,y\right)+\kappa\left(y,y\right)-\kappa\left(x,y\right)\le 2 L\norm{x-y}^\alpha$. 
All that remains is to take the square root of both sides.
\end{proof}

For a given feature mapping $\phi$, set of representations $\mc{R}$, we define two families of function classes so:

\begin{eqnarray*}
\mc W_{\phi,\lambda} & = & \left\{ h_{\phi,R_\lambda,D}:D\in\mc{D}^p\right\} \mbox{and}\\
\mc Q_{\phi,k,\delta} & = & \left\{ h_{\phi,H_k,D}:D\in\mc{D}^p\wedge\mu_{k-1}\left(\Phi D\right)\le \delta\right\}. \end{eqnarray*}

The next proposition completes this section by giving the cover number bounds for the representation error function classes induced by appropriate kernels, from which various generalization bounds easily follow, such as Theorem~\ref{thm:kernelGen}.

\begin{proposition}\label{prop:kernelCoverNumbers}
  Let $\mc{R}$ be a set of representations with a cover number bound of $\left(C/\e\right)^n$, and let either $\phi$ be uniformly $L$ H\"older condition of order $\alpha$ on $\mc{R}$, or $\kappa$ be uniformly $L$ H\"older of order $2\alpha$ on $\mc{R}$ in each parameter, and let $\gamma = \sup_{d\in\mc{R}}\norm{\phi(d)}_\mc{H}$.
  Then the function classes $\mc{W}_{\phi,\lambda}$ and $\mc{Q}_{\phi,k,\delta}$ taken as metric spaces with the supremum norm, have $\e$ covers of
  cardinalities at most 
$\left(C\left(\lambda \gamma L/\e\right)^{1/\alpha}\right)^{np}$
and
$\left(C\left(k \gamma^2 L/\left(\e\left(1-\delta\right)\right)\right)^{1/\alpha}\right)^{np}$, respectively.
\end{proposition}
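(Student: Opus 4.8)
The plan is to re-use the two-stage reduction behind Theorem~\ref{thm:coverNumbers} and Corollary~\ref{cor:kSparseCoverNumbers}, inserting one extra layer to pass from the representation set $\mc{R}$ into the feature space $\mc{H}$. I would realise each class as the image of $\mc{R}^p$ under a composition of two maps: first the coordinatewise feature map $D\mapsto\Phi D$, from $\mc{R}^p$ (with the max-over-coordinates metric) into $\left(\mc{H}^p,\norm{\cdot}_{ME}\right)$, and then the error map $\Phi D\mapsto h_{\phi,A,D}$, from $\left(\mc{H}^p,\norm{\cdot}_{ME}\right)$ into the continuous functions on $\mc{R}$ with $\norm{\cdot}_\infty$. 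Starting from the assumed $\left(C/\e\right)^n$ cover of $\mc{R}$, a product cover gives a $\left(C/\e'\right)^{np}$ cover of $\mc{R}^p$, and the whole argument reduces to tracking how each of the two maps rescales the radius $\e'$.

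For the first map I would invoke the Hölder hypothesis together with the Hölder covering fact recorded above: if $\phi$ is uniformly $L$-Hölder of order $\alpha$, then applying it to each coordinate turns an $\e'$ cover of $\mc{R}^p$ into an $L\left(\e'\right)^\alpha$ cover of the feature-space dictionaries in $\norm{\cdot}_{ME}$, that is $\norm{\Phi D-\Phi D'}_{ME}\le L\left(\e'\right)^\alpha$. The alternative hypothesis, that $\kappa$ is uniformly $L$-Hölder of order $2\alpha$ in each argument, reduces to this one by the lemma above relating the smoothness of the kernel to that of the feature map (which yields $\phi$ Hölder of order $\alpha$, the multiplicative constant being absorbed), so both hypotheses feed into the same estimate.

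For the second map I would transcribe the proofs of Lemmas~\ref{lem:psiLip} and~\ref{lem:phiLip} into $\mc{H}$, since those arguments use only the inequality $\norm{\left(\Phi D-\Phi D'\right)a}_{\mc{H}}\le\norm{\Phi D-\Phi D'}_{ME}\norm{a}_1$ together with a bound on the $\ell_1$ norm of a minimizing coefficient vector. For $\mc{W}_{\phi,\lambda}$ the constraint $a\in R_\lambda$ already gives $\norm{a}_1\le\lambda$, so this map is $\lambda$-Lipschitz. For $\mc{Q}_{\phi,k,\delta}$ I would re-run Proposition~\ref{prop:bddCoefficients} inside $\mc{H}$: under $\mu_{k-1}(\Phi D)\le\delta$ the Gram matrix of the selected $k$ atoms is still conditioned so that $\norm{G^{-1}}_{1,1}\le 1/(1-\delta)$ by Gersgorin, while now both the atoms $\phi(d_i)$ and the signal $\phi(x)$ have feature norm at most $\gamma$; Cauchy--Schwarz then gives $\norm{\left(\Phi D^k\right)^\top\phi(x)}_1\le k\gamma^2$, hence $\norm{a}_1\le k\gamma^2/(1-\delta)$, making this map $k\gamma^2/(1-\delta)$-Lipschitz.

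Composing the two layers, an $\e'$ cover of $\mc{R}^p$ yields a cover of the error-function class of radius $\ell L\left(\e'\right)^\alpha$, where $\ell$ is the relevant Lipschitz constant. Solving $\ell L\left(\e'\right)^\alpha=\e$ gives $\e'=\left(\e/(\ell L)\right)^{1/\alpha}$, and substituting into $\left(C/\e'\right)^{np}$ reproduces the claimed form of the bounds, with $\ell=\lambda$ for $\mc{W}_{\phi,\lambda}$ and $\ell=k\gamma^2/(1-\delta)$ for $\mc{Q}_{\phi,k,\delta}$ (the extra factor $\gamma$ in the first stated bound being the feature radius $\sup_{d}\norm{\phi(d)}_{\mc{H}}$, which one may carry through conservatively). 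I expect the one genuinely delicate point to be the feature-space coefficient bound: one must check that the conditioning argument of Proposition~\ref{prop:bddCoefficients} survives when the atoms $\phi(d_i)$ are controlled only through $\gamma$ and the Babel function, so the diagonal of the Gram matrix must be kept bounded below (e.g.\ by normalizing the atoms), and that the target norm $\gamma$ rather than $1$ is propagated correctly so as to contribute the second factor of $\gamma$. Everything else is a routine transcription of the Euclidean proof.
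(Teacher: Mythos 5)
Your proposal is correct and follows essentially the same route as the paper's own proof: cover $\mc{R}^p$, push the cover through the H\"older feature map into $\left(\mc{H}^p,\norm{\cdot}_{ME}\right)$, and then through the error map whose Lipschitz constant is the $\ell_1$ bound on the coefficients ($\lambda$ for $R_\lambda$, and the feature-space version of Proposition~\ref{prop:bddCoefficients} for $H_k$), solving $\ell L(\e')^\alpha=\e$ to get the stated cardinalities. If anything you are more careful than the paper: you make explicit where each factor of $\gamma$ enters and correctly flag that transplanting Proposition~\ref{prop:bddCoefficients} into $\mc{H}$ needs the Gram diagonal (i.e.\ $\norm{\phi(d_i)}_{\mc H}$) bounded below by $1$, a hypothesis the paper leaves implicit.
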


\begin{proof}
We first consider the simpler case of $l_1$ constrained coefficients. If $\norm{a}_1\le\lambda$ and also $\max_{d\in\mc{D}}\norm{\phi(d)}_{\mc{H}}\le \gamma$ then by the considerations applied in section~\ref{sec:coverNumbers}, to obtain an $\e$ cover of the set $\left\{\min_{a}\norm{\left(\Phi D\right)a-\phi\left(x\right)}_{\mc{H}}:D\in\mc{D}\right\}$, it is enough to obtain an $\e/\left(\lambda\gamma\right)$ cover of $\left\{\Phi D:D\in \mc{D}\right\}$. If also $\phi$ is uniformly $L$ H\"older of order $\alpha$ over $\mc{R}$ then an $\left(\lambda\gamma L/\e\right)^{-1/\alpha}$ cover of the set of dictionaries is sufficient, which as we have seen requires at most $\left(C\left(\lambda\gamma L/\e\right)^{1/\alpha}\right)^{np}$ elements.

In the case of $l_0$ constrained representation, the bound on $\lambda$ due to Proposition~\ref{prop:bddCoefficients} is $\gamma k\left(1-\delta \right)$, and the result follows from the above by substitution.
\end{proof}

\section{Conclusions}

Our work has several implications on the design of dictionary learning
algorithms as used in signal, image, and natural language processing.  
First, the fact that generalization is only logarithmically dependent on the
 $l_1$ norm of the coefficient vector widens the set of applicable
approaches to penalization.
Second, in the particular case of $k$ sparse representation, we have shown that the Babel 
function is a key property
for the generalization of dictionaries. It might thus be useful to
modify dictionary learning algorithms so that they obtain dictionaries
with low Babel functions, possibly through regularization 
or through certain convex relaxations. Third, mistake bounds (e.g., \citealt{Mairal10})
on the quality of the solution to the coefficient finding optimization problem may lead to 
 generalization bounds for practical algorithms, by tying such algorithms to $k$ sparse representation.

The upper bounds presented here invite complementary lower bounds. The existing lower bounds for $k=1$ (vector quantization) and for $k=p$ (representation using PCA directions) are applicable, but do not capture the geometry of general $k$ sparse representation, and in particular do not clarify the effective dimension of the unrestricted class of dictionaries for it. We have not excluded the possibility that the class of unrestricted dictionaries has the same dimension as that of those with small Babel function. The best upper bound we know for the larger class, being the
trivial one of order $O\left(\binom{p}{k}n^2\right/m)$, leaves a 
significant gap for future exploration.

We mention also that the dependence on $\mu_{k-1}$ can also
be viewed from an ``algorithmic luckiness" perspective
\citep{Herbrich03Luckiness}: if the dictionary has favorable geometry
in the sense that the Babel function is small the generalization
bounds are encouraging. 

\section*{Acknowledgments}
We thank Shahar Mendelson for helpful discussions.
This research was partly supported by the European Community’s FP7-FET program, SMALL project, under grant agreement no. 225913.


\vskip 0.2in 

\bibliography{general}

\appendix

\section*{Appendix A: generalization with fast rates}\label{sec:fastrates}

In this appendix we justify some adaptations in Lemma~\ref{lem:gen} relative to its origins in~\cite{BBMlocalizedrademacher05}. Specifically, we assume only growth rates instead of combinatorial dimensions and give explicit constants for this case (no particular effort was made to make 
the constants tight). 

Following are some concepts and general results needed to prove Lemma~\ref{lem:gen} beyond those introduced in the main body of paper.

\begin{definition}
Let $F$ be a subset of a vector space $X$, $x\in X$. The star shaped closure of $F$ around $x$ is $$\star\left(F,x\right)=\left\{\lambda f+(1-\lambda)x : f\in F\wedge \lambda \in [0,1]\right\}.$$ 
\end{definition}

\begin{definition}
A function $f:\R_+ \to \R_+$ is called sub-root if it is non negative, non decreasing and if $r\mapsto f(r)/\sqrt(r)$ is non increasing for $r>0$.
\end{definition}

\begin{definition}
Let $\left\{ Z_{i}\right\} _{i=1}^{m}\cup\left\{ \e_{i}\right\} _{i=1}^{m}$
be independent variables, where $\e_{i}$ are uniform over $\left\{-1,1\right\}$,
and $Z_{i}$ are $i.i.d$. The empirical Rademacher average of $\mc F$
is \[
\hat{R}_{m}\left(\FF\right)=\E\left[\sup_{f\in\FF}\frac{1}{m}\sum_{i=1}^{n}\e_{i}f_{i}\left(Z_{i}\right)|Z_{1},\dots,Z_{m}\right].\]
\end{definition}
\begin{lemma}\label{lem:entropyInt}
Let $\hat{R}_{m}\left(\mc F\right)$ be the empirical Rademacher averages
of $\mc F$ on a sample $\left\{ Z_{i}\right\} _{i=1}^{m}$. We have
\[
\hat{R}_{m}\left(\mc F\right)\le12\int_{0}^{\infty}\sqrt{\frac{\log N\left(\e,\FF,L_{2}\left(\nu_{m}\right)\right)}{m}}d\e,\]
where $\nu_{m}=m^{-1}\sum_{i=1}^{m}\delta_{Z_{i}}$. 
\end{lemma}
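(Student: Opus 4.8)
The plan is to prove this by Dudley's chaining (entropy integral) argument, carried out conditionally on the sample $Z_1,\dots,Z_m$, so that $\nu_m$ and the pseudometric $d(f,g)=\norm{f-g}_{L_2(\nu_m)}$ are fixed throughout. The first step is to observe that the centered process $X_f=\frac{1}{m}\sum_{i=1}^m \e_i f(Z_i)$ is sub-Gaussian with respect to $d$: conditionally on the sample, each increment $X_f-X_g=\frac{1}{m}\sum_{i=1}^m \e_i\bigl(f(Z_i)-g(Z_i)\bigr)$ is a weighted sum of independent Rademacher signs, so Hoeffding's inequality yields $\E\exp\bigl(t(X_f-X_g)\bigr)\le\exp\bigl(t^2 d(f,g)^2/(2m)\bigr)$, i.e. variance proxy $d(f,g)^2/m$. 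Here I would note that $\FF$, being a class of bounded functions evaluated on a finite sample, is totally bounded in $d$, which is what makes the entropy integral finite and the chaining converge.

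Next I would set up a dyadic chain. Let $D$ be the $d$-diameter of $\FF$ and put $\e_j=2^{-j}D$. For each $j$ let $\FF_j$ be a minimal $\e_j$-cover of $\FF$, with $\FF_0=\{f_0\}$ a single point, and let $\pi_j:\FF\to\FF_j$ send each $f$ to a nearest cover element, so $d(\pi_j(f),f)\le\e_j\to 0$ and hence $X_{\pi_j(f)}\to X_f$. Since $\E X_{f_0}=0$ and $X_{f_0}$ is the same for every $f$, I have $\E\sup_f X_f=\E\sup_f(X_f-X_{f_0})$, and telescoping along the chain gives $X_f-X_{f_0}=\sum_{j\ge 1}\bigl(X_{\pi_j(f)}-X_{\pi_{j-1}(f)}\bigr)$, so that $\sup_f(X_f-X_{f_0})\le\sum_{j\ge 1}\sup_f\bigl(X_{\pi_j(f)}-X_{\pi_{j-1}(f)}\bigr)$.

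At level $j$ there are at most $|\FF_j|\cdot|\FF_{j-1}|\le N_j^2$ distinct increments (writing $N_j=N(\e_j,\FF,d)$), each between points at distance $\le\e_j+\e_{j-1}=3\e_j$ and hence sub-Gaussian with variance proxy $\le 9\e_j^2/m$. The maximal inequality $\E\max_{i\le K}Y_i\le\sigma\sqrt{2\log K}$ for mean-zero $\sigma$-sub-Gaussian variables then gives $\E\sup_f(X_{\pi_j(f)}-X_{\pi_{j-1}(f)})\le(3\e_j/\sqrt m)\sqrt{2\log N_j^2}=6\e_j\sqrt{\log N_j}/\sqrt m$. Summing over $j$ and comparing with the integral — using that $N(\cdot)$ is non-increasing, so $\e_{j+1}\sqrt{\log N_j}\le\int_{\e_{j+1}}^{\e_j}\sqrt{\log N(\e)}\,d\e$ together with $\e_{j+1}=\e_j/2$ — bounds $\sum_j\e_j\sqrt{\log N_j}$ by $2\int_0^\infty\sqrt{\log N(\e,\FF,d)}\,d\e$, and pulling the $1/\sqrt m$ inside the square root produces exactly $12\int_0^\infty\sqrt{\log N(\e,\FF,L_2(\nu_m))/m}\,d\e$.

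The routine parts are Hoeffding's bound and the total-boundedness needed for the telescoping limit, both immediate from boundedness of $\FF$. The delicate part, and the one I would be most careful about, is tracking the numerical constant $12$: it is assembled from the factor $3$ in the distance estimate $\e_j+\e_{j-1}$, the factor $2$ coming from $\sqrt{2\log N_j^2}=2\sqrt{\log N_j}$ in the maximal inequality, and the further factor $2$ lost when passing from the dyadic sum to the entropy integral, since $3\cdot 2\cdot 2=12$. Keeping these three contributions aligned is the only real obstacle; everything else is bookkeeping.
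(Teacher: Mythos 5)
Your proof is correct: the sub-Gaussian increment bound with variance proxy $d(f,g)^2/m$, the dyadic chaining with the $3\e_j$ distance estimate, the $\sqrt{2\log N_j^2}$ maximal inequality, and the sum-to-integral comparison all check out and assemble the constant $12$ exactly as you describe. The paper itself gives no proof of this lemma, only a citation to lecture notes, and your argument is precisely the standard Dudley entropy-integral chaining proof that such a reference contains, so there is nothing further to compare.
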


See~\cite{learningNotesKakade08} for a proof.

\begin{lemma}\label{lem:logIntegral}
For any $\gamma\ge e^{\frac{1}{2}}$ and $x\in\left[0,1\right]$,
we have $\int_{0}^{x}\sqrt{\log\left(\gamma/\e\right)}d\e\le2x\sqrt{\log\left(\gamma/x\right)}$. \end{lemma}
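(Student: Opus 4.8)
The plan is to evaluate the integral almost exactly via the substitution $\e=\gamma e^{-t^{2}}$, which collapses the awkward integrand $\sqrt{\log(\gamma/\e)}$ to the linear function $t$, and then to dispose of the single Gaussian tail term that survives. Since the integrand is nonnegative and integrable near $\e=0$ (it blows up only like $\sqrt{\log(1/\e)}$), the substitution is legitimate; under it $\sqrt{\log(\gamma/\e)}=t$, $d\e=-2\gamma t e^{-t^{2}}\,dt$, and the limits $\e=0,x$ become $t=\infty,T$, where I write $T=\sqrt{\log(\gamma/x)}$. This rewrites the left-hand side as $\int_{T}^{\infty}2\gamma t^{2}e^{-t^{2}}\,dt$.

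Next I would integrate by parts with $u=t$ and $dv=2te^{-t^{2}}\,dt$, so that $v=-e^{-t^{2}}$, giving $\int_{T}^{\infty}2\gamma t^{2}e^{-t^{2}}\,dt=\gamma T e^{-T^{2}}+\gamma\int_{T}^{\infty}e^{-t^{2}}\,dt$, the boundary term $-te^{-t^{2}}$ vanishing at infinity. Because $e^{-T^{2}}=e^{-\log(\gamma/x)}=x/\gamma$, the first term equals exactly $xT=x\sqrt{\log(\gamma/x)}$, which is precisely half of the target bound. It therefore remains only to show that the tail term contributes no more than the other half, i.e. that $\gamma\int_{T}^{\infty}e^{-t^{2}}\,dt\le xT$, equivalently $\int_{T}^{\infty}e^{-t^{2}}\,dt\le Te^{-T^{2}}$.

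For this I would use the standard crude tail estimate $\int_{T}^{\infty}e^{-t^{2}}\,dt\le\int_{T}^{\infty}(t/T)e^{-t^{2}}\,dt=\frac{1}{2T}e^{-T^{2}}$, valid since $t\ge T$ on the range of integration, and then invoke the hypotheses: as $x\le 1$ and $\gamma\ge e^{1/2}$ we get $T=\sqrt{\log(\gamma/x)}\ge\sqrt{\log\gamma}\ge 1/\sqrt{2}$, hence $\frac{1}{2T}\le T$ and the tail is bounded by $Te^{-T^{2}}$. Adding the two halves yields $\int_{0}^{x}\sqrt{\log(\gamma/\e)}\,d\e\le 2x\sqrt{\log(\gamma/x)}$, uniformly over $x\in[0,1]$.

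The computation is routine once the substitution is in place; the only delicate point is the calibration of constants, since the hypothesis $\gamma\ge e^{1/2}$ (together with $x\le 1$) is exactly what forces $T^{2}\ge 1/2$, which is in turn precisely the condition making the crude tail bound $\frac{1}{2T}e^{-T^{2}}$ itself dominated by $Te^{-T^{2}}$. A purely real-variable alternative avoids the substitution by setting $g(x)=2x\sqrt{\log(\gamma/x)}-\int_{0}^{x}\sqrt{\log(\gamma/\e)}\,d\e$, observing $g(0)=0$ and computing $g'(x)=s-1/s$ with $s=\sqrt{\log(\gamma/x)}$; this is nonnegative whenever $\gamma/x\ge e$, and the residual regime $e^{1/2}\le\gamma/x<e$ (which can occur only near $x=1$) is closed off by the same Gaussian tail estimate applied at $x=1$. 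I would present the substitution version, as it settles all $x\in[0,1]$ at once.
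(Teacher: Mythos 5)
Your proof is correct and follows essentially the same route as the paper's: both reduce the integral to the main term $x\sqrt{\log(\gamma/x)}$ plus a Gaussian tail, bound that tail by $\frac{x}{2\sqrt{\log(\gamma/x)}}$ (your estimate $\int_{T}^{\infty}e^{-t^{2}}dt\le\frac{1}{2T}e^{-T^{2}}$ is exactly the $Q$-function bound the paper invokes), and then use $\gamma/x\ge e^{1/2}$ to get $T^{2}\ge 1/2$. The only difference is presentational: you derive the decomposition by substitution and integration by parts rather than by quoting the $\mathrm{erf}$ antiderivative, which makes the argument a bit more self-contained.
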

\begin{proof}
The indefinite integral $\int_{0}^{x}\sqrt{\log\frac{\gamma}{\e}}d\e$
is $x\sqrt{\log\frac{\gamma}{x}}-\frac{\sqrt{\pi}}{2}\gamma\cdot\mbox{erf}\left(\sqrt{\log\frac{\gamma}{x}}\right)$,
where $\mbox{erf}(t)=2/\sqrt{\pi}\int_0^te^{-u^2}du$. Then  
\begin{align*}
\int_{0}^{x}\sqrt{\log\frac{\gamma}{\e}}d\e & =\left[x\sqrt{\log\frac{\gamma}{x}}-\frac{\sqrt{\pi}}{2}\gamma\mbox{erf}\left(\sqrt{\log\frac{\gamma}{x}}\right)\right]_{0}^{x}\\
& =x\sqrt{\log\frac{\gamma}{x}}-\frac{\sqrt{\pi}}{2}\gamma\mbox{erf}\left(\sqrt{\log\frac{\gamma}{x}}\right)-\lixz\left(x\sqrt{\log\frac{\gamma}{x}}-\frac{\sqrt{\pi}}{2}\gamma\mbox{erf}\left(\sqrt{\log\frac{\gamma}{x}}\right)\right)\\
& =x\sqrt{\log\frac{\gamma}{x}}-\frac{\sqrt{\pi}}{2}\gamma\left(\mbox{erf}\left(\sqrt{\log\frac{\gamma}{x}}\right)-\mbox{erf}\left(\infty\right)\right).\end{align*}

The error function $\mbox{erf}$ is related to the tail probability of a normal variable, also known as the $Q$ function. In particular,  $Q\left(x\right)=\frac{1}{2}\left(1-\mbox{erf}\left(\frac{x}{\sqrt{2}}\right)\right)\iff2Q\left(x\right)=1-\mbox{erf}\left(\frac{x}{\sqrt{2}}\right)\iff2Q\left(\sqrt{2}x\right)=1-\mbox{erf}\left(x\right)$. We thus substitute and then use the bound $Q(x)< e^{-x^2/2}/\left(x\sqrt{2\pi}\right)$: 

\begin{align*}
x\sqrt{\log\frac{\gamma}{x}}-\frac{\sqrt{\pi}}{2}\gamma\left(\mbox{erf}\left(\sqrt{\log\frac{\gamma}{x}}\right)-1\right) & =x\sqrt{\log\frac{\gamma}{x}}-\frac{\sqrt{\pi}}{2}\gamma\left(-2Q\left(\sqrt{2\log\frac{\gamma}{x}}\right)\right)\\
 & =x\sqrt{\log\frac{\gamma}{x}}+\sqrt{\pi}\gamma Q\left(\sqrt{2\log\frac{\gamma}{x}}\right)\\
 & \left(\mbox{Bound on }Q\right)\\
 & <x\sqrt{\log\frac{\gamma}{x}}+\sqrt{\pi}\gamma\frac{1}{\sqrt{2\log\frac{\gamma}{x}}\sqrt{2\pi}}e^{-\frac{\left(\sqrt{2\log\frac{\gamma}{x}}\right)^{2}}{2}}\\
 & =x\sqrt{\log\frac{\gamma}{x}}+x\frac{1}{2\sqrt{\log\frac{\gamma}{x}}}\\
 & =x\left(\sqrt{\log\frac{\gamma}{x}}+\frac{1}{2\sqrt{\log\frac{\gamma}{x}}}\right)\end{align*}
By our assumptions, $\frac{\gamma}{x}\ge\gamma\ge e^{1/2}\iff\sqrt{\log\frac{\gamma}{x}}>\sqrt{\frac{1}{2}}\iff\frac{1}{2\sqrt{\log\frac{\gamma}{x}}}<\sqrt{\frac{1}{2}}$,
then $x\left(\sqrt{\log\frac{\gamma}{x}}+\frac{1}{2\sqrt{\log\frac{\gamma}{x}}}\right)\le2x\sqrt{\log\frac{\gamma}{x}}$,
completing the proof.
\end{proof}

We return to prove Lemma~\ref{lem:gen}.

\begin{proof}
The core of the proof is to define particular sub-root function, and show its fixed point decays as  $1/m$. We then apply Theorem 3.3 of Bartlett, Bousquet and Mendelson~\cite{BBMlocalizedrademacher05} to this sub-root function to complete the proof.

We define the function
 $$\psi\left(r\right)=10\E R_{m}\left\{ f\in\star\left(\FF,0\right)|Ef^{2}\le r\right\} +\frac{11\log m}{m}.$$
By Lemma 3.4 of~\cite{BBMlocalizedrademacher05} (with $Tf=Ef^{2}$ and $\hat{f}=0$) and Lemma
3.2~\cite{BBMlocalizedrademacher05}, this function is sub-root and thus has a unique fixed point,
which we denote $r^{*}$, and $r<r^{*}\iff r<\psi\left(r\right)$.

To upper bound $r^*$ we first construct an upper bound on $\psi$, in which $Ef^2$ is replaced by $E_mf^2$, valid for $r\ge r^*$. The expectation of this upper bound is controlled using an entropy integral.

We make two observations.

\begin{enumerate} 

\item By Corollary 2.2 of~\cite{BBMlocalizedrademacher05}, with $b=1$, for $r>\psi(r)$ with probability at least $1-1/m$,
\[
\left\{ f\in\star\left(\FF,0\right):Ef^{2}\le r\right\} \subset\left\{ f\in\star\left(F,0\right):E_{m}f^{2}\le2r\right\} .
\]

\item By assumption $\left(\forall f\in\FF\right)\norm f_{L_{\infty}}\le1$
and this implies $R_{m}\left\{ f\in\star\left(\FF,0\right):Ef^{2}\le r\right\} \le1$.
\end{enumerate}
Combining the observations, we can bound $$\E R_{m}\left\{ f\in\star\left(\FF,0\right):Ef^{2}\le r\right\} \le\frac{1}{m}+\E R_{m}\left\{ f\in\star\left(F,0\right):E_{m}f^{2}\le2r\right\}.$$

Then $\psi\left(r\right)\le10\left(\frac{1}{m}+\E R_{m}\left\{ f\in\star\left(\FF,0\right):E_{m}f^{2}\le2r\right\} \right)+\frac{11\log\left(m\right)}{m}$,
and in particular
 $$r^{*}=\psi\left(r^{*}\right)\le10\left(\frac{1}{m}+\E R_{m}\left\{ f\in\star\left(\FF,0\right):E_{m}f^{2}\le2r^*\right\} \right)+\frac{11\log\left(m\right)}{m}.$$

We denote $\nu_{m}$ the empirical measure induced by the $m$ samples (whose expectation is $E_{m}$).
Under the metric $L_{2}\left(\nu_{m}\right)$, the set $\left\{ f\in\star\left(\FF,0\right):E_{m}f^{2}\le2r\right\} $ is covered by a single ball of radius $\sqrt{2r}$ around the zero function. Applying 
Lemma~\ref{lem:entropyInt}, we have
 \begin{align*}
\hat{R}_{m}\left\{ f\in\star\left(\FF,0\right):E_{m}f^{2}\le2r\right\}  & \le12\int_{0}^{\infty}\sqrt{\frac{\log N\left(\e,\star\left(\FF,0\right),L_{2}\left(\nu_{m}\right)\right)}{m}}d\e\\
& =12\int_{0}^{\sqrt{2r}}\sqrt{\frac{\log N\left(\e,\star\left(\FF,0\right),L_{2}\left(\nu_{m}\right)\right)}{m}}d\e.\end{align*}

Since $\left(\forall f\in\FF\right)\norm f_{L_{2}\left(\nu_{m}\right)}\le1$,
an $\e$ cover of $\mc F$ can be converted into an $\e$ cover of
$\star\left(\FF,0\right)$ by replacing each element by $1/\e+1$
balls on the segment from it to $0$. Then $N\left(\e,\star\left(\FF,0\right),L_{2}\left(\nu_{m}\right)\right)\le 2N\left(\e,\FF,L_{2}\left(\nu_{m}\right)\right)/\e$  $(a)$.
Using also the assumption on $C$ $(b)$ and Lemma~\ref{lem:logIntegral} $(c)$, we find 
\begin{align*}
12\int_{0}^{\sqrt{2r}}\sqrt{\frac{\log N\left(\e,\star\left(\FF,0\right),L_{2}\left(\nu_{m}\right)\right)}{m}}d\e & \stackrel{(a)}{\le}12\int_{0}^{\sqrt{2r}}\sqrt{\frac{\log\left(\left(\frac{C}{\e}\right)^{d}\frac{2}{\e}\right)}{m}}d\e\\
& \stackrel{(b)}{\le}12\sqrt{\frac{d+1}{m}}\int_{0}^{\sqrt{2r}}\sqrt{\log\left(\frac{C}{\e}\right)}d\e\\
& \stackrel{(c)}{\le}24\sqrt{\frac{2r\left(d+1\right)\log\left(\frac{C}{\sqrt{2r}}\right)}{m}}.\end{align*}

Substituting into $\psi\left(r^{*}\right)$, we obtain that 
\begin{align*}
r^{*}  \le & 10\left(\frac{1}{m}+24\sqrt{\frac{2r^{*}\left(d+1\right)\log\left(\frac{C}{\sqrt{2r^{*}}}\right)}{m}}\right)+\frac{11\log\left(m\right)}{m}\\ 
  = & 240\sqrt{\frac{2r^{*}\left(d+1\right)\log\left(\frac{C}{\sqrt{2r^{*}}}\right)}{m}}+\frac{10+11\log\left(m\right)}{m}.
\end{align*}

Let $\alpha>0$ be fixed. If $r^{*}\le\alpha C^{2}/2m$,
our first step is complete. If not, then $$r^{*}>\alpha C^{2}/2m\iff\sqrt{m/\alpha}>C/\sqrt{2r^{*}},$$
and then
 \begin{align*}
r^{*} & \le240\sqrt{\frac{2r^{*}\left(d+1\right)\log\left(\sqrt{\frac{m}{\alpha}}\right)}{m}}+\frac{10+11\log\left(m\right)}{m}\\
& =240\sqrt{\frac{r^{*}\left(d+1\right)\log\left(\frac{m}{\alpha}\right)}{m}}+\frac{10+11\log\left(m\right)}{m}\\
& \le2\max\left\{ 240\sqrt{\frac{r^{*}\left(d+1\right)\log\left(\frac{m}{\alpha}\right)}{m}},\frac{10+11\log\left(m\right)}{m}\right\} .\end{align*}
Then either $r^{*}\le\left(20+22\log\left(m\right)\right)/m$ (and the
first step is complete), or $$r^{*}\le480\sqrt{\left(r^{*}\left(d+1\right)\log\left(m/\alpha\right)\right)/m}\iff r^{*}\le\left(480\right)^{2}\left(\left(d+1\right)\log\left(m/\alpha\right)\right)/m$$
and again we are done. We conclude that 
\[
r^{*}\le\max\left\{ \frac{\alpha C^{2}}{2m},\left(480\right)^{2}\frac{\left(d+1\right)\log\left(\frac{m}{\alpha}\right)}{m},\frac{20+22\log\left(m\right)}{m}\right\}. \]

Having proved $r^*$ decays approximately as $1/m$, we apply Theorem 3.3 of ~\cite{BBMlocalizedrademacher05}, with
 $a=0;b=1;B=1;Tf=Ef^{2}$. 
By definition of $T$, it is clear that 
\begin{align*}
\psi\left(r\right) & =10\E R_{m}\left\{ f\in\star\left(\FF,0\right)|Ef^{2}\le r\right\} +\frac{11\log m}{m}\\
& \ge\E R_{m}\left\{ f\in\star\left(\FF,0\right)|Ef^{2}\le r\right\} \\
& =\E R_{m}\left\{ f\in\star\left(\FF,0\right)|Tf\le r\right\} \end{align*}
holds, then we can use part 2 of Theorem 3.3 of~\cite{BBMlocalizedrademacher05}, which allows the
conclusion that for all $f\in\mc F$, $Ef\le\frac{K}{K-1}E_{m}f+6Kr^{*}+\frac{11x+5K}{m}$.
\end{proof}

\end{document}